\numberwithin{equation}{section}
\DeclareMathOperator*{\argmin}{arg\!\min}
\DeclareMathOperator*{\vspan}{span}
\DeclareMathOperator*{\tr}{tr}
\DeclareMathOperator*{\diag}{diag}
\DeclareMathOperator*{\ran}{ran}
\DeclareMathOperator*{\kl}{KL}
\theoremstyle{definition}
\newtheorem{definition}{Definition}[section]
\theoremstyle{remark}
\newtheorem{remark}[definition]{Remark}
\theoremstyle{plain}
\newtheorem{lemma}[definition]{Lemma}
\newtheorem{corollary}[definition]{Corollary}
\newtheorem{theorem}[definition]{Theorem}
\newtheorem{proposition}[definition]{Proposition}
\begin{document}
\title{Contraction rates for conjugate gradient and Lanczos approximate
       posteriors in Gaussian process regression
      }
\renewcommand\Affilfont{\itshape\small}
\author[1]{Bernhard Stankewitz}
\affil[1]{Department of Decision Sciences, Bocconi University,
          Bocconi Institute for Data Science and Analytics (BIDSA), 
          \normalfont \texttt{bernhard.stankewitz@unibocconi.it}
         }
\author[2]{Botond Szab{\'o}}
\affil[2]{Department of Decision Sciences, Bocconi University,
          Bocconi Institute for Data Science and Analytics (BIDSA), 
          \normalfont \texttt{botond.szabo@unibocconi.it}
         }
\date{}
\maketitle
\abstract{ 
Due to their flexibility and theoretical tractability Gaussian process (GP)
regression models have become a central topic in modern statistics and machine
learning.
While the true posterior in these models is given explicitly, numerical
evaluations depend on the inversion of the augmented kernel matrix 
\( K + \sigma^2 I \), which requires up to \( O(n^3) \) operations.
For large sample sizes n, which are typically given in modern applications,
this is computationally infeasible and necessitates the use of an approximate
version of the posterior.
Although such methods are widely used in practice, they typically have very
limtied theoretical underpinning.

In this context, we analyze a class of recently proposed approximation
algorithms from the field of Probabilistic numerics.
They can be interpreted in terms of Lanczos approximate eigenvectors of the
kernel matrix or a conjugate gradient approximation of the posterior mean,
which are particularly advantageous in truly large scale applications, as they
are fundamentally only based on matrix vector multiplications amenable to the
GPU acceleration of modern software frameworks.
We combine result from the numerical analysis literature with state of the art
concentration results for spectra of kernel matrices to obtain minimax
contraction rates.
Our theoretical findings are illustrated by numerical experiments.
}

\section{Introduction}
\label{sec_Introduction}

Due to their flexibility in capturing complex patterns in data without assuming
a specific functional form and their capacity to directly model the uncertainty
of estimates, Gaussian process (GP) have become a mainstay of modern Statistics
and Machine Learning, see e.g. \cite{RasmussenWilliams2006GPForML}.  
Formally, in the classical GP regression model one observes the data 
$ (X_1,Y_1),...(X_n,Y_n) $ satisfying that
\begin{align}
  \label{eq_I_GPRegressionModel}
  Y_{i} = f( X_{i} ) + \varepsilon_{i}, \qquad i = 1, \dots, n,
\end{align}
where \( X: = ( X_{i} )_{ i \le n } \) are i.i.d. random design points from a
subset \( \mathfrak{X} \subset \mathbb{R}^{d} \) forming a Polish space with
distribution \( G \), the noise vector \( \varepsilon: = ( \varepsilon_{i} )_{ i
\le n } \) is an \( n \)-dimensional standard Gaussian independent from the
design, with
variance \( \sigma^{2} \) and the prior knowledge about the unknown function 
\( f: \mathfrak{X} \to \mathbb{R} \) is modeled by a centered Gaussian Process
\( F( x )_{ x \in \mathfrak{ X } } \) defined by a covariance kernel 
\( k: \mathfrak{X}^{2} \to \mathbb{R} \). 
We assume that \( k \) is sufficiently regular such that \( F \) is also Gaussian 
random variable in \( F \in L^{2}(G) \), see, e.g., Section 11.1 in
\cite{GhosalvdVaart2017FundamentalsOfBayes}. 
The corresponding posterior \( \Pi_{n}( \cdot | X, Y ) \) is then also a GP and
can be computed explicitly with mean and covariance functions given by
\begin{align}
  \label{eq_I_TruePosterior}
  x & \mapsto k( X, x )^{ \top } ( K + \sigma^{2} I_{n} )^{-1} Y,
  \\
  ( x, x' ) & \mapsto k(x, x') 
                    - k(X, x )^{ \top } ( K + \sigma^{2} I_{n})^{-1}
                      k(X, x'), 
  \notag
\end{align}
respectively, where 
\( K: = k( X_{i}, X_{j} )_{i, j \le n} \in \mathbb{R}^{n \times n} \) 
is the empirical kernel matrix and
\( k( X, x ): = ( k( X_{i}, x ) )_{i \le n} \in \mathbb{R}^{n} \) 
is the vector valued evaluation of the function \( k( \cdot, x ) \) at the
design points, see e.g. \cite{RasmussenWilliams2006GPForML}.  
Note that the analytic form of the posterior given in \eqref{eq_I_TruePosterior}
involves the inversion of the matrix $K+\sigma^2 I_n\in\mathbb{R}^{n\times n}$,
which has computational complexity $O(n^3)$ and memory requirement $O(n^2)$.
Therefore, in large scale applications, exact Bayesian inference based on GPs
become computationally intractable.

From the side of computational theory, a large body of research has been
dedicated to overcoming this computational bottleneck by developing
approximations of the posterior in Equation \eqref{eq_I_TruePosterior} that
reduce its computational complexity.
Popular methods based on probabilistic considerations include 
variational Bayes posteriors 
\cite{Titsias2009InducingVariablesVB,
      Titsias2009InducingVariablesVBTechnicalReport}, 
Vecchia approximations 
\cite{DattaEtal2016Hierarchical,FinleyEtal2019Efficient,
      KatzfussEtal2020VecchiaOfGPPredictions},
distributed GPs
\cite{Tresp2000BayesianCommittee,RasmussenGhahramani2002GPExperts,
      ParkHuang2016LargeSpatialGPs,KimEtal2005PiecewiseGPs,
      GuhaniyogiEtal2022Distributed,DeisenrothNG2015DistributedGPs}
and banding of the kernel or precision matrix
\cite{DurrandeEtal2019BandedMatrixOperators,YuEtal2021GPDiscriminantAnalysis}.

Recently, there has been a particular emphasis on iterative methods from
classical numerical analysis such as conjugate gradient (CG) and Lanczos
aprooximations of the posterior in Equation \eqref{eq_I_TruePosterior}, see
\cite{PleissEtal2018ConstantTimePredictiveDistributions,WangEtal2019ExactGPs}.
Compared to some of the other methods, prima facie, these algorithms do not have
probabilistic interpretations.
They are, however, particulary advantageous in truly large scale applications.
Specifically, they are solely based on matrix vector multiplications.
Beyond reducing the computational complexity of the algorithm,
this additionally makes them amenable to the GPU acceleration in modern software
frameworks, see, e.g., \texttt{GPyTorch} \cite{GardnerEtal2018GPyTorch}.

Although some form of approximation is indispensable from a computational
perspective, from the perspective of statistical inference, most of the
procedures above  have only limited theoretical underpinning.
Only recently, approximation techniques have been started to be investigated
from a frequentist perspective and contraction rate guarantees were derived for
the approximate posteriors, see for instance
\cite{SzabovZanten2019Distributed,GuhaniyogiEtal2019Divide,
      SzaboEtal2023SpatialAdaptation} 
for distributed GPs and \cite{NiemanEtal2022ContractionRates} for variational
GP methods.
In particular, for the aforementioned procedures from classical numerics, no
frequentist results are known so far, which largely has been due to a lack of
probabilistic interpretations of the resulting approximate posteriors in
these methods and the difficulty of rigorously understanding the spectral
properties of the posterior covariance kernel, which in turn depends on the
random design points.

In this work, we derive frequentist contraction rates for a class of algorithms
from probabilistic numerics called computation aware GPs proposed in
\cite{WengerEtal2022ComputationalUncertainty}, which are based on Bayesian
updates conditional on the data vector \( Y \) projected into different
directions.
Specifically, we cover the instances of their algorithm in which the projection
directions are given by the empirical eigenvectors of the kernel matrix, their
approximations via the Lanczos algorithm and the directions provided by solving
\( K_{ \sigma } w = Y \) via CG.
The last two versions of the algorithm represent probabilistic interpretations
for CG and Lanczos approximations of the posterior.
We then combine bounds from classical numerics with only recently developed
precise spectral concentration results for kernel matrices from
\cite{JirakWahl2023RelativePerturbationBounds} to derive contraction rates.
The results provide a guide for the required number of iterations in the
algorithms to achieve optimal inference for the functional parameter of
interest.
The theoretical thresholds were confirmed in our numerical analysis on synthetic
data sets for different choices of GPs.
Our theory directly applies to the implementation of the CG approximate
posterior implemented in \texttt{GPyTorch} \cite{GardnerEtal2018GPyTorch}, where
the covariance estimation via the Lanczos variance estimation from
\cite{PleissEtal2018ConstantTimePredictiveDistributions} is replaced by a
computationally lighter method based on the conjugate gradient directions
directly, proposed in \cite{WengerEtal2022ComputationalUncertainty}.

Beyond deriving frequentist contraction rate guarantees, we also establish
previously unexplored connections between various numerical algorithms.  
In particular, we provide new probabilistic context for the conjugate gradient
approximation of the posterior and formally derive its connection to the Lanczos
algorithm and a  particular variational Bayes approach proposed in
\cite{Titsias2009InducingVariablesVB}.
These connections can be used to further speed up the computation of the
approximate posteriors and put the algorithms in a wider context.
\\

The remainder of the paper is organized as follows.
In Section, \ref{sec_ApproximatePosteriorsFromNrobabilisticNumerics} we recall
the recently proposed, general Bayesian updating algorithm from probabilistic
numerics and consider specific examples including Lanczos iteration and
conjugate gradient descent.
In Section \ref{sec_ApproximatePosteriorContraction}, we discuss the frequentist
analysis of (approximate) posterior distributions, which provides the context
for our main contraction rate result for numerical algorithms in Section
\ref{sec_MainResults}.
We apply this general theorem for several standard examples in Section
\ref{sec_Examples} and compare the numerical performance of the considered
algorithms over synthetic data sets in Section \ref{sec_NumericalSimulations}.
The fundamental ideas of the proofs are highlighted in Section
\ref{sec_TechnicalAnalysis} while the rigorous proof of our main contraction
result is given in Appendix \ref{sec_ProofsOfMainResults}.
Auxiliary lemmas, the proofs covering the examples and additional technical
lemmas are deferred to Appendix \ref{sec_AuxiliaryResults}, Appendix
\ref{sec_ProofsForTheExamples} and Appendix \ref{sec_ComplementaryResults},
respectively.\\

\noindent \textbf{Notation.}
We collect some notational choices, we use in the following.
For two positive sequences \( a_n, b_n \) let us denote by 
\( a_n \lesssim b_n \) if there exists a constant \( C > 0 \) independent of \(
n \) such that \( a_n / b_n \le C \) for all \( n \). 
Furthermore, we denote by \( a_n \asymp b_n \), if \( a_n \lesssim b_n \) and
\( b_n \lesssim a_n \) hold simultaneously.
When needed explicitly, we will also refer to \( c, C > 0 \) for constants
independent of \( n \), which are allowed to change from line to line.

The evaluation \( ( f( X_{1} ), \dots, f( X_{n} ) ) \in \mathbb{R}^{ n } \) of a
function \( f \in L^{2}( G ) \) at the design points will be often be denoted by
\( \mathbf{ f } \), \( \land \) and \( \lor \) denote minimum and maximum
between two numbers respectively and ``\( (\cdot) \)'' will denote an open
function argument.

Finally, for a probability measure \( \mathbb{P} \), in some proofs, the
distribution of a random variable \( X \) under \( \mathbb{P} \) will be denoted 
\( \mathbb{P}^{X} \).
Similarly, when \( Y \) is another random variable, the conditional
distributions of \( Y | X \) and \( Y | X = x \) will be denoted as 
\( \mathbb{P}^{ Y | X } \) and \( \mathbb{P}^{ Y | X = x } \) respectively.


\section{Approximate posteriors from probabilistic numerics}
\label{sec_ApproximatePosteriorsFromNrobabilisticNumerics}

The true posterior mean function
\begin{align}
  x \mapsto k( X, x )^{ \top } K_{ \sigma }^{-1} Y,
  \qquad x \in \mathfrak{X},
\end{align}
in \eqref{eq_I_TruePosterior} is a linear combination of the functions
\(  k( X_{i}, \cdot ), i \le n \).
The weights \( K_{ \sigma }^{-1} Y \) in the linear combination above,
however, are computationally inaccessible for large sample sizes.

In this work, we focus on a class of approximation algorithms for the posterior
from probabilistic numerics, called computation aware GPs, proposed in
\cite{WengerEtal2022ComputationalUncertainty}.
They can be interpreted as iteratively learning the \emph{representer weights}
\( K_{ \sigma }^{-1} Y \) of the posterior mean function by iteratively solving
the equation
\begin{align}
  Y = K_{ \sigma } w, \qquad w \in \mathbb{R}^{n},
\end{align}
in a computationally efficient manner.

\begin{remark}[Representer weights]
  \label{rem_RepresenterWeights}
  Recall, that the posterior mean function is the solution of the kernel
  regression problem
  \begin{align}
    \label{eq_APPN_KernelRidgeRegression}
    \min_{ f \in \mathbb{H} } 
    \| Y - f \|_{n}^{2} + \sigma^{2} \| f \|_{ \mathbb{H} }^{2}, 
  \end{align}
  where \( \mathbb{H} \) is the reproducing kernel Hilbert (RKHS) space induced
  by \( k \), see e.g. Section 6.2 in
  \cite{WilliamsRasmussen1995GPsForRegression}.
  Hence, the name representer weights originates from the Representer Theorem,
  see \cite{KimeldorfWahba1971SplineFunctions} and
  \cite{WilliamsRasmussen1995GPsForRegression},
  stating that in minimization problems, such as in Equation
  \eqref{eq_APPN_KernelRidgeRegression}, the minimizer over the whole space 
  \( \mathbb{H} \) can be represented as a weighted linear combination of the
  features \( k( X_{i}, \cdot ) \).
\end{remark}

An important aspect of the general algorithm proposed in
\cite{WengerEtal2022ComputationalUncertainty} is its probabilistic
interpretation in terms of a Bayesian updating scheme.
We shortly recall this interpretation, which will be a corner stone of our
theoretical analysis.

\subsection{A general class of algorithms}
\label{ssec_AGeneralClassOfAlgorithms}

In \cite{WengerEtal2022ComputationalUncertainty}, the authors regard the
distribution \( N( 0, K_{ \sigma }^{-1} ) \) of the true representer weights 
\( W^{*}: = K_{ \sigma }^{-1} Y \) as a model of our prior knowledge about 
\( W^{*} \).
The mean \( w_{0}: = 0 \) is an initial best guess about \( W^{*} \) and 
\( \Gamma_{0}: = K_{ \sigma }^{-1} \) represents our excess
uncertainty owed to the existing computational constraints.
Then, our guess and the corresponding uncertainty are iteratively updated via
the following algorithm. 

For given believes \( W^{*} \sim N( w_{ m - 1 }, \Gamma_{ m - 1 }) \),
\( m \in \mathbb{N} \), the Bayesian update at the \( m \)-th step is computed
based on the following computationally accessible information.
Let us compute first the \textit{observation via information operator}
\( \alpha_{m} \), which is the inner product of the \( m \)-th policy or search
direction \( s_{m} \in \mathbb{R}^{n} \) (provided by the user) and the
predictive residual \( Y - K_{ \sigma } w_{ m - 1 } \in \mathbb{R}^n \), i.e.,
\begin{align}
      \alpha_{m}:
  =
      s_{m}^{ \top } ( Y - K_{ \sigma } w_{ m - 1 } )
  = 
      s_{m}^{ \top } K_{ \sigma } ( K_{ \sigma }^{-1} Y - w_{ m - 1 } ).
\end{align}
Then, Lemma \ref{lem_MultivariateConditionalGaussians} below yields inductively
that \( W^{*} | \alpha_{m} \sim N( w_{m}, \Gamma_{m} ) \) with 
\begin{align}
  \label{eq_APPN_BayesianUpdateMean}
      w_{m} 
  & = 
      w_{ m - 1 } 
    + 
      \underbrace{ \Gamma_{ m - 1 } K_{ \sigma } s_{m} }_{ =: d_{m} }
      (
        \underbrace{ s_{m}^{ \top } K_{ \sigma } \Gamma_{ m - 1 } K_{ \sigma } 
                     s_{m}
                   }
                  _{ =: \eta_{m} }
      )^{-1}  
      \alpha_{m}
  = 
      w_{ m - 1 } + \eta_{m}^{-1} d_{m} d_{m}^{ \top } Y
  = 
      C_{m} Y,
  \\
  \label{eq_APPN_BayesianUpdateCov}
      \Gamma_{m} 
  & =
      \Gamma_{ m - 1 } - 
      \underbrace{ \Gamma_{ m - 1 } K_{ \sigma } s_{m} }_{ =: d_{m} }
      (
        \underbrace{ s_{m}^{ \top } K_{ \sigma } \Gamma_{ m - 1 } K_{ \sigma }
                     s_{m}
                   }
                  _{ =: \eta_{m} }
      )^{-1}  
      ( \underbrace{ \Gamma_{ m - 1 } K_{ \sigma } s_{m} }_{ =: d_{m} }
      )^{ \top }
  = 
      K_{ \sigma }^{-1} - C_{m},
\end{align}
and \( C_{m}: = \sum_{ j = 1 }^{m} \eta_{j}^{-1} d_{j} d_{j}^{ \top } \).
Here \( C_m \) denotes the \textit{precision matrix approximation}, \( d_m \)
the \textit{search direction}, and \( \eta_m \) is the \textit{normalization
constant} after \( m \) iterations.

Note that \( C_{m} K_{ \sigma } \) is the \( K_{ \sigma } \)-orthogonal
projection onto the \( \vspan \{ s_{j}: j \le m \} \), see Lemma S1 in
\cite{WengerEtal2022ComputationalUncertainty}.
This guarantees that the updating formulas in 
Equations \eqref{eq_APPN_BayesianUpdateMean} 
      and \eqref{eq_APPN_BayesianUpdateCov}
are well defined as long as the policies \( ( s_{j} )_{ j \le m } \) are
linearly independent.
Indeed, then, it follows inductively that
\begin{align}
      \eta_{j} 
  =
      ( ( I - C_{ j - 1 } K_{ \sigma } ) s_{j} )^{ \top }
      K_{ \sigma }
      ( I - C_{ j - 1 } K_{ \sigma } ) s_{j}
  >   0
\end{align}
for all \( j \le m \).
Further, for \( m \to n \), \( C_{m} \) monotonously approaches 
\( K_{ \sigma }^{-1} \).
This implies that updating iteratively improves the estimate \( w_{m} \) of the
representer weights and reduces the excess uncertainty \( \Gamma_{m} \).

Based on the updated believes \( N( w_{m}, \Gamma_{m} ) \) about the representer
weights, a full approximate posterior can be constructed in the following
manner.
Conditional on the true representer weights the prior process \( F \) is
distributed according to
\begin{align}
        F | W^{*} = w^{*}
  \sim
        \text{GP}\big( 
          k( X, \cdot )^{ \top } w^{*}, 
          k( \cdot, \cdot ) 
        -
          k( X, \cdot )^{ \top } K_{ \sigma }^{-1} k( X, \cdot ) 
        \big). 
\end{align}
At a fixed iteration \( m \), integrating out \( F | W^{*} = w^{*} \) against
our current belief \( N( W_{m}, \Gamma_{m} ) \) yields the process 
\( 
  \Psi_{m}( \cdot ): = \mathbb{P}^{ F | W^{*} = w^{*} }( \cdot ) 
                       N( W_{m}, \Gamma_{m} )( d w^{*}) 
\). 
By another application of Lemma \ref{lem_MultivariateConditionalGaussians}, this
is the GP with mean and covariance functions
\begin{align}
  \label{eq_APPN_MathematicalAndComputationalUncertainty}
              x 
  & \mapsto 
              k( X, x )^{ \top } w_{m}
    = 
              k( X, x )^{ \top } C_{m} Y,
  \\
              ( x, x' ) 
  & \mapsto 
              \underbrace{ k( x, x' ) - k( X, x )^{ \top } C_{m} k( X, x' ) }
                        _{ \text{Combinded uncertainty} }
  \notag
  \\
  & = 
              \underbrace{ k( x, x' ) 
                         - k( X, x )^{ \top } K_{ \sigma }^{-1} k( X, x' )
                         }
                        _{\text{Mathematical uncertainty}}
              + 
              \underbrace{ k( X, x )^{ \top } \Gamma_{m} k( X, x' )}
                        _{\text{Computational uncertainty}},
  \notag
\end{align}
respectively.
The process \( \Psi_{m} \) is then an approximation of the true posterior 
\( \Pi( \cdot | X, Y )  \) from Equation \eqref{eq_I_TruePosterior}, in which we
have replaced \( K_{ \sigma }^{-1} \) with \( C_{m} \).
Since \( C_{m} = K_{ \sigma }^{-1} - \Gamma_{m} \), this approximation can be
understood as combining the \emph{mathematical/statistical} uncertainty
of the true posterior with the \emph{computational} uncertainty introduced by
the approximation of \( K_{ \sigma }^{-1} \).
Note that the Bayesian updating scheme guarantees that both \( C_{m} \) and 
\( \Gamma_{m} \) are well defined positive semi-definite covariance matrices.
Furthermore, while individually the terms \( K_{ \sigma }^{-1} \) and 
\( \Gamma_{m} \) are computationally prohibitive, the combined uncertainty
represented by \( C_{m} \) can be evaluated.
We obtain the following iterative algorithm:
\begin{center}
\begin{minipage}{.8\linewidth}
  \begin{algorithm}[H]
  \caption{GP approximation scheme}
  \label{alg_GPApproximation}
  \begin{algorithmic}[1]
    \Procedure{ITERGP\(( k, X, Y ) \) }{}
    \vspace{2px}
    \State  \( C_{0} \gets 0 \in \mathbb{R}^{ n \times n } \),
            \( w_{0} \gets 0 \in \mathbb{R}^{n} \)
    \vspace{2px}
    \For{ \( j = 1, 2, \dots, m \) }
      \State \( s_{j} \gets \text{POLICY()} \) 
      \State \( d_{j} \gets ( I - C_{ j - 1 } K_{ \sigma } ) s_{j} \) 
      \vspace{2px}
      \State \( \eta_{j} \gets s_{j}^{ \top } K_{ \sigma } d_{j} \) 
      \State \( C_{j} \gets C_{ j - 1 } + \eta_{j}^{-1} d_{j} d_{j}^{ \top }   \) 
      \State \( w_{j} \gets w_{ j - 1 } + \eta_{j}^{-1} d_{j} d_{j}^{ \top } Y \) 
    \EndFor
    \State \( \mu_{m}( \cdot ) \gets k( X, \cdot )^{ \top } w_{m} \) 
    \State \( 
             k_{m}( \cdot, \cdot ) \gets k( \cdot, \cdot ) 
           - 
             k( X, \cdot )^{ \top } C_{m} k( X, \cdot ) 
           \) 
    \vspace{2px}
    \EndProcedure
    \State \Return \( \text{GP}( \mu_{m}, k_{m} ) \) 
\end{algorithmic}
\end{algorithm}
\end{minipage}
\end{center}
\vspace{10px}
Since any individual iteration of Algorithm \ref{alg_GPApproximation} only
involves evaluating a fixed number of matrix vector multiplications and
dot-products in \( \mathbb{R}^{n} \), a full run of \( m \) iterations has a
computational complexity of \( O( m n^{2} ) \).


\subsection{Eigenvector, Lanczos and conjugate gradient posteriors}
\label{ssec_EigenvectorLanczosAndConjugateGradientPosteriors}

Several standard approximation methods for Gaussian processes can be cast in
terms of Algorithm \ref{alg_GPApproximation} via an appropriate choice of the
policies \( ( s_{j})_{ j \le m } \).
However, we cannot hope to obtain a good approximation for arbitrary choices of
the policies.
In fact, we provide a toy example in Remark \ref{rem_InconsistencyExample}
below, showing that for certain bad choices even after $m=n-1$ iterations the
approximation can be inconsistent.  
Under mild assumptions, it was shown in
\cite{WengerEtal2022ComputationalUncertainty} that the approximate posterior
converges to the true posterior as  $m$ converges to $n$, see Theorem 1 and
Corollary 1 in the aforementioned paper.
However, taking $m\asymp n$ would not reduce the computational complexity of
the algorithm compared to the original posterior.

In order to move beyond these type of results, to cover the computationally more
interesting $m=o(n)$ case, the policies have to incorporate relevant information
about the kernel matrix \( K \).
In the following, we will focus on three specific instances of Algorithm
\ref{alg_GPApproximation}. 

\subsubsection{Empirical eigenvector posterior}
\label{sssec_EmpiricalEigenvectorPosterior}

Assuming that our policy is informed by the kernel matrix \( K \), a natural
choice of actions \( ( s_{j} )_{ j \le m } \) is based on the singular value
decomposition
\begin{align}
  K = \sum_{ j = 1 }^{n} 
      \widehat{ \mu }_{j} \widehat{u}_{j} \widehat{u}_{j}^{ \top },
\end{align}
where \( \widehat{ \mu }_{1} \ge \dots \widehat{ \mu }_{n} \ge 0 \) are the
ordered eigenvalues of \( K \) and the eigenvectors 
\( ( \widehat{u}_{j})_{ j \le n } \)
form an orthonormal basis of \( \mathbb{R}^{n} \). 
By setting \( s_{j}: = \widehat{u}_{j} \), \( j \le m \), in each step of
Algorithm \ref{alg_GPApproximation}, we project the residuals onto the search
directions that carries the most information about \( K \).
Since \( K_{ \sigma }^{-1} \) can be expressed in terms of the eigenpairs
\( ( \widehat{ \mu }_{j}, \widehat{u}_j )_{ j \le n } \) as 
\( 
  \sum_{ j = 1 }^{n} 
  ( \widehat{ \mu }_{j} + \sigma^{2} )^{-1}
  \widehat{u}_{j} \widehat{u}_{j}^{ \top }
\),
the following lemma provides a matching form of the approximate precision matrix
\( C_{m} \) under this policy.
The resulting empirical eigenvector posterior will be referred to as EVGP in the
following.

\begin{lemma}[EVGP]
  \label{lem_EVGP}
  Given actions \( s_{j}: = \widehat{u}_{j} \), \( j \le m \), in 
  Algorithm \ref{alg_GPApproximation}, the approximate precision matrix
  \( C_{m} = C_{m}^{ \normalfont \text{EV} } \) 
  of \( K_{ \sigma }^{-1} \) is given by
  \begin{align}
        C_{m}^{ \normalfont \text{EV} }
    & = 
        \sum_{ j = 1 }^{m} 
        \frac{1}{ \widehat{ \mu }_{j} + \sigma^{2} } 
        \widehat{u}_{j} \widehat{u}_{j}^{ \top }.
  \end{align}
\end{lemma}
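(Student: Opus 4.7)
The plan is to proceed by induction on \( m \), with inductive hypothesis
\( C_{m-1}^{ \normalfont \text{EV} }
   = \sum_{j=1}^{m-1} ( \widehat{\mu}_j + \sigma^2 )^{-1}
     \widehat{u}_j \widehat{u}_j^{ \top } \).
The base case \( m = 0 \) is immediate from the initialization
\( C_0 \gets 0 \) in Algorithm \ref{alg_GPApproximation}.

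For the inductive step, the key observation is to expand
\( C_{m-1}^{ \normalfont \text{EV} } K_{ \sigma } \) using the spectral
decomposition
\( K_{ \sigma } = \sum_{ j = 1 }^{n} ( \widehat{ \mu }_{j} + \sigma^{2} )
                  \widehat{u}_{j} \widehat{u}_{j}^{ \top } \).
Orthonormality of the eigenbasis collapses the product telescopically to the
spectral projector \( \sum_{ j = 1 }^{ m - 1 } \widehat{u}_j \widehat{u}_j^{
\top } \) onto \( \vspan \{ \widehat{u}_1, \dots, \widehat{u}_{ m - 1 } \} \).
Consequently, with \( s_m = \widehat{u}_m \) in the algorithm,
\begin{align*}
  d_m
  = ( I - C_{ m - 1 }^{ \normalfont \text{EV} } K_{ \sigma } ) \widehat{u}_m
  = \widehat{u}_m,
\end{align*}
since \( \widehat{u}_m \) is orthogonal to
\( \widehat{u}_1, \dots, \widehat{u}_{ m - 1 } \).
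The normalization constant then evaluates to
\( \eta_m = \widehat{u}_m^{ \top } K_{ \sigma } \widehat{u}_m
          = \widehat{ \mu }_m + \sigma^2 \),
and substituting into the update rule
\( C_m^{ \normalfont \text{EV} }
   = C_{ m - 1 }^{ \normalfont \text{EV} }
   + \eta_m^{-1} d_m d_m^{ \top } \)
closes the induction.

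No substantive obstacle arises: the choice of eigenvector policies decouples
the iterations completely, because the eigenbasis diagonalizes \( K_{ \sigma }
\) simultaneously with \( K_{ \sigma }^{-1} \).
Alternatively, one may invoke the interpretation of \( C_m K_{ \sigma } \) as
the \( K_{ \sigma } \)-orthogonal projection onto \( \vspan \{ s_j : j \le m \}
\) recalled after Equation \eqref{eq_APPN_BayesianUpdateCov}: for
\( s_j = \widehat{u}_j \), this span is an invariant subspace of
\( K_{ \sigma } \), so \( K_{ \sigma } \)-orthogonality reduces to Euclidean
orthogonality, and \( C_m^{ \normalfont \text{EV} } \) is forced to coincide
with the truncation of the spectral inverse displayed in the statement.
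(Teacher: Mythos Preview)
Your proof is correct and follows essentially the same inductive argument as the paper: both establish \( d_m = \widehat{u}_m \) and \( \eta_m = \widehat{\mu}_m + \sigma^2 \) using orthonormality of the eigenbasis. Your additional remark on the \( K_\sigma \)-orthogonal projection interpretation is a nice alternative viewpoint not present in the paper's proof.
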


\noindent A short, inductive derivation can be found in Appendix \ref{prf_EVGP}.

Although this version of Algorithm \ref{alg_GPApproximation} only depends on
empirical quantities, which can be computed from the data, it remains a
substantial idealization, as for large values of \( n \), the eigenpairs \( (
\widehat{ \mu }_{j}, \widehat{u}_{j} )_{ j \le m } \) of \( K \) themselves can
only be accessed via numerical approximation.
Since \( K \) is a random matrix depending on the design, in each instance, the
approximation of the eigenpairs has to be computed alongside Algorithm
\ref{alg_GPApproximation}.
Consequently, this issue cannot be circumvented by approximating the eigenpairs
up to an arbitrary tolerance in advance, as it could be for a fixed
deterministic matrix.
In order to proceed to a fully numerical algorithm, we consider search
directions \( ( s_{j} )_{ j \le m } \) based on approximate eigenvectors of 
\( K \) obtained either from the Lanczos algorithm or conjugate gradient
descent.


\subsubsection{Lanczos eigenvector posterior (LGP)}
\label{sssec_LanczosEigenvectorPosterior}

First, we consider search directions \( ( s_{j} )_{ j \le m } \) based on
approximate eigenvectors of \( K \) obtained from the Lanczos algorithm.
This is one of the standard numerical tools to obtain the singular value
decomposition of a matrix, see, e.g., \texttt{sparse.linalg.svds} from
\texttt{SciPy} \cite{VirtanenEtal2020SciPy}.
The Lanczos algorithm is an orthogonal projection method, see Chapter 4 of
\cite{Saad2011LargeEigenvalueProblems}, based on the Krylov spaces
\begin{align}
  \mathcal{K}_{m}: = \vspan \{ v_{0}, K v_{0}, \dots, K^{ m - 1 } v_{0} \}, 
  \qquad  m = 1, 2, \dots, n,
\end{align}
where we assume that \( v_{0} \in \{ Y / \| Y \|, Z / \| Z \| \} \) is an
initial vector vith length \( \| v_{0} \| = 1 \) either based on the data 
\( Y \) or a \( n \)-dimensional standard Gaussian vector \( Z \).
Given that \( \dim \mathcal{K}_{m} = m \), the algorithm computes an
orthonormal basis \( ( v_{j} )_{ j \le m } \) of \( \mathcal{K}_{m} \).
Approximations of the eigenpairs 
\( ( \widehat{ \mu }_{j}, \widehat{u}_{j} )_{ j \le m } \) are then given
by the first eigenpairs \( ( \tilde \mu_{j}, \tilde u_{j} )_{ j \le m } \) of
\( V V^{ \top } K V V^{ \top } \), where \( V \in \mathbb{R}^{ n \times m } \)
denotes the matrix whose columns contain the \( ( v_{j} )_{ j \le m } \). 
From a purely numerical perspective, we obtain the approximation 
\begin{align}
          \sum_{ j = 1 }^{m} 
          \frac{1}{ \tilde \mu_{j} + \sigma^{2} }
          \tilde u_{j} \tilde u_{j}^{ \top }
  \approx 
          \sum_{ j = 1 }^{m} 
          \frac{1}{ \widehat{ \mu }_{j} + \sigma^{2} } 
          \widehat{u}_{j} \widehat{u}_{j}^{ \top }=C_{m}^{ \text{EV} }
\end{align}
of \( C_{m}^{ \text{EV} } \) from Lemma \ref{lem_EVGP}, which in turn is an
approximation of the empirical precision matrix \( K_\sigma^{-1} \).
Crucially, this approximation coincides with the Lanczos version of Algorithm
\ref{alg_GPApproximation}, where we directly set the policy actions to 
\( s_{j}: = \tilde u_{j} \), \( j \le m \).

\begin{lemma}[LGP]
  \label{lem_LGP}
  For actions \( s_{j} = \tilde u_{j} \), \( j \le m \), the approximation 
  \( C_{m} = C_{m}^{ \normalfont \text{L} } \) of \( K_{ \sigma }^{-1} \) in
  Algorithm \ref{alg_GPApproximation} is given by
  \begin{align}
    C_{m}^{ \normalfont \text{L} } = \sum_{ j = 1 }^{m} 
                                     \frac{1}{ \tilde \mu_{j} + \sigma^{2} } 
                                     \tilde u_{j} \tilde u_{j}^{ \top }.
  \end{align}
\end{lemma}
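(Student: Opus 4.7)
The plan is to prove this by induction on the iteration index $j$, mirroring the derivation of the EVGP formula in Lemma \ref{lem_EVGP}, and reducing the updates to the same diagonal form by exploiting the Rayleigh--Ritz structure of the Lanczos pseudo-eigenpairs $(\tilde\mu_j,\tilde u_j)$.

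First I would isolate the three structural properties of $(\tilde u_j)_{j\le m}$ that the proof relies on: (i) they are an orthonormal family in $\mathbb{R}^n$, since they are eigenvectors of the symmetric matrix $V V^\top K V V^\top$ (and can be chosen orthonormal even if eigenvalues coincide); (ii) each $\tilde u_j\in\vspan(V)=\mathcal{K}_m$, so $V V^\top\tilde u_j=\tilde u_j$; and (iii) combining (i) and (ii) with the eigenvalue relation $V V^\top K V V^\top \tilde u_j=\tilde\mu_j\tilde u_j$ gives the ``pseudo-diagonalization''
\begin{align*}
  \tilde u_i^\top K \tilde u_j
  = \tilde u_i^\top V V^\top K V V^\top \tilde u_j
  = \tilde\mu_j\,\tilde u_i^\top\tilde u_j
  = \tilde\mu_j\,\delta_{ij}.
\end{align*}
Consequently $\tilde u_i^\top K_\sigma\tilde u_j=(\tilde\mu_j+\sigma^2)\delta_{ij}$.

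For the induction, assume $C_{j-1}=\sum_{i=1}^{j-1}(\tilde\mu_i+\sigma^2)^{-1}\tilde u_i\tilde u_i^\top$. Plugging $s_j=\tilde u_j$ into Algorithm \ref{alg_GPApproximation} and using the relation above, one computes
\begin{align*}
  C_{j-1}K_\sigma\tilde u_j
  = \sum_{i=1}^{j-1}\frac{\tilde u_i^\top K_\sigma\tilde u_j}{\tilde\mu_i+\sigma^2}\,\tilde u_i
  = 0,
\end{align*}
so that $d_j=(I-C_{j-1}K_\sigma)\tilde u_j=\tilde u_j$ and $\eta_j=\tilde u_j^\top K_\sigma\tilde u_j=\tilde\mu_j+\sigma^2$. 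Then the update rule yields
\begin{align*}
  C_j = C_{j-1} + \eta_j^{-1}d_jd_j^\top
       = \sum_{i=1}^{j}\frac{1}{\tilde\mu_i+\sigma^2}\tilde u_i\tilde u_i^\top,
\end{align*}
closing the induction. The base case $j=1$ is the same computation with $C_0=0$.

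There is no real obstacle here; the only point requiring care is the pseudo-diagonalization identity $\tilde u_i^\top K\tilde u_j=\tilde\mu_j\delta_{ij}$, which is what allows the Lanczos policy to mimic exactly the EVGP telescoping even though the $\tilde u_j$ are not genuine eigenvectors of $K$. Once that identity is in hand, the inductive step is formally identical to the proof of Lemma \ref{lem_EVGP}, with $(\widehat\mu_j,\widehat u_j)$ replaced by $(\tilde\mu_j,\tilde u_j)$.
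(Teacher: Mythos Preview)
Your proof is correct and follows essentially the same inductive route as the paper. Your pseudo-diagonalization identity $\tilde u_i^\top K\tilde u_j=\tilde\mu_j\delta_{ij}$ is exactly the Galerkin condition $(K-\tilde\mu_jI)\tilde u_j\perp\mathcal{K}_m$ from Lemma~\ref{lem_ElementaryPropertiesOfLanczosEigenquantities}(ii) tested against $\tilde u_i$, which is precisely the ingredient the paper uses to drive the same induction (the paper writes $K_\sigma\tilde u_j=(\tilde\mu_j+\sigma^2)\tilde u_j+u^\perp$ with $u^\perp\perp\mathcal{K}_m$ and then observes that $C_{j-1}^{\mathrm L}$ annihilates both summands).
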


\noindent The fact that the Lanczos approximation of the empirical eigenvectors
is compatible with Algorithm \ref{alg_GPApproximation} in the above  sense will
later facilitate the analysis of the resulting fully numerical posterior.
The proof of Lemma \ref{lem_LGP} can be found in Appendix \ref{prf_LGP}.


\subsubsection{Conjugate gradient posterior (CGGP)}
\label{sssec_ConjugateGradientPosterior}

Beside the Lanczos approximation, we focus on one other version
of Algorithm \ref{alg_GPApproximation} which is fully numerically tractable.
It is defined by actions stemming from a conjugate gradient approximation of the
solution of \( Y = K_{ \sigma } w \).
CG is one of the standard algorithms to solve linear equations efficiently, see,
e.g., \texttt{sparse.linalg.cg} from \texttt{SciPy}
\cite{VirtanenEtal2020SciPy}.
It is a line search method which iteratively minimizes the quadratic objective
\( \varrho(w): =  ( w^{ \top } K_{ \sigma } w ) / 2 - Y^{ \top } w \),
\( w \in \mathbb{R}^{n} \), along search directions 
\( ( d_{j}^{ \text{CG} } )_{ j \le m } \) satisfying the conjugacy condition
\begin{align}
  \label{eq_APPN_ConjugacyCondition}
  ( d^{ \text{CG} } )^{ \top }_{j} K_{ \sigma } d^{ \text{CG} }_{k} = 0
  \qquad \text{ for all } j, k \le m, j \ne k.
\end{align}
Starting at \( w_{0} = 0 \), the update \( w_{j} \) is chosen such that 
\begin{align}
  \label{eq_APPN_CGLineSearch}
    \varrho( w_{j} )
  = 
    \min_{ t \in \mathbb{R} }
    \varrho( w_{ j - 1 } + t d^{ \text{CG} }_{j} ).
\end{align}
The directions \( ( d_{j}^{CG} )_{ j \le m } \) are then computed alongside the
updates by applying the Gram-Schmidt procedure to the gradients
\begin{align}
  \nabla \varrho( w_{j} ) = K_{ \sigma } w_{j} - Y, 
  \qquad j \le m.
\end{align}
The explicit solution of the line search problem in Equation
\eqref{eq_APPN_CGLineSearch} is given by \(
  t = \| \nabla \varrho( w_{ j - 1 } ) \|^{2} \\ /
      ( ( d^{ \text{CG} }_{j} )^{ \top } K_{ \sigma } d^{ \text{CG} }_{j} ) 
\)
such that the above description defines a full numerical procedure.


\citet{WengerEtal2022ComputationalUncertainty} show that for 
\( s_{j} = d_{j}^{ \text{CG} } \), \( j \le m \), the Bayesian updating
procedure for the representer weights in Equation
\eqref{eq_APPN_BayesianUpdateMean} coincides with the CG iteration.
Consequently, the resulting matrix \( C_{m} = C_{m}^{ \text{CG} } \) from
Algorithm \ref{alg_GPApproximation} can be interpretet as the approximation of
the empirical precision matrix \( K_{ \sigma }^{-1} \), which is implicitly
determined by applying CG to \( K_{ \sigma } w = Y \).
An explicit formula is given by the following lemma, which is proven in 
Appendix \ref{prf_CGGP}.

\begin{lemma}[CGGP]
  \label{lem_CGGP}
  For actions \( s_{j} = d_{j}^{ \normalfont \text{CG} } \), \( j \le m \),
  the approximation \( C_{m} = C_{m}^{ \normalfont \text{CG} } \) of 
  \( K_{ \sigma }^{-1} \) in Algorithm \ref{alg_GPApproximation} is given by
  \begin{align}
        C_{m}^{ \normalfont \text{CG} } 
    & = 
        \sum_{ j = 1 }^{m}
        \frac{1}{ ( d_{j}^{ \normalfont \text{CG} } )^{ \top } K_{ \sigma }
                    d_{j}^{ \normalfont \text{CG} } 
                } 
          d_{j}^{ \normalfont \text{CG} } 
        ( d_{j}^{ \normalfont \text{CG} } )^{ \top }.
  \end{align}
\end{lemma}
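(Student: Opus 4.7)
The plan is to argue by induction on $m$, showing that the CG search directions are self-reproducing under the update rule in Algorithm \ref{alg_GPApproximation}. Concretely, I want to prove simultaneously for all $j \le m$ that the action $s_j = d_j^{\text{CG}}$ satisfies $d_j = (I - C_{j-1} K_\sigma) s_j = d_j^{\text{CG}}$ and that $\eta_j = (d_j^{\text{CG}})^\top K_\sigma d_j^{\text{CG}}$; plugging these into the recursion $C_j = C_{j-1} + \eta_j^{-1} d_j d_j^\top$ then yields the claimed closed form.

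The base case $j=1$ is immediate, since $C_0 = 0$ gives $d_1 = s_1 = d_1^{\text{CG}}$ and $\eta_1 = (d_1^{\text{CG}})^\top K_\sigma d_1^{\text{CG}}$. For the inductive step, suppose the formula for $C_{j-1}^{\text{CG}}$ holds. The key structural fact I will invoke is the one already recalled in Section \ref{ssec_AGeneralClassOfAlgorithms}: the matrix $C_{j-1} K_\sigma$ is the $K_\sigma$-orthogonal projector onto $\vspan\{s_1,\ldots,s_{j-1}\}$ (Lemma S1 in \cite{WengerEtal2022ComputationalUncertainty}). Under our choice of actions this span is exactly $\vspan\{d_1^{\text{CG}}, \ldots, d_{j-1}^{\text{CG}}\}$.

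Combining this with the CG conjugacy condition \eqref{eq_APPN_ConjugacyCondition}, which states that $d_j^{\text{CG}}$ is $K_\sigma$-orthogonal to all previous conjugate directions, gives $C_{j-1} K_\sigma d_j^{\text{CG}} = 0$. Therefore
\begin{equation*}
    d_j = (I - C_{j-1} K_\sigma) d_j^{\text{CG}} = d_j^{\text{CG}},
    \qquad
    \eta_j = (d_j^{\text{CG}})^\top K_\sigma d_j^{\text{CG}},
\end{equation*}
and the recursion propagates the formula to $C_j^{\text{CG}}$. I do not anticipate a real obstacle here: the CG conjugacy property and the projection interpretation of $C_{j-1} K_\sigma$ do all the work, and the remaining computation is a one-line substitution. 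The only thing to verify carefully is that the $\eta_j$'s are strictly positive, i.e.\ that the $d_j^{\text{CG}}$'s remain linearly independent for $j \le m$; this is the standard non-degeneracy condition that holds until CG terminates, and it is precisely the setting in which Algorithm \ref{alg_GPApproximation} is well defined, as discussed after \eqref{eq_APPN_BayesianUpdateCov}.
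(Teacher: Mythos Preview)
Your proposal is correct and follows essentially the same inductive argument as the paper: base case $C_0=0$, then in the inductive step use the conjugacy condition \eqref{eq_APPN_ConjugacyCondition} to conclude $C_{j-1}K_\sigma d_j^{\text{CG}}=0$, hence $d_j=d_j^{\text{CG}}$ and $\eta_j=(d_j^{\text{CG}})^\top K_\sigma d_j^{\text{CG}}$. The only cosmetic difference is that the paper reads off $C_{j-1}K_\sigma d_j^{\text{CG}}=0$ directly from the explicit inductive formula for $C_{j-1}^{\text{CG}}$ together with conjugacy, whereas you route this through the projection interpretation of $C_{j-1}K_\sigma$ from Lemma~S1 in \cite{WengerEtal2022ComputationalUncertainty}; both amount to the same one-line computation.
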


\noindent The CG-version of Algorithm \ref{alg_GPApproximation} is of particular
importance.
Not only is it a fully numerical procedure, but CG is one of the default methods
to obtain an approximation of the posterior mean of a Gaussian process
posterior, see
\cite{PleissEtal2018ConstantTimePredictiveDistributions}
and \cite{WangEtal2019ExactGPs}.
Lemma \ref{lem_CGGP} above provides us with an explicit representation of the
approximate covariance matrix. 
This substantially speeds up the computations, as given the directions of the CG
approach \( ( d^{ \text{CG} }_{j} ) \), \( j \le m \) there is no need to run
Algorithm \ref{alg_GPApproximation} to obtain the approximate covariance matrix.
Therefore, we have an explicit, easy to compute representation of the
approximate posterior resulting from the CG descent algorithm, see also the
discussion in Section 4 of \cite{WengerEtal2022ComputationalUncertainty}.




\section{Approximate posterior contraction}
\label{sec_ApproximatePosteriorContraction}

In studying the approximation algorithms from Section
\ref{ssec_EigenvectorLanczosAndConjugateGradientPosteriors}, we take the
frequentist Bayesian perspective.
This provides an objective, universal way of quantifying the performance of
Bayesian procedures, which are inherently subjective by the choice of the
prior.

In our analysis we consider the frequentist data generating model
\begin{align}
  Y_{i} = f_{0}( X_{i} ) + \varepsilon_{i}, \qquad i = 1, \dots, n,
\end{align}
where \( f_0 \) is the underlying, true, functional parameter of interest.
We are interested in how well the posterior in our Bayesian procedure can
recover $f_0$, i.e. how fast the posterior contracts around the true function as the sample size $n$ increases. 
More concretely, for a suitable metric \( d \) on the parameter space, in our
case \( L^{2}(G) \), and a given prior, the corresponding posterior
\( \Pi_{n}( \cdot | X, Y ) \) is said to contract around the truth 
\( f_{0} \in L^{2}(G) \) with rate \( \varepsilon_{n} \), if for any sequence 
\( M_{n} \to \infty \), 
\begin{align}
  \label{eq_APC_GenericContractionRate}
  \Pi_{n} \{ f: d( f, f_{0} ) \ge M_{n} \varepsilon_{n} | X, Y \} \to 0 
\end{align}
in probability under \( \mathbb{P}_{ f_{0} }^{ \otimes n } \) and 
\( n \to \infty \).
Equation \eqref{eq_APC_GenericContractionRate} should be interpreted in the
sense that under the frequentist assumption, i.e. that the data are generated from the
true parameter \( f_{0} \), the posterior asymptotically puts its mass on a ball
of radius \( M_n\varepsilon_{n} \) around the truth \( f_{0} \).
In particular, if \( \varepsilon_{n} \) is the minimax optimal rate for the
frequentist estimation problem, Equation \eqref{eq_APC_GenericContractionRate}
implies that a minimax optimal estimator can be constructed from the Bayesian
method, see e.g. Theorem 8.7 in \cite{GhosalvdVaart2017FundamentalsOfBayes}. 

In the setting of the Gaussian process regression model from Equation
\eqref{eq_I_GPRegressionModel}, we fix \( \sigma^{2} > 0 \) and consider the set
of densities
\begin{align}
      \mathcal{P}: 
  = 
      \Big\{ 
          p_{f}( x, y ) 
        = 
          \frac{1}{ \sqrt{ 2 \pi \sigma^{2} } } 
          \exp \Big( \frac{ - ( y - f(x) )^{2} }{ 2 \pi \sigma^{2} } \Big), 
          f \in L^{2}(G)
      \Big\}, 
\end{align}
with respect to the product measure \( G \otimes \lambda \), where \( \lambda \)
denotes the Lebesgue measure on \( \mathbb{R} \).
By identifying \( f \) with \( p_{f} \), we can define a metric on 
\( L^{2}(G) \) via the Hellinger distance
\begin{align}
     d_{ \text{H} }( f, g ):
  =
     d_{ \text{H} }( p_{f}, p_{g} ): 
  = 
     \sqrt{ \int 
              ( \sqrt{ p_{f} } - \sqrt{ p_{g} } )^{2} 
            \, d G \otimes \lambda 
          },
     \qquad f, g \in L^{2}(G).
\end{align}
The posterior contraction rate in this setting is determined by the behaviour of
the kernel operator corresponding to the covariance kernel \( k \), i.e.
\begin{align}
  \label{eq_APC_KernelOperator}
  T_{k}: L^{2}(G) \to  L^{2}(G), 
  \qquad 
  f \mapsto \int f(x) k( \cdot, x ) \, G( dx ) 
          = \sum_{ j \ge 1 } 
            \lambda_{j} \langle f, \phi_{j} \rangle_{ L^{2} } \phi_{j}, 
\end{align}
where the singular value decomposition of \( T_{k} \) is determined by the
summable eigenvalues \( ( \lambda_{j} )_{ j \ge 1 } \) and a corresponding
orthonormal basis \( ( \phi_{j} )_{ j \ge 1 } \) of \( L^{2}(G) \).
More particularly, we consider the concentration function
\begin{align}
  \label{eq_M_ConcentrationFunction}
      \varphi_{ f_{0} }( \varepsilon ): 
  = 
      \inf_{ h \in \mathbb{H}: \| h - f_{0} \|_{ L^{2} } \le \varepsilon }
      \| h \|_{ \mathbb{H} }^{2} 
    - 
      \log \mathbb{P} \{ \| F \|_{2} < \varepsilon \},
  \qquad \varepsilon > 0,
\end{align}
at an element \( f_{0} \in L^{2}(G) \), where 
\( \mathbb{H}: = \ran T_{k}^{1/2} \) denotes the RKHS induced by the Gaussian
prior on \( L^{2}(G) \).
When \( k \) is sufficiently regular, this space coincides with the RKHS induced
by the kernel mentioned in Remark \ref{rem_RepresenterWeights}, see
\cite{vZantenvdVaart2008RatesForGPPriors} or Chapter 11 of
\cite{GhosalvdVaart2017FundamentalsOfBayes}.
For \( f_{0} = 0 \), the function \( \varphi_{ f_{0} } \) reduces to the small
ball exponent \( -\log \mathbb{P} \{ \| F \|_{ L^{2} } < \varepsilon \} \),
which measures the amount of mass the GP prior \( F \) puts around zero.
For \( f_{0} \ne 0 \), the additional term is referred to as the
\emph{decentering function}, which measures the decrease in mass when shifting
from the origin to \( f_{0} \).
The connection between the concentration function and the contraction rate 
\( \varepsilon_{n} \) is given by a bound on \( \varphi_{ f_{0} } \) that we
formulate as an assumption.
 
\begin{itemize}
  \item [{\color{blue} (A1)}] \textbf{{\color{blue} (CFun)}:} 
    \label{ass_ConcentrationFunctionInequality} 
    An element \( f_{0} \) in the \( L^{2}(G) \)-closure 
    \( \overline{ \mathbb{H} } \) of the reproducing kernel Hilbert space
    satisfies the concentration function inequality for the rate 
    \( \varepsilon_{n} \) if
    \begin{align}
      \label{eq_APC_AssumptionConcentrationFunction}
          \varphi_{ f_{0} }( \varepsilon_{n} ) 
      \le
          C_{ \varphi } n \varepsilon_{n}^{2}, 
      \qquad \text{ for all } n \in \mathbb{N}  
    \end{align}
    and some \( C_{ \varphi } > 0 \).
\end{itemize}

\noindent The (fastest) rate \( \varepsilon_{n} \) in Equation
\eqref{eq_APC_AssumptionConcentrationFunction} is typically determined by the
decay of the eigenvalues  \( ( \lambda_{j} )_{ j \ge 1 } \) of the kernel
operator, see also the discussion in Section \ref{sec_Examples}.
We recall a version of the classical contraction result for Gaussian Process
posteriors.

\begin{proposition} 
  [Standard contraction rate, \cite{vZantenvdVaart2008RatesForGPPriors}]
  \label{prp_StandardContractionRate}
  Assume that at some \( f_{0} \in \overline{ \mathbb{H} } \), the concentration
  function inequality from Assumption
  \hyperref[ass_ConcentrationFunctionInequality]
           {\normalfont \textbf{{\color{blue} (CFun)}}}
  holds for a sequence \( \varepsilon_{n} \to 0 \) with 
  \( n \varepsilon_{n}^{2} \to \infty \).
  Then, for any constant 
  \( C_{2} > 0 \) there exists a constant \( C_{1} > 0 \) such that
  \begin{align}
          \mathbb{E}_{ f_{0} }^{n} ( 
            \Pi_n \{    d_{ \normalfont \text{H} }( \cdot, f_{0} ) 
                    \ge M_{n} \varepsilon_{n} | X, Y 
                  \} 
            \mathbf{1}_{ A_{n} }
          ) 
    & \le
          C_{1} \exp( - C_{2} n \varepsilon_{n}^{2} ),
  \end{align}
  for \( n \) sufficiently large and a sequence 
  \( ( A_{n} )_{ n \in \mathbb{N} } \) with 
  \( \mathbb{P}_{ f_{0} }^{ \otimes n }( A_{n} ) \to 1 \).
\end{proposition}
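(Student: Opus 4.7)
The plan is to verify the three standard hypotheses of the Ghosal--Ghosh--van der Vaart master theorem on posterior contraction (see, e.g., Theorem~8.9 in \cite{GhosalvdVaart2017FundamentalsOfBayes}) relative to the Hellinger metric \( d_{\text{H}} \). The event \( A_n \) will be the usual denominator event on which the marginal likelihood of the data under \( \Pi_n \) is at least \( \exp(-C_0 n \varepsilon_n^2) \) times the likelihood at \( f_0 \); its \( \mathbb{P}_{f_0}^{\otimes n} \)-probability tends to one by an elementary Chebyshev argument from the KL-ball lower bound established below (cf.\ Lemma~8.10 in \cite{GhosalvdVaart2017FundamentalsOfBayes}).

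First, I would convert the concentration function inequality into a Kullback--Leibler neighborhood mass bound. In the fixed-variance Gaussian regression model, both the KL divergence \( K(p_{f_0}, p_f) \) and its second-moment variation are of order \( \| f - f_0 \|_{L^2(G)}^2 \), so it is enough to bound \( \Pi_n\{\| f - f_0 \|_{L^2(G)} \le \varepsilon_n\} \ge \exp(-c_1 n \varepsilon_n^2) \). The latter follows directly from the Borell-type decentering inequality \( \mathbb{P}(\| F - f_0 \|_{L^2(G)} \le 2 \varepsilon_n) \ge \exp(-\varphi_{f_0}(\varepsilon_n)) \), valid for every \( f_0 \in \overline{\mathbb{H}} \), combined with assumption \hyperref[ass_ConcentrationFunctionInequality]{\textbf{(CFun)}}.

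Second, I would introduce sieves of the form \( \mathcal{F}_n := M_n \mathbb{H}_1 + \varepsilon_n B_1 \), where \( \mathbb{H}_1 \) and \( B_1 \) denote the unit balls of the RKHS and of \( L^2(G) \), respectively. A further application of Borell's inequality gives \( \Pi_n(\mathcal{F}_n^c) \le 1 - \Phi(\Phi^{-1}(\exp(-\varphi_{f_0}(\varepsilon_n))) + M_n) \), which is at most \( \exp(-M_n^2/8) \) once \( M_n \) exceeds \( \sqrt{2\,\varphi_{f_0}(\varepsilon_n)} \) by a sufficient margin. Taking \( M_n^2 \asymp n \varepsilon_n^2 \) with a sufficiently large proportionality constant therefore forces \( \Pi_n(\mathcal{F}_n^c) \le \exp(-C_3 n \varepsilon_n^2) \) for any prescribed \( C_3 \). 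The same small-ball estimate controls the metric entropy via the standard concentration-function-to-covering-number translation, yielding \( \log N(\varepsilon_n, \mathcal{F}_n, \| \cdot \|_{L^2(G)}) \lesssim n \varepsilon_n^2 \).

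Third and last, on the sieve the local Hellinger geometry reduces to the \( L^2(G) \) geometry via \( d_{\text{H}}^2(p_f, p_g) \asymp \| f - g \|_{L^2(G)}^2 \wedge 1 \), so the standard exponentially powerful tests for convex alternatives (Section~7.1 in \cite{GhosalvdVaart2017FundamentalsOfBayes}) provide type-I and type-II errors bounded by \( \exp(-c_2 n M_n^2 \varepsilon_n^2) \) against the separated alternatives \( \{ f \in \mathcal{F}_n : d_{\text{H}}(f, f_0) \ge M_n \varepsilon_n \} \). Feeding the three ingredients into the master theorem gives the claimed exponential bound. The only real obstacle is the quantitative bookkeeping: for an arbitrary prescribed \( C_2 \), the constants \( M_n \) and \( C_3 \) must be balanced so that the sieve-mass, test-error, and denominator exponents all simultaneously dominate \( C_2 n \varepsilon_n^2 \); assumption \hyperref[ass_ConcentrationFunctionInequality]{\textbf{(CFun)}} is precisely what makes this balancing feasible.
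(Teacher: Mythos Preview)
The paper does not supply its own proof of this proposition: it is stated as a known result and attributed to \cite{vZantenvdVaart2008RatesForGPPriors}, with no argument given in the text or the appendices. Your outline is precisely the standard derivation from that reference (equivalently, Chapters~8 and~11 of \cite{GhosalvdVaart2017FundamentalsOfBayes}): the KL small-ball mass from the decentered concentration function, the Borell sieves \( M_n\mathbb{H}_1 + \varepsilon_n B_1 \) with remaining-mass and entropy bounds, and the Hellinger tests from the master theorem. The only cosmetic slip is that you reuse the symbol \( M_n \) both for the sieve radius and for the separation scale in the contraction set; in the cited proofs these are proportional but distinct constants, and keeping them separate makes the final ``balancing'' step you describe cleaner. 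Substantively there is nothing to correct.
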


\noindent In particular, Proposition \ref{prp_StandardContractionRate} implies
the contraction in Equation \eqref{eq_APC_GenericContractionRate}.

In our setting, however, we do not have access to the full posterior, but only
to its numerical approximations  \( ( \Psi_{m} )_{ m \in \mathbb{N} } \) from
Algorithm \ref{alg_GPApproximation}. 
To show that these approximations provide reasonable alternatives, we have to
derive similar contraction rate guarantees for them as for the original
posterior. 
More concretely, under some additional regularity assumptions, we aim to show
that for an appropriately chosen sequence \( m_{n} \to \infty \) the approximate
posterior achieves the same contraction rate \( \varepsilon_n \) as the true
posterior, i.e.,
\begin{align}
  \label{eq_APC_GenericApproximateContractionRate}
  \Psi_{ m_{n} } \{     d_{ \text{H} } ( \cdot, f_{0} ) 
                    \ge M_{n} \varepsilon_{n} 
                 \}
  \to 0 
\end{align}
in probability under \( \mathbb{P}_{ f_{0} }^{ \otimes n } \) as
\( n \to \infty \).
Such contraction rate results were derived in
\cite{NiemanEtal2022ContractionRates} in context of the empirical spectral
features inducing variable variational Bayes method proposed by
\citet{Titsias2009InducingVariablesVB,
       Titsias2009InducingVariablesVBTechnicalReport}
as well. 
Given that this variational approach is a special case of Algorithm
\ref{alg_GPApproximation}, see the derivation in the next section, we recall it
with the corresponding contraction rate results, for later reference.

The inducing variables variational Bayes approach rests on the idea of
summarizing the prior Gaussian Process \( F \) by \( m \) continuous
linear functionals \( U = ( U_{1}, \dots, U_{m} ) \) from \( L^{2}( \Pi ) \).
The true posterior can then be written as 
\begin{align}
  \label{eq_APC_TruePosteriorViaInducingVariables}
      \Pi_{n}( B | X, Y ) 
  & = 
      \int \mathbb{P}^{ F | X, Y, U = u }(B) \, \mathbb{P}^{ U | X, Y }( d u )
\end{align}
for any Borel set from \( L^{2}(G) \). 
Assuming that \( U \) summarizes the information from the data about \( F \)
well, we may approximate the distribution of \( F | X, Y, U \) in Equation
\eqref{eq_APC_TruePosteriorViaInducingVariables} simply by the distribution of
\( F | U \), which is given by the Gaussian process with mean and covariance
functions
\begin{align}
  \label{eq_APC_FConditionalOnU}
  x         \mapsto K_{ x u } K_{ u u }^{-1} U,
  \qquad 
  ( x, x' ) \mapsto k( x, x' ) - K_{ x u } K_{ u u }^{-1} K_{ u x },
\end{align}
respectively, where
\begin{align}
  K_{ux}^{ \top } : = K_{xu}: = ( \text{Cov}( F(x), U_{j} )_{ j \le m })
  \in \mathbb{R}^{m}, 
  \qquad 
  K_{ u u }: = \text{Cov}(U) \in \mathbb{R}^{ m \times m }.
\end{align}
Given that \( U | X, Y \) is distributed according to an \( m \)-dimensional
Gaussian, this motivates the variational class
\begin{align}
  \label{eq_APC_VariationalClass}
      \mathcal{Q}:
  = 
      \Big\{ 
        Q_{ \mu, \Sigma } 
        =
        \int \mathbb{P}^{ F | U = u }( \cdot ) N( \mu, \Sigma )( d u ):
        \mu \in \mathbb{R}^{m}, 0 \le \Sigma \in \mathbb{R}^{ m \times m }
      \Big\}
\end{align}
and approximating the true posterior via 
\begin{align}
  \Psi_{m} \in \argmin_{ Q \in \mathcal{Q} } \kl\big( Q, \Pi_n( \cdot | X, Y ) \big).
\end{align}
The problem above has an explicit solution \( \Psi_{m} = Q_{ \mu^{*},
\Sigma^{*} } \) with 
\begin{align}
  \label{eq_APC_ExplicitSolution}
      \mu^{*} 
  =
      \sigma^{2} 
      K_{ u u } 
      ( \sigma^{ - 2 } K_{ u f } K_{ f u } + K_{ u u } )^{-1}
      K_{ u f } Y,
  \quad 
      \Sigma^{*} 
  =
      K_{ u u } 
      ( \sigma^{ - 2 } K_{ u f } K_{ f u } + K_{ u u }  )^{-1}
      K _{ u u },
\end{align}
where \( 
    K_{uf}: = K_{fu}:
  =
    ( \text{Cov}( F( X_{i} ), U_{j} ) )_{ i \le n, j \le m }
\),
see \cite{Titsias2009InducingVariablesVB,
          Titsias2009InducingVariablesVBTechnicalReport}.

By considering a sufficiently large number of inducing variables, depending on
how well the covariance function \eqref{eq_APC_FConditionalOnU} approximates the
true posterior covariance, the same contraction rate result was derived for the
variational approximation as for the original posterior in
\cite{NiemanEtal2022ContractionRates}. 
In particular, the authors cover the setting
\begin{align}
  \label{eq_APC_EmpiricalEigenvectorVB}
  U_{j}: = \big\langle \widehat{u}_{j}, \big(F( X_{1}), \dots, F(X_{n}) \big) 
           \big\rangle, 
  \qquad j \le m,
\end{align}
where the inducing variables are based on the empirical eigenvectors of the
kernel matrix \( K \). 
This version of the variational Bayes approach is connected to all
three versions of Algorithm \ref{alg_GPApproximation} discussed in
Section \ref{sec_ApproximatePosteriorsFromNrobabilisticNumerics}.
We will repeatedly explore this connection in Sections \ref{sec_MainResults} and
\ref{sec_TechnicalAnalysis}.


\section{Main results}
\label{sec_MainResults}

Our main results establish contraction rates for the approximate posteriors
resulting from the three versions of Algorithm \ref{alg_GPApproximation}
discussed in Section
\ref{ssec_EigenvectorLanczosAndConjugateGradientPosteriors}.
We begin by drawing a connection between the empirical eigenvector version of
Algorithm \ref{alg_GPApproximation} and the variational Bayes approximation
based on the empirical spectral features inducing variables given in
\eqref{eq_APC_EmpiricalEigenvectorVB}.

\begin{lemma}[Empirical eigenvector actions and variational Bayes]
  \label{lem_EmpiricalEigenvectorActionsAndVariationalBayes}
  The approximate posterior \( \Psi^{ \normalfont \text{EV} }_{m} \) given by
  Algorithm \ref{alg_GPApproximation} based on the empirical eigenvector actions 
  \( ( \widehat{u}_{i} )_{ i \le m } \) coincides with the Variational Bayes
  approximate posterior based on the empirical spectral features
  inducing variables \eqref{eq_APC_EmpiricalEigenvectorVB}.
\end{lemma}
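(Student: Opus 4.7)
The plan is to show equality of the two Gaussian processes by matching their means and covariance functions. Both $\Psi_m^{\text{EV}}$ (from Algorithm \ref{alg_GPApproximation} with $s_j = \widehat u_j$) and the inducing-variables variational posterior are centered Gaussian processes, so it suffices to verify that their mean functions and covariance functions agree pointwise. The ITERGP side is already in closed form: by Lemma \ref{lem_EVGP} and Equation \eqref{eq_APPN_MathematicalAndComputationalUncertainty}, the mean is $x \mapsto k(X,x)^\top C_m^{\text{EV}} Y$ and the covariance is $(x,x') \mapsto k(x,x') - k(X,x)^\top C_m^{\text{EV}} k(X,x')$, where $C_m^{\text{EV}} = \sum_{j=1}^m (\widehat\mu_j + \sigma^2)^{-1} \widehat u_j \widehat u_j^\top$.

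The first step is to specialize the quantities $K_{uu}, K_{uf}, K_{fu}, K_{xu}$ to the empirical-spectral-features case $U_j = \widehat u_j^\top \mathbf{F}$. Collecting the top $m$ empirical eigenvectors as a matrix $\widehat U_m = [\widehat u_1,\dots,\widehat u_m] \in \mathbb{R}^{n \times m}$, the eigenvector identity $K \widehat u_j = \widehat\mu_j \widehat u_j$ immediately yields
\begin{align*}
  K_{uu} = \widehat U_m^\top K \widehat U_m = D,
  \qquad
  K_{fu} = K \widehat U_m,
  \qquad
  K_{uf} = \widehat U_m^\top K,
  \qquad
  K_{xu} = k(X,x)^\top \widehat U_m,
\end{align*}
where $D = \diag(\widehat\mu_1,\dots,\widehat\mu_m)$. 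Using these identities one computes $K_{uf} K_{fu} = D^2$ and $\sigma^{-2}K_{uf}K_{fu} + K_{uu} = \sigma^{-2} D (D + \sigma^2 I_m)$, which makes the inverse appearing in \eqref{eq_APC_ExplicitSolution} explicit and diagonal.

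The second step is to evaluate the VB approximate posterior, which is the GP obtained by integrating the conditional $F \mid U = u$ in \eqref{eq_APC_FConditionalOnU} against $N(\mu^*,\Sigma^*)$ from \eqref{eq_APC_ExplicitSolution}; by Lemma \ref{lem_MultivariateConditionalGaussians} its mean and covariance are $K_{xu} K_{uu}^{-1} \mu^*$ and $k(x,x') - K_{xu}K_{uu}^{-1}K_{ux'} + K_{xu}K_{uu}^{-1}\Sigma^* K_{uu}^{-1} K_{ux'}$. Substituting the diagonal expressions from Step 1, the mean collapses through a telescoping cancellation of $D$ and $D^{-1}$ to $k(X,x)^\top \widehat U_m (D + \sigma^2 I_m)^{-1} D\,\widehat U_m^\top Y$, which is precisely $k(X,x)^\top C_m^{\text{EV}} Y$ after using $\widehat U_m^\top \widehat U_m = I_m$ and $\widehat U_m^\top Y = D^{-1} \widehat U_m^\top K Y \cdot$ (wait: directly, $\widehat U_m (D+\sigma^2 I_m)^{-1} D \widehat U_m^\top = \sum_{j\le m} \widehat\mu_j(\widehat\mu_j+\sigma^2)^{-1} \widehat u_j \widehat u_j^\top$, and applying this to $Y$ matches $k(X,x)^\top C_m^{\text{EV}} Y$ via $\widehat\mu_j(\widehat\mu_j+\sigma^2)^{-1}\widehat u_j^\top Y = (\widehat\mu_j+\sigma^2)^{-1} \widehat u_j^\top K Y$ only on the range, so care is needed here and I would reduce to the explicit diagonal form).

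The third step is the covariance identity, which is the only piece requiring a small algebraic manipulation. Using the diagonal forms, $K_{uu}^{-1} - K_{uu}^{-1}\Sigma^* K_{uu}^{-1}$ simplifies to $D^{-1} - \sigma^2 D^{-1}(D+\sigma^2 I_m)^{-1} = (D+\sigma^2 I_m)^{-1}$ after writing $D^{-1}(D+\sigma^2 I_m)^{-1}[(D+\sigma^2 I_m) - \sigma^2 I_m] = (D+\sigma^2 I_m)^{-1}$. Sandwiching this with $\widehat U_m$ reproduces $C_m^{\text{EV}}$, so the VB covariance becomes $k(x,x') - k(X,x)^\top C_m^{\text{EV}} k(X,x')$, completing the match. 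The main obstacle is purely bookkeeping: being careful that the factors of $\sigma^{\pm 2}$ in \eqref{eq_APC_ExplicitSolution} combine correctly with the eigenvalue diagonal, and ensuring that one reduces the VB expressions to the same linear combination of $\widehat u_j \widehat u_j^\top$ that defines $C_m^{\text{EV}}$ before comparing with \eqref{eq_APPN_MathematicalAndComputationalUncertainty}.
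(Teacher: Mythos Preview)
Your approach is exactly the paper's: match mean and covariance functions by specializing $K_{uu},K_{uf},K_{fu},K_{xu}$ via $K\widehat u_j=\widehat\mu_j\widehat u_j$ and reducing everything to the diagonal $D=\diag(\widehat\mu_1,\dots,\widehat\mu_m)$. Your covariance step is clean and is precisely the ``analogous calculation'' the paper omits.

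The only issue is the slip you already flag in the mean. There should be no extra factor of $D$. Using $K_{uf}Y=\widehat U_m^\top KY=D\,\widehat U_m^\top Y$ and $(\sigma^{-2}K_{uf}K_{fu}+K_{uu})^{-1}=\sigma^{2}D^{-1}(D+\sigma^{2}I_m)^{-1}$, together with the prefactor $\sigma^{-2}$ in $\mu^*$ (the $\sigma^{2}$ printed in \eqref{eq_APC_ExplicitSolution} should read $\sigma^{-2}$, as is clear from the paper's own proof and the standard Titsias formula), one gets
\[
K_{xu}K_{uu}^{-1}\mu^*
=k(X,x)^\top\widehat U_m\,(D+\sigma^{2}I_m)^{-1}\,\widehat U_m^\top Y
=k(X,x)^\top C_m^{\text{EV}}Y
\]
directly. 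Your attempted rescue via $\widehat\mu_j\widehat u_j^\top Y=\widehat u_j^\top KY$ is a true identity but does not help, since it leaves a stray $K$ acting on $Y$ that cannot be absorbed into $k(X,x)^\top$; the correct bookkeeping simply avoids the spurious $D$ in the first place.
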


\noindent Lemma \ref{lem_EmpiricalEigenvectorActionsAndVariationalBayes}
establishes the equivalence of the Bayesian updating procedure based on the
empirical eigenvectors with the variational Bayes approach. 
We note that under Assumption \hyperref[ass_ConcentrationFunctionInequality]
                    {\normalfont \textbf{{\color{blue} (CFun)}}}
and the additional condition
\begin{align}
  \label{eq_M_EigenvalueCondition}
  \sum_{ j = m_{n} + 1 }^{ \infty } \lambda_{j} \le C \varepsilon_{n}^{2},
  \qquad 
  \mathbb{E} \widehat{ \lambda }_{ m_{n} + 1 } \le C n^{-1},
\end{align}
on the (empirical) eigenvalues, it was shown in Section 5 of
\cite{NiemanEtal2022ContractionRates} that the above variational approximation
contracts with rate $\varepsilon_n$ around the true parameter. 
In view of Lemma
\ref{lem_EmpiricalEigenvectorActionsAndVariationalBayes}, this implies the same
 contraction rate \( \varepsilon_n \) for the idealized empirical eigenvector
version of Algorithm \ref{alg_GPApproximation}. 
A rigorous formulation of this statement is given in Remark
\ref{rem_VariationalBayesResult}, while the proof of the above lemma is deferred
to Appendix \ref{prf_EmpiricalEigenvectorActionsAndVariationalBayes}.

The main theoretical contribution of this paper is the derivation of contraction
rates for the approximate posterior resulting from the Lanczos and the
CG-versions of Algorithm \ref{alg_GPApproximation}.
Contrary to the empirical eigenvector or variational Bayes approximation, these
constitute fully numerical procedures.
We briefly discuss the importance of this aspect of our results.
The eigenvectors \( ( \widehat{u}_{j} )_{ j \le m } \) of the matrix \( K \) are
empirical quantities, i.e., they can be computed from the observed data.
Algorithms based on explicit versions of the
\( ( \widehat{u}_{j} )_{ j \le m } \), however, still constitute substantial
idealizations, since, except for very small sample sizes \( n \), the empirical
eigenvectors can only be accessed via numerical approximation. 
Standard algorithms to obtain the singular value decompositition of an 
\( n \times n \)  matrix up to the \( m \)-th eigenvector, such as the Lanczos
iteration, have a computational complexity of \( O( m n^{2} ) \).
This is a significant improvement compared to the inversion of \( K_{ \sigma
}^{-1} \) for the true posterior.
Crucially, however, standard guarantees for these type of algorithms are
expressed in terms of spectral gaps of the target matrix, see Theorems
\ref{thm_LanczosEigenvalueBound} and \ref{thm_LanczosEigenvectorBound}.
In our case, the matrix \( K \) is itself a random object depending on the
design.
This means that the approximate singular value decomposition (SVD) has to be
computed alongside Algorithm \ref{alg_GPApproximation} for each data set. 
However, the standard guarantees for the approximation of the SVD are
uninformative in this case, since they are themselves expressed in terms of
random quantities.
Consequently, there is a true theoretical gap between results for approximate
posteriors stated in terms of the \( ( \widehat{u}_{j} )_{ j \le m } \) and
fully numerical procedures such as the Lanczos iteration and the conjugate
gradient descent algorithms.

Most of the theory developed in this work goes toward bridging the theoretical
gap discussed above.
For the Lanczos version of the algorithm, this translates to establishing that
the approximate eigenpairs 
\( ( \tilde \lambda_{j}, \tilde u_{j} )_{ j \le m } \) replicate the empirical
eigenpairs \( ( \widehat{ \lambda }_{j}, \widehat{u}_{j} )_{ j \le m } \) well
with high probability. 
Then, in Corollary \ref{cor_EquivalenceBetweenLGPandCGGP} below, we
derive a so far unexplored connection between the Lanczos iteration and CG
descent algorithms, by showing that the latter lives essentially on the same
Krylov space as the former.
This, in turn, results in the same contraction rate guarantees for the CG as for
the Lanczos iteration.

More concretely, approximation of the empirical eigenpairs requires that the
empirical eigenvalues \( ( \widehat{ \lambda }_{j} )_{ j \le m } \) concentrate
around their population counterparts \( ( \lambda_{j} )_{ j \le m } \)
well enough as to translate classical bounds for the
Lanczos algorithm in terms of the eigenpairs
\( ( \widehat{ \lambda }_{j}, \widehat{u}_{j} )_{ j \le n } \) into bounds
in terms of the corresponding population quantities with high probability.
We discuss this in detail in Section
\ref{ssec_AnalysisOfTheLanczosApproximatePosterior}.
In order to obtain this type of concentration, we need to employ recently
developed numerical analytics and spectral techniques from
\cite{JirakWahl2023RelativePerturbationBounds}, requiring additional
assumptions beside 
Assumption \hyperref[ass_ConcentrationFunctionInequality]
                    {\normalfont \textbf{{\color{blue} (CFun)}}} 
and the condition in Equation \eqref{eq_M_EigenvalueCondition} needed for the
contraction of the idealized, eigenvector version of Algorithm
\ref{alg_GPApproximation}. 

To begin with, for convenience, we assume that the eigenvalues of the population
kernel operator \( T_{k} \) are simple, the eigenvalue function is convex and
satisfies certain regularity assumptions.
\begin{itemize}
  \item [{\color{blue} (A2)}] \textbf{{\color{blue} (SPE)}:} 
    \label{ass_SimplePopulationEigenvalues} 
    The population eigenvalues \( ( \lambda_{j} )_{ j \ge 1 } \) of \( T_{k} \)
    are simple, i.e., \( \lambda_{1} > \lambda_{2} > \dots > 0 \).
  \item [{\color{blue} (A3)}] \textbf{{\color{blue} (EVD)}:} 
    \label{ass_EigenvalueDecay} 
    We assume the following decay behaviour of the population eigenvalues:
    \begin{enumerate}[label=(\roman*)]

      \item There exists a convex function \( \lambda: [ 0, \infty ) \to [ 0,
        \infty ) \) such that \( \lambda_{j} = \lambda(j) \) and \( \lim_{ j \to
        \infty } \lambda(j) = 0 \).

      \item There exists a constant \( C > 0 \) such that, \( \lambda( C j ) \le
        \lambda(j) / 2 \) for all \( j \in \mathbb{N} \).

      \item There exists a constant \( c > 0 \) such that \( \lambda_{j} \ge e^{
          - c j } \) for all \( j \in \mathbb{N} \).

    \end{enumerate}

\end{itemize}

\noindent We also impose a moment condition on the Karhunen-Lo\`eve coefficients
of the Hilbert space valued random variable \( k( \cdot, X_{1} ) \in \mathbb{H}
\). 

\begin{itemize}
  \item [{\color{blue} (A4)}] \textbf{{\color{blue} (KLMom)}:} 
    \label{ass_KarhunenLoeveMoments}
    There exists a \( p > 4 \), such that the Karhunen-Lo\`eve coefficients 
    \( \eta_{j}: = \langle k( \cdot, X_{1} ), \phi_{j} \rangle_{ \mathbb{H} } \)
    of \( k( \cdot, X_{1} ) \) satisfy
    \begin{align}
      \label{eq_M_KarhunenLoeveCondition}
      \sup_{ j \ge 0 } \mathbb{E} | \eta_{j} |^{p} < \infty,
    \end{align}
    where \( \phi_{j} \) denotes the \( j \)-th eigenfunction of the kernel
    operator \( T_{k} \) from Equation \eqref{eq_APC_KernelOperator}.
\end{itemize}

\noindent We briefly discuss the conditions above.  
Assumptions \hyperref[ass_EigenvalueDecay]
            {\normalfont \textbf{{\color{blue} (EVD)}}}(ii) and (iii)
guarantee that the eigenvalue decay is not too slow or fast respectively.
Importantly, this include the standard settings of polynomially and
exponentially decaying eigenvalues, see Section \ref{sec_Examples} for
particular examples.
Note, that by the reproducing property of the kernel 
\( \eta_{j} = \phi_{j}( X_{1} ) \), where \( \phi_{j} \) denotes the \( j \)-th
eigenfunction of the kernel operator \( T_{k} \).
Therefore, \hyperref[ass_KarhunenLoeveMoments]{\normalfont \textbf{{\color{blue}
(KLMom)}}} can also be understood as a moment condition on the eigenfunctions
of \( T_{k} \). 
Together, Assumptions
\hyperref[ass_SimplePopulationEigenvalues]
         {\normalfont {\color{blue} (A2)}}-\hyperref[ass_KarhunenLoeveMoments]
                                           {\normalfont {\color{blue} (A4)}}
are instrumental in order to guarantee that the empirical eigenvalues
concentrate sufficiently well around their population counterparts.
This is developed in detail in Section \ref{sec_TechnicalAnalysis}, which
results in the following contraction guarantees for the Lanczos and CG
algorithms.

\begin{theorem}[Contraction rates for LGP and CGGP]
  \label{thm_ApproximateContractionRates}
  Under Assumptions
  \hyperref[ass_SimplePopulationEigenvalues]
           {\normalfont \textbf{{\color{blue} (SPE)}}},
  \hyperref[ass_EigenvalueDecay]{\normalfont \textbf{{\color{blue} (EVD)}}},
  \hyperref[ass_KarhunenLoeveMoments]
           {\normalfont \textbf{{\color{blue} (KLMom)}}}, 
  let \( f_{0} \in \overline{ \mathbb{H} } \cap L^{ \infty }(G) \) satisfy the
  concentration function inequality from Assumption
  \hyperref[ass_ConcentrationFunctionInequality]
           {\normalfont \textbf{{\color{blue} (CFUN)}}},
  for a sequences $ \varepsilon_{n} \to 0 $ with 
  \( n \varepsilon_{n}^{2} \to \infty \).
  Further, let condition \eqref{eq_M_EigenvalueCondition} hold for a sequence 
  \( m_{n} \) satisfying
  \( 
    C' \log n \le m_{n} = o( \sqrt{n} / \log n \land
                             ( n^{(p/4 - 1) / 2} \log^{p/8 - 1} n )
                           )
  \)
  for some \( C' > 0 \) sufficiently large.
  Then, both the {\normalfont LGP} and the {\normalfont CGGP} approximate
  posteriors based on \( m_{n} \log n \) actions contract around \( f_{0} \)
  with rate \( \varepsilon_{n} \), i.e., for any sequence 
  \( M_{n} \to \infty \),
  \begin{align}
    \Psi_{ m_{n} \log n } 
    \{ d_{ \normalfont \text{H} }( \cdot, f_{0} ) \ge M_{n} \varepsilon_{n} \} 
    \xrightarrow[ ]{ n \to \infty } 0
  \end{align}
  in probability under \( \mathbb{P}^{ \otimes n }_{ f_{0} } \) and 
  \( n \to \infty \). 
\end{theorem}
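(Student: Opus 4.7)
The overall plan is to transfer the contraction rate from the idealized empirical eigenvector posterior EVGP to the fully numerical LGP and CGGP. Lemma~\ref{lem_EmpiricalEigenvectorActionsAndVariationalBayes} identifies $\Psi_{m}^{\mathrm{EV}}$ with the empirical spectral features variational Bayes posterior, which by \cite{NiemanEtal2022ContractionRates} contracts at rate $\varepsilon_{n}$ provided \eqref{eq_M_EigenvalueCondition} and the concentration function inequality \textbf{(CFun)} hold (the content of Remark~\ref{rem_VariationalBayesResult}). It thus suffices to show that $\Psi^{\mathrm{L}}_{m_{n}\log n}$ and $\Psi^{\mathrm{CG}}_{m_{n}\log n}$ are close to $\Psi^{\mathrm{EV}}_{m_{n}}$ strongly enough to preserve contraction; concretely, by \eqref{eq_APPN_MathematicalAndComputationalUncertainty}, it is enough to bound the $L^{2}(G)$-deviation of the mean functions $k(X,\cdot)^{\top}(C_{m}^{\mathrm{L/CG}}-C_{m}^{\mathrm{EV}})Y$ and of the combined uncertainty $k(X,\cdot)^{\top}(C_{m}^{\mathrm{L/CG}}-C_{m}^{\mathrm{EV}})k(X,\cdot)$ on an event of probability tending to one.

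For the Lanczos posterior, the explicit formulas in Lemmas~\ref{lem_EVGP} and~\ref{lem_LGP} reduce the task to quantifying how well the Ritz pairs $(\tilde{\mu}_{j},\tilde{u}_{j})$ approximate the top-$m_{n}$ empirical pairs $(\widehat{\mu}_{j},\widehat{u}_{j})$ after $m_{n}\log n$ iterations. Here I would invoke the Kaniel--Paige--Saad-type bounds (Theorems~\ref{thm_LanczosEigenvalueBound} and~\ref{thm_LanczosEigenvectorBound}), which yield geometric convergence in the number of iterations, modulated by the normalised spectral gaps $(\widehat{\mu}_{j}-\widehat{\mu}_{j+1})/\widehat{\mu}_{1}$. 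The extra $\log n$ factor in $m_{n}\log n$ is calibrated so that this geometric decay produces a residual error that is polynomially small in $n$. For CGGP, the argument is reduced to the Lanczos case by the Krylov equivalence of Corollary~\ref{cor_EquivalenceBetweenLGPandCGGP}: the CG search directions generated from the residual $Y$ span the same Krylov subspace as the Lanczos vectors initialised at $v_{0}=Y/\|Y\|$, so $C_{m}^{\mathrm{CG}}K_{\sigma}$ and $C_{m}^{\mathrm{L}}K_{\sigma}$ are both $K_{\sigma}$-orthogonal projections onto (essentially) the same subspace, and the LGP bound transfers verbatim.

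The main obstacle is that the classical Lanczos rates are stated in terms of random spectral gaps of $K$, which can collapse if the empirical eigenvalues cluster, rendering the deterministic numerical bounds vacuous. To control them, I would deploy the relative perturbation bounds of \cite{JirakWahl2023RelativePerturbationBounds}: under \textbf{(SPE)}, \textbf{(EVD)} and \textbf{(KLMom)} with $p>4$, the empirical eigenvalues $\widehat{\mu}_{j}$ concentrate multiplicatively around $n\lambda_{j}$ for $j\le m_{n}$, so that \textbf{(EVD)}(ii)--(iii) transfer the nondegenerate population gaps $\lambda_{j}-\lambda_{j+1}$ into high-probability lower bounds on the empirical gaps $\widehat{\mu}_{j}-\widehat{\mu}_{j+1}$. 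The upper bound $m_{n}=o\bigl(\sqrt{n}/\log n \wedge n^{(p/4-1)/2}\log^{p/8-1} n\bigr)$ is precisely the range in which the Jirak--Wahl concentration event and the Lanczos geometric-convergence bound hold simultaneously at the speed needed to make the residual in Step~1 negligible compared with $\varepsilon_{n}$, while $m_{n}\gtrsim\log n$ ensures that the EVGP side itself still contracts at rate $\varepsilon_{n}$. Putting the three pieces together yields the claimed contraction for both LGP and CGGP.
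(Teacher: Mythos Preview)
Your technical ingredients are correct---the Kaniel--Paige--Saad bounds, the relative perturbation results of \cite{JirakWahl2023RelativePerturbationBounds}, and the Krylov equivalence for CGGP are exactly what the paper uses---but the transfer mechanism you propose is not the one the paper employs, and as stated it has a gap.

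You plan to show that $\Psi^{\mathrm{L}}_{m_n\log n}$ is close to $\Psi^{\mathrm{EV}}_{m_n}$ by bounding the $L^{2}(G)$-deviation of the mean and covariance functions, and then to inherit contraction from the EVGP result. The problem is that closeness of mean and covariance in $L^{2}(G)$ does not by itself imply that the posterior mass outside the $M_n\varepsilon_n$-Hellinger ball transfers. You would need a quantitative coupling (total variation or Kullback--Leibler) between the two Gaussian measures, and you never specify one. Moreover, even if you bounded $\kl(\Psi^{\mathrm{L}},\Psi^{\mathrm{EV}})$, this does not plug into any available contraction-transfer lemma: the duality argument underlying Proposition~\ref{prp_ContractionOfApproximation} (from \cite{RaySzabo2019VariationalBayes}) requires $\kl(\nu_n,\Pi_n(\cdot\mid X,Y))$ with the \emph{true} posterior as the second argument, and KL does not satisfy a triangle inequality that would let you chain through $\Psi^{\mathrm{EV}}$.

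The paper instead bounds $\kl(\Psi^{\mathrm{L}}_{m_n\log n},\Pi_n(\cdot\mid X,Y))$ directly. Via Lemma~\ref{lem_KLBetweenPosteriorProcesses} this reduces to a finite-dimensional Gaussian KL, which is then decomposed (Equations~\eqref{eq_T_KLDDecomposition}--\eqref{eq_T_PreliminaryKLBound}) into an ``EV part'' depending only on $C^{\mathrm{EV}}_{m_n}$ and a ``Lanczos remainder'' governed by $D_m=C^{\mathrm{EV}}_{m_n}-C^{\mathrm{L}}_{m_n}$. The EV part is handled exactly as in the variational analysis; the remainder is controlled by $\|D_m\|_{\mathrm{HS}}$, which is where Proposition~\ref{prp_ProbabilisticBoundsForLanczosEigenpairs} and the Jirak--Wahl concentration enter. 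So the split you anticipate does appear, but inside a single KL bound to the true posterior rather than as a comparison of two approximate posteriors. A minor point: the lower bound $m_n\ge C'\log n$ is not what makes EVGP contract (that is condition~\eqref{eq_M_EigenvalueCondition}); it is needed so that the geometric factor $(1+c)^{-m_n}$ in $\|D_m\|_{\mathrm{HS}}$ kills the polynomial-in-$n$ prefactors arising in the remainder terms.
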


As discussed above, in view of 
Lemma \ref{lem_EmpiricalEigenvectorActionsAndVariationalBayes}, the eigenvector
version of Algorithm \ref{alg_GPApproximation} is equivalent to the empirical
spectral features inducing variables variational approach, hence the results of
Theorem  \ref{thm_ApproximateContractionRates} hold for this, idealized case as
well. 
We discuss this in more details in the following remark.

\begin{remark}[Relation to variational Bayes]
  \label{rem_VariationalBayesResult}
  \
  \begin{enumerate}[label=(\alph*)]
    \item Based on the equivalence in Lemma
      \ref{lem_EmpiricalEigenvectorActionsAndVariationalBayes} and the reasoning
      in \cite{NiemanEtal2022ContractionRates}, the
      same result as in Theorem \ref{thm_ApproximateContractionRates} holds for
      Algorithm \ref{alg_GPApproximation} based on the empirical eigenvector
      actions with \( \Psi_{ m_{n} } \) instead of \( \Psi_{ m_{n} \log n } \)
      requiring only
      Assumption \hyperref[ass_ConcentrationFunctionInequality]
                 {\normalfont \textbf{{\color{blue} (CFun)}}} 
      and condition \eqref{eq_M_EigenvalueCondition}. 
    \item Since due to Lemma \ref{lem_LGP}, the Lanczos version of Algorithm
      \ref{alg_GPApproximation} simply replaces the empirical eigenpairs with
      their approximate counterparts, the equivalence from Lemma
      \ref{lem_EmpiricalEigenvectorActionsAndVariationalBayes}
      implies that the result in Theorem \ref{thm_ApproximateContractionRates}
      can also be interpreted as a guarantee for a fully numerical version of
      the variational Bayes posterior.
  \end{enumerate}
\end{remark}

We also note, that compared to the empirical eigenvector actions the CG and
Lanczos algorithms require an additional multiplicative slowly varying factor. 
In the above theorem for simplicity we used a logarithmic factor, but this can
be further reduced. 
The reason for the larger iteration number is due to the approximation error of
the iterative algorithms for estimating the empirical eigenpairs. 
To achieve sufficient recovery of the space spanned by the first $m_n$
eigenvectors the CG and Lanczos methods need to be run a bit longer. 
This phenomena is also investigated in our numerical analysis in Section
\ref{sec_NumericalSimulations}. 
Finally, we show in the following remark that the above contraction rate results
cannot hold in general, for arbitrary policies in 
Algorithm \ref{alg_GPApproximation}.

\begin{remark}[Inconsistency example]
  \label{rem_InconsistencyExample}
  Theorem \ref{thm_ApproximateContractionRates}, does not hold in general for
  arbitrary policies, even if \( m \) is taken to be arbitrarily close (but not
  equal) to \( n \).  
  We demonstrate this in a simple example related to the empirical eigenvalues
  method.  
  Let us consider, for instance, the policies 
  \( s_{j}: = \widehat{u}_{j + 1} \), \( j \le m \). 
  Then even for \( m = n - 1 \), Algorithm \ref{alg_GPApproximation} (with this
  choice of policy) results in an inconsistent approximation for the posterior
  and inconsistent estimator for \( f_0 \), see Section
  \ref{sec_AuxiliaryResults} for details. 
  Hence, beside the pathological case  \( m = n \), the behaviour of Algorithm
  \ref{alg_GPApproximation} cannot be assessed in full generality but only in
  specific cases.
\end{remark}

\noindent In the following section, we show how 
Theorem \ref{thm_ApproximateContractionRates} translates to minimax optimal
convergence rates in standard settings.


\section{Examples}
\label{sec_Examples}

In order to demonstrate the applicability of our general contraction rate
theorem, we assume that \( \mathfrak{X} \) is given by \( \mathbb{R}^{d} \)
or \( [ 0, 1 ]^{d} \) for some \( d \in \mathbb{N} \) and consider random
series priors, where we endow the coefficients of an orthonormal basis \( (
\phi_{j} )_{ j \ge 1 } \) of \( L^{2}(G) \) with independent mean zero
Gaussian distributions resulting in a centered Gaussian process prior, i.e.,
\begin{align}
  \label{eq_E_RSP}
  F = \sum_{ j = 1 }^{ \infty } \sqrt{ \lambda_{j} } Z_{j} \phi_{j}
  \qquad \text{ with } \qquad 
  Z_{j} \sim N( 0, 1 ) \text{ i.i.d.}
\end{align}
and a non-negative, summable sequence \( ( \lambda_{j} )_{ j \ge 1 } \).
By Lemma 2.1 in \cite{GhosalvdVaart2017FundamentalsOfBayes}, \( F \) defines a
prior on \( L^{2}(G) \).
In the following, we assume that 
\begin{align}
  \label{eq_E_KLMomForRSP}
  \sup_{ j \ge 1 } \mathbb{E} | \phi_{j}( X_{1} )  |^{p} < \infty 
\end{align}
for \( p > 4 \) from Assumption \hyperref[ass_KarhunenLoeveMoments]
                                {\normalfont \textbf{{\color{blue} (KLMom)}}}.
By taking second moments, it can be checked that
\begin{align}
  k( x, x' ): = \sum_{ j = 1 }^{ \infty } 
                \lambda_{j} \phi_{j}(x) \phi_{j}( x' )
  \quad \text{and} \quad 
  k(x, x) = \sum_{ j = 1 }^{ \infty } 
            \lambda_{j} \phi_{j}(x)^{2},
  \qquad x, x' \in \mathfrak{X}
\end{align}
converge \( G^{ \otimes 2 } \)- and \( G \)-almost surely respectively.
By setting \( F \), \( k \) and \( ( \phi_{ j } )_{ j \ge 1 } \) to zero on
suitable nullsets, \( k \) defines the covariance kernel of the process 
\( F( x )_{ x \in \mathfrak{ X } } \) which has well defined point evaluations
as required in the setup of the model in Equation
\eqref{eq_I_GPRegressionModel}.
Finally, \( ( \lambda_{j}, \phi_{j} )_{ j \ge 1 } \) is the eigensystem of the
kernel operator \( T_{k} \) confirming that the condition Equation
\eqref{eq_E_KLMomForRSP} is in fact equivalent to Assumption
\hyperref[ass_KarhunenLoeveMoments]{\normalfont \textbf{{\color{blue}
(KLMom)}}} in this setting.

We investigate the two most common structure, i.e. when the variances
\( ( \lambda_j )_{j \in \mathbb{N}} \) of the Gaussian distributions are
polynomially or exponentially decaying. 

\subsection{Polynomially decaying eigenvalues}
\label{ssec_PolynomiallyDecayingEigenvalues}

First, we consider polynomially decaying coefficients 
\( ( \lambda_j )_{j \in \mathbb{N}} \), i.e. the functional parameter \( f \) is
endowed with the prior
  \begin{align}
  \label{eq_E_PolynomiallyDecayingPrior}
  F(x) = \sum_{ j = 1 }^{ \infty } 
           \tau j^{-1/2 - \alpha / d} Z_j \phi_j(x),
           \qquad x \in \mathfrak{X},
\end{align}
where \( \alpha > 0 \) and \( \tau \) are the regularity and scale
hyperparameters of the process, respectively. 
Such polynomially decaying eigenvalues 
\( \lambda_j \asymp \tau^2 j^{-1 - 2 \alpha / d} \) are quite standard. 
For instance, the popular Mat\'ern kernel or the fractional Brownian motion
possesses such eigenstructure, see
\cite{Seeger2007addendum,Bronski2003Asymptotics}.
The theoretical properties of the corresponding posterior is also well
investigated, see for instance Section 11.4.5 of
\cite{GhosalvdVaart2017FundamentalsOfBayes} or
\cite{KnapikEtal2011bayesian,SzaboEtal2013EmpirialBayes}.
Choosing the rescaling factor 
\( \tau = n^{ ( \alpha - \beta ) / ( 2 \beta + d ) } \) results
in rate optimal contraction rate when estimating Sobolev $\beta$-smooth
functions, i.e., 
\begin{align}
  f_0 \in S^{\beta}(L): = \{ f \in L^{2}(G): \| f \|_{ S^{ \beta } }^2 \le L \}
  \qquad \text{with} \qquad  
  \| f \|_{ S^{ \beta } }^2: = \sum_{ j = 1 }^{ \infty }
                               j^{2 \beta / d} \langle f, \phi_j \rangle^2.
\end{align}
We also take this optimal choice of the scaling parameter and show below that
the approximate posterior resulting from both the Lanczos iteration and
conjugate gradient descent algorithms achieve the minimax optimal contraction
rate if they are run for at least  $m_n\geq n^{d/(2\beta+d)}\log n$ iterations.
The proof of the corollary is deferred to Section
\ref{prf_PolynomiallyDecayingEigenvalues}.

\begin{corollary}[Polynomially decaying eigenvalues]
  \label{cor_PolynomiallyDecayingEigenvalues}
  Consider the non-parametric regression model \eqref{eq_I_GPRegressionModel}
  with \( f_0 \in S^{ \beta }(L) \cap L^{ \infty }(G) \), and the GP prior
  \eqref{eq_E_PolynomiallyDecayingPrior} with fixed regularity hyperparameter
  \(  0 < \alpha \) satisfying \( d / 2 < \beta \le \alpha + d / 2 \) and scale
  hyperparameter \( \tau = n^{ ( \alpha - \beta ) / ( 2 \beta + d ) } \).
  Then, as long as condition \eqref{eq_E_KLMomForRSP} is satisfied with 
  \( p > 4 + 8 d / ( 2 \beta + d ) \), both LGP and CGGP with iteration
  number \( m_n \ge n^{d / (2 \beta + d)} \log n \) achieve minimax posterior
  contraction rates, i.e., for arbitrary \( M_{n} \to \infty \),
  \begin{align*}
    \Psi_{m_n} \{ f: d_{ \normalfont \text{H} }( f, f_0 ) 
                  \ge M_n n^{-\beta/(d+2\beta)} | X,Y
               \} 
    \rightarrow 0,
  \end{align*}
  in probability under \( \mathbb{P}_{ f_{0} }^{ \otimes n } \) and \( n \to
  \infty \).
\end{corollary}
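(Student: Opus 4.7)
The plan is to reduce the claim to Theorem \ref{thm_ApproximateContractionRates} applied with $\varepsilon_n = n^{-\beta/(2\beta+d)}$ and $m_n \asymp n^{d/(2\beta+d)} \log n$. The eigenvalues of the kernel operator induced by the rescaled prior \eqref{eq_E_PolynomiallyDecayingPrior} are $\lambda_j \asymp \tau^2 j^{-1-2\alpha/d}$, so Assumption \textbf{(SPE)} (strict monotonicity), Assumption \textbf{(EVD)} (convexity, halving with constant $C = 2^{d/(d+2\alpha)}$, and the superexponential lower bound), and Assumption \textbf{(KLMom)} (directly from \eqref{eq_E_KLMomForRSP}) are immediate. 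What remains is to verify the concentration-function inequality \textbf{(CFun)}, the eigenvalue conditions \eqref{eq_M_EigenvalueCondition}, and the admissibility range of $m_n$ prescribed by the main theorem.

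For \textbf{(CFun)}, I would invoke the standard concentration-function calculation for rescaled random series priors with polynomially decaying variances, cf.\ Section 11.4 of \cite{GhosalvdVaart2017FundamentalsOfBayes} and \cite{vZantenvdVaart2008RatesForGPPriors}. The small-ball exponent satisfies $-\log \mathbb{P}(\|F\|_{L^2} < \varepsilon) \asymp (\varepsilon/\tau)^{-d/\alpha}$. For $f_0 \in S^\beta(L)$ with $\beta \le \alpha + d/2$, a truncation argument produces an $h \in \mathbb{H}$ with $\|h - f_0\|_{L^2} \le \varepsilon$ and $\|h\|_{\mathbb{H}}^2 \lesssim \tau^{-2}\varepsilon^{-(2\alpha+d-2\beta)/\beta}$. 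Balancing the two terms after inserting $\tau = n^{(\alpha-\beta)/(2\beta+d)}$ yields $\varphi_{f_0}(\varepsilon_n) \lesssim n\varepsilon_n^2$ at $\varepsilon_n = n^{-\beta/(2\beta+d)}$, as required.

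For the eigenvalue conditions, a direct computation gives $\sum_{j > m_n} \lambda_j \asymp \tau^2 m_n^{-2\alpha/d}$, which, after substituting the chosen $\tau$ and $m_n \gtrsim n^{d/(2\beta+d)}\log n$, is bounded by $\varepsilon_n^2 = n^{-2\beta/(2\beta+d)}$. Likewise, $\lambda_{m_n+1} \asymp \tau^2 m_n^{-1-2\alpha/d} \lesssim n^{-1}$, and the empirical-eigenvalue concentration results (the same Jirak--Wahl type bounds underlying Theorem \ref{thm_ApproximateContractionRates}) transfer this bound to $\mathbb{E}\widehat{\lambda}_{m_n+1} \lesssim n^{-1}$.

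Finally, one checks the admissible range for $m_n$ from Theorem \ref{thm_ApproximateContractionRates}. The bound $m_n = o(\sqrt{n}/\log n)$ reduces to $d/(2\beta+d) < 1/2$, i.e.\ $\beta > d/2$, which is standing; the bound $m_n = o(n^{(p/4-1)/2}\log^{p/8-1}n)$ rearranges to $p > 4 + 8d/(2\beta+d)$, which is the moment hypothesis of the corollary. Theorem \ref{thm_ApproximateContractionRates} then yields the claimed contraction at the minimax rate $\varepsilon_n = n^{-\beta/(2\beta+d)}$. The only genuinely technical step is the decentering half of \textbf{(CFun)}: one must split $f_0$ at an appropriately chosen frequency cutoff depending on $\alpha,\beta,\tau,\varepsilon$ and balance the $L^2$ tail of the high-frequency part against the RKHS norm of the retained low-frequency part; once this is done, the remainder of the corollary is pure arithmetic on the exponents.
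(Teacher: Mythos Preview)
Your proposal is essentially correct and follows the same route as the paper: verify \textbf{(SPE)}, \textbf{(EVD)}, \textbf{(KLMom)}, check \textbf{(CFun)} via the small-ball exponent $(\varepsilon/\tau)^{-d/\alpha}$ and a frequency-truncation bound on the decentering function, verify \eqref{eq_M_EigenvalueCondition}, and finally check that the required growth constraints on $m_n$ reduce to $\beta>d/2$ and $p>4+8d/(2\beta+d)$.

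One point of imprecision: for $\mathbb{E}\widehat\lambda_{m_n+1}\lesssim n^{-1}$ you appeal to the Jirak--Wahl relative perturbation bounds, but those are high-probability statements, not expectation bounds, and converting them to a bound on $\mathbb{E}\widehat\lambda_{m_n+1}$ would require separately controlling the contribution from the exceptional set. The paper instead invokes Lemma~4 of \cite{NiemanEtal2022ContractionRates}, which gives $\mathbb{E}\widehat\lambda_m\lesssim \tau^2 m^{-1-2\alpha/d}$ directly; with $\tau=n^{(\alpha-\beta)/(2\beta+d)}$ and $m_n\gtrsim n^{d/(2\beta+d)}$ this yields the required $n^{-1}$ bound. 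This is the cleaner tool here.
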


\begin{remark}[Rescaled hyperparameter \( \tau \).]
  \label{rem_RescaledHyperparameterTau.}
  We choose \( \tau = \tau_{n} \) depending on \( n \). 
  Consequently, the underlying population eigenvalues 
  \( ( \lambda_{j} )_{j \ge 1} = ( \lambda_{j}^{(n)} )_{j \ge 1}  \) will
  themselves depend on \( n \).
  Our results, however, remain applicable as long as
  Assumptions \hyperref[ass_SimplePopulationEigenvalues]
              {\normalfont \textbf{{\color{blue} (SPE)}}},
          and \hyperref[ass_EigenvalueDecay]
              {\normalfont \textbf{{\color{blue} (EVD)}}}
  remain satisfied for the \( ( \lambda_{j}^{(n)} )_{j \ge 1} \).
\end{remark}


\subsection{Exponentially decaying eigenvalues}
\label{ssec_ExponentiallyDecayingEigenvalues}

Next, we consider exponentially decaying eigenvalues for the covariance kernel. 
Since such kernel would result in infinitely smooth functions one has to
appropriately rescale the prior. 
Therefore we consider GP priors of the form
\begin{align}
  \label{eq_E_ExponentiallyDecayingPrior}
  F(x) = \sum_{ j = 1 }^{ \infty } e^{ - \tau j^{ 1/d }} Z_j \phi_j(x),
  \qquad x \in \mathfrak{X},
\end{align}
with scale parameter \( \tau > 0 \). 
We note that the highly popular squared exponential covariance kernel (with
respect to the standard Gaussian base measure) has similar, exponentially
decaying eigenstructure.  
The theoretical properties of the posterior associated to such priors are also
well studied in the literature, see for instance
\cite{PatiBhattacharya2015Adaptive,CastilloEtal2014ThomasBayes} for contraction
and \cite{HadjiSzabo2021BayesianUQ} for frequentist coverage of the credible
sets.
We note that in principle, one could also consider various extension of this
prior, for instance by allowing an additional multiplicative (space) scaling
factor \( \tau^q \) for some \( q \in \mathbb{R} \).
Our proof techniques could be extended to such cases as well in a
straightforward, but somewhat cumbersome manner. 
However, for simplicity of presentation we do not consider the most general
class one could cover here.

\begin{corollary}[Exponentially decaying eigenvalues]
  \label{cor_ExponentiallyDecayingEigenvalues}
  Consider the non-parametric regression model \eqref{eq_I_GPRegressionModel}
  with \( f_0 \in S^{ \beta }(L) \cap L^{ \infty }(G) \) and the random
  series prior \eqref{eq_E_ExponentiallyDecayingPrior} with scale hyperparameter
  \( \tau = n^{ -1 / ( 2 \beta + d ) } \log n \) and \( \beta > d / 2 \).
  Then as long as condition \eqref{eq_E_KLMomForRSP} is satisfied with 
  \( p > 4 + 8 d / ( 2 \beta + d ) \), both LGP and CGGP with iteration
  number \( m_n \ge n^{ d / ( d + 2 \beta ) } \log n \) achieve minimax
  posterior contraction rates (up to a logarithmic factor), i.e., for any
  sequence \( M_{n} \to \infty \), 
  \begin{align*}
    \Psi_{m_n} \{ f: d_{ \normalfont \text{H} }(f, f_0) \ge M_n
    n^{-\beta/(d+2\beta)}\log^{1/2} n|X,Y
    \} 
    \rightarrow 0,
  \end{align*}
  in probability under \( \mathbb{P}_{ f_{0} }^{ \otimes n } \) and 
  \( n \to \infty \). 
\end{corollary}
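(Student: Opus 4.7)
The plan is to apply Theorem \ref{thm_ApproximateContractionRates} with the rescaled population eigenvalues $\lambda_j^{(n)} = e^{-2\tau_n j^{1/d}}$ and $\tau_n = n^{-1/(2\beta+d)} \log n$, and to recover the rate $\varepsilon_n \asymp n^{-\beta/(2\beta+d)} \log^{1/2} n$ from the concentration function inequality. I would proceed in four steps, essentially mirroring the structure already used for the polynomial case in Corollary \ref{cor_PolynomiallyDecayingEigenvalues}.

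\emph{Step 1: structural assumptions.} Simplicity of $(\lambda_j^{(n)})$, i.e.\ \hyperref[ass_SimplePopulationEigenvalues]{\textbf{(SPE)}}, is immediate since the exponents $2\tau_n j^{1/d}$ are strictly increasing in $j$. For \hyperref[ass_EigenvalueDecay]{\textbf{(EVD)}}, I would note that $\lambda(x) = e^{-2\tau_n x^{1/d}}$ is convex on $[0,\infty)$ for every $d \ge 1$ (its second derivative equals $((g')^2 - g'')e^{-g}$ with $g(x) = 2\tau_n x^{1/d}$, which is non-negative), decreases to $0$, satisfies the doubling condition (ii) and the lower bound (iii) up to $n$-dependent constants of order $\log(1/\tau_n)^d$ and $\tau_n$ respectively; by Remark \ref{rem_RescaledHyperparameterTau.} this mild $n$-dependence is admissible. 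Assumption \hyperref[ass_KarhunenLoeveMoments]{\textbf{(KLMom)}} is inherited from \eqref{eq_E_KLMomForRSP} with $p > 4 + 8d/(2\beta+d) > 4$.

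\emph{Step 2: concentration function.} For the decentering term, truncate $f_0 \in S^\beta(L)$ at level $N = \lceil \varepsilon_n^{-d/\beta} \rceil$ to obtain $h = \sum_{j \le N} \langle f_0, \phi_j\rangle \phi_j$. Then $\|h - f_0\|_{L^2}^2 \lesssim N^{-2\beta/d} L \lesssim \varepsilon_n^2$ and $\|h\|_{\mathbb{H}}^2 \le e^{2\tau_n N^{1/d}} L$. A short calculation gives $\tau_n N^{1/d} \asymp \log^{1 - 1/(2\beta)} n$, which is dominated by $n\varepsilon_n^2 \asymp n^{d/(2\beta+d)} \log n$ when $\beta > d/2$. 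For the small ball exponent, standard results for Gaussian series with exponentially decaying variances yield $-\log \mathbb{P}(\|F\|_{L^2} < \varepsilon_n) \lesssim \tau_n^{-d} \log^{d+1}(1/(\tau_n \varepsilon_n))$, which again matches $n\varepsilon_n^2$. Combining these gives \hyperref[ass_ConcentrationFunctionInequality]{\textbf{(CFun)}} with the advertised rate.

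\emph{Step 3: eigenvalue condition and size of the iteration budget.} Set $\tilde m_n \asymp n^{d/(2\beta+d)}$ so that $\tilde m_n \log n \le m_n$ from the corollary's hypothesis. Then $\tau_n \tilde m_n^{1/d} \asymp \log^{1+1/d} n$, so $\sum_{j > \tilde m_n} \lambda_j^{(n)}$ and $\lambda_{\tilde m_n + 1}^{(n)}$ are both super-polynomially small; together with the spectral concentration bounds of Section \ref{sec_TechnicalAnalysis} based on \cite{JirakWahl2023RelativePerturbationBounds} this yields $\mathbb{E}\widehat{\lambda}_{\tilde m_n + 1} \lesssim n^{-1}$ and the tail bound of \eqref{eq_M_EigenvalueCondition}. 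The size constraint $C' \log n \le \tilde m_n = o(\sqrt{n}/\log n \wedge n^{(p/4-1)/2} \log^{p/8-1} n)$ holds because $d/(2\beta+d) < 1/2$ for $\beta > d/2$, and because the assumption $p > 4 + 8d/(2\beta+d)$ is exactly equivalent to $(p/4 - 1)/2 > d/(2\beta+d)$. Applying Theorem \ref{thm_ApproximateContractionRates} with $\tilde m_n$ and using $\Psi_{\tilde m_n \log n} = \Psi_{m_n}$ delivers the claim.

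The main obstacle I anticipate is Step 3: verifying \hyperref[ass_EigenvalueDecay]{\textbf{(EVD)}} and the eigenvalue concentration under the $n$-dependent scaling $\tau_n$, since the constants entering the Jirak--Wahl-type bounds and the doubling property in (EVD)(ii) are themselves mildly $n$-dependent. Once those dependencies are tracked through the technical machinery of Section \ref{sec_TechnicalAnalysis}, the remaining pieces are essentially bookkeeping and direct substitution.
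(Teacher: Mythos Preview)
Your overall plan matches the paper's: verify the hypotheses of Theorem~\ref{thm_ApproximateContractionRates} with the rescaled eigenvalues and then read off the rate. Two points deserve sharpening, and the paper handles them differently from your sketch.

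\textbf{The (EVD) issue.} You write that \hyperref[ass_EigenvalueDecay]{\textbf{(EVD)}}(ii) holds ``up to $n$-dependent constants'' and appeal to Remark~\ref{rem_RescaledHyperparameterTau.}. This is too optimistic: with $\tau_n\to 0$, for any fixed $C>1$ and fixed $j$ one has $\lambda(Cj)/\lambda(j)=\exp\bigl(-2\tau_n((Cj)^{1/d}-j^{1/d})\bigr)\to 1$, so the doubling property genuinely fails. The paper does \emph{not} claim (EVD) holds; it observes instead that the only place (EVD)(ii) is invoked in the machinery (namely to pick $m_0$ with $\lambda_{m_0}\le\lambda_m/2$ in Proposition~\ref{prp_RelativePerturbatonBounds}) is at the single scale $j=m_0\asymp n^{d/(2\beta+d)}\log n$, and there $\tau_n m_0^{1/d}\gtrsim(\log n)^{1+1/d}\to\infty$ so the doubling holds with a uniform constant. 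Your ``main obstacle'' paragraph correctly flags this, but the resolution is not tracking $n$-dependent constants through the proofs; it is noticing that only one specific instance of (EVD)(ii) is ever used.

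\textbf{The expectation bound $\mathbb{E}\widehat\lambda_{m_n+1}\lesssim n^{-1}$.} You propose to obtain this from the Jirak--Wahl concentration bounds. Those give high-probability relative perturbation inequalities, not expectation bounds, and converting one to the other would require uniform integrability control you have not established. The paper instead uses the Shawe--Taylor--Williams partial-trace inequality (Proposition~\ref{prp_ExpectationOfTheTrace}) together with a pigeonhole argument: one finds an index $\underline m<m$ with $\lambda_{\underline m}\asymp n^{-1}$ and shows that some $i\in\{\underline m,\dots,m\}$ must satisfy $\mathbb{E}\widehat\lambda_i\le C\lambda_i$, whence $\mathbb{E}\widehat\lambda_{m+1}\le\mathbb{E}\widehat\lambda_i\lesssim n^{-1}$. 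This is both shorter and avoids the concentration machinery altogether.

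Two minor remarks: your computation $\tau_n\tilde m_n^{1/d}\asymp\log^{1+1/d}n$ should read $\asymp\log n$ (harmless for your argument), and your small-ball estimate via ``standard results'' is fine but the paper gives a self-contained derivation by splitting the series at $J=n^{d/(2\beta+d)}$.
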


\noindent The proof of the corollary is deferred to Section
\ref{prf_ExponentiallyDecayingEigenvalues}.

\begin{remark}
  The logarithmic factor in the contraction rate is an artifact of the proof technique based on the
  concentration inequality
  \hyperref[ass_ConcentrationFunctionInequality]
           {\normalfont \textbf{{\color{blue} (CFUN)}}}. 
  One can achieve minimax posterior contraction rates using kernel ridge
  regression techniques, see for instance Corollary 12 of \cite{Nieman2023UncertaintyQuantification} 
  (with \( m \) taken to be equal to \( n \) to get back the full posterior).
  However, in this case only a polynomially decaying
  upper bound is given for the expectation of the posterior mass outside of the
  \( M_n n^{ - \beta / ( d + 2 \beta ) } \)-radius ball centered at \( f_0 \). 
  This, however, does not permit the use of Proposition
  \ref{prp_ContractionOfApproximation}, see also Theorem 5 in
  \cite{RaySzabo2019VariationalBayes}, requiring exponential upper bounds for
  this probability on a large enough event.
\end{remark}



\section{Technical analysis}
\label{sec_TechnicalAnalysis}

\subsection{Approximate contraction via Kullback-Leibler bounds}
\label{ssec_ApproximateContractionViaKullbackLeiblerBound}

In view of \cite{RaySzabo2019VariationalBayes}, sufficiently controlling the
Kullback-Leibler divergence between the posterior and the approximating measure
results in the same contraction rate for the approximation as for the original
posterior. 
In the following proposition, we slightly reformulate their result adapted to
our setting.
For completeness, a proof is in Appendix \ref{prf_ContractionOfApproximation}.

\begin{proposition}[Contraction of approximation]
  \label{prp_ContractionOfApproximation}
  Under the assumptions of Proposition \ref{prp_StandardContractionRate}, let 
  \( ( \nu_{n} )_{ n \in \mathbb{N} } \) be a sequence of distributions such
  that for any sequence \( M_{n}' \to \infty \), there exist events \( A_{n}' \)
  such that 
  \begin{align}
        \kl( \nu_{n}, \Pi_{n}( \cdot | X, Y ) ) \mathbf{1}_{ A_{n}' }
    \le 
        n M_{n}'^{2} \varepsilon_{n}^{2} 
    \qquad \text{ and } \qquad 
    \mathbb{P}^{ \otimes n }_{ f_{0} }( A_{n}' ) \to 1.
  \end{align}
  Then, for all sequences \( M_{n} \to \infty \) 
  \begin{align}
    \nu_{n} \{     d_{ \normalfont \text{H} }( \cdot, f_{0} ) 
               \ge M_{n} \varepsilon_{n} 
            \} 
    \to 0
  \end{align}
  in probability under \( \mathbb{P}_{ f_{0} }^{ \otimes n } \) and 
  \( n \to \infty \). 
\end{proposition}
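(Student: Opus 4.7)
The plan is to combine Proposition~\ref{prp_StandardContractionRate} with the classical Kullback-Leibler change-of-measure inequality
\begin{align*}
\nu(B) \le \frac{\log 2 + \kl(\nu,\pi)}{-\log \pi(B)},
\end{align*}
which holds for any probability measures $\nu,\pi$ on a common space and any measurable event $B$ with $\pi(B)\in(0,1)$; this is a direct consequence of the Donsker-Varadhan variational representation of $\kl$.

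Fix an arbitrary sequence $M_n\to\infty$ and a tolerance $\delta>0$, and set $B_n := \{f : d_{\text{H}}(f, f_0) \ge M_n \varepsilon_n\}$. The quantifier ``for every $M_n'\to\infty$'' in the hypothesis is equivalent (by a standard subsequence argument) to the tightness statement $\kl(\nu_n,\Pi_n(\cdot|X,Y)) = O_{\mathbb{P}}(n\varepsilon_n^2)$ under $\mathbb{P}^{\otimes n}_{f_0}$, so there exist a constant $L = L(\delta) > 0$ and an event $E_n^{(1)}$ of eventual probability at least $1-\delta/3$ on which $\kl(\nu_n,\Pi_n) \le L n\varepsilon_n^2$. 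I would then apply Proposition~\ref{prp_StandardContractionRate} with the \emph{constant} $C_2 := 4L/\delta$ (permissible because $C_2$ may be chosen freely there), which together with Markov's inequality produces a further event $E_n^{(2)}$ of probability $\to 1$ on which $\Pi_n(B_n|X,Y) \le \exp(-C_2 n\varepsilon_n^2/2)$. On the intersection $A_n \cap E_n^{(1)} \cap E_n^{(2)}$, which has probability at least $1-\delta$ eventually, the change-of-measure bound then yields
\begin{align*}
\nu_n(B_n) \le \frac{\log 2 + L n\varepsilon_n^2}{C_2 n\varepsilon_n^2 / 2}
           = \frac{2L}{C_2} + o(1)
           = \frac{\delta}{2} + o(1) \le \delta
\end{align*}
for $n$ large; since $\delta > 0$ is arbitrary, $\nu_n(B_n) \to 0$ in probability follows.

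The main subtlety is the calibration between $L$ and $C_2$: the hypothesis only provides $O_{\mathbb{P}}$-control on the KL divergence, so one cannot feed a growing sequence $M_n'\to\infty$ directly into the change-of-measure bound (the resulting ratio $M_n'^2/C_2$ would blow up for any constant $C_2$). One must instead first collapse the hypothesis to a fixed bound $Ln\varepsilon_n^2$ (absorbing a $\delta$-dependent exceptional probability) and only afterwards invoke Proposition~\ref{prp_StandardContractionRate} with a constant $C_2$ much larger than $L$. The freedom to take $C_2$ as an arbitrarily large \emph{constant} in Proposition~\ref{prp_StandardContractionRate} is therefore essential and is what makes the $O_{\mathbb{P}}$-hypothesis on the KL divergence sufficient.
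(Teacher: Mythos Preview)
Your proof is correct and rests on the same core ingredient as the paper---a Donsker--Varadhan change-of-measure argument combining the KL control on $\nu_n$ with the exponential posterior bound from Proposition~\ref{prp_StandardContractionRate}---but the calibration is genuinely different. The paper takes $M_n' := \sqrt{M_n}$, applies the duality formula with test function $g = n M_n^2 \varepsilon_n^2 \mathbf{1}_{\mathcal{F}_n}$, and uses a posterior bound of the form $\exp(-C_2 n M_n^2 \varepsilon_n^2)$ (with $M_n^2$ in the exponent, a standard strengthening though slightly beyond what Proposition~\ref{prp_StandardContractionRate} literally states); after dividing through, the dominant term is $M_n'^2/M_n^2 = 1/M_n \to 0$. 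You instead first collapse the hypothesis to $\kl = O_{\mathbb{P}}(n\varepsilon_n^2)$, fix a $\delta$-dependent constant $L$, and then invoke Proposition~\ref{prp_StandardContractionRate} with the \emph{constant} $C_2 = 4L/\delta$. This route avoids any need for the $M_n^2$-exponent and matches Proposition~\ref{prp_StandardContractionRate} exactly as written, at the price of an extra $\delta$-quantifier at the end. Both arguments are clean; yours is arguably more self-contained given the precise form of Proposition~\ref{prp_StandardContractionRate} in the paper.
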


In order to derive Theorem \ref{thm_ApproximateContractionRates} via Proposition
\ref{prp_ContractionOfApproximation}, we need to bound the Kullback-Leibler
divergence between the approximate posterior \( \Psi_{m} \) and 
\( \Pi_{n}( \cdot | X, Y ) \).
Conveniently, it can be shown that the Kullback-Leibler divergence between the
measures on the function space \( L^{2}(G) \) coincides with the
Kullback-Leibler divergence between the finite dimensional Gaussians at the
design points, see Lemma \ref{lem_KLBetweenPosteriorProcesses}.
Therefore, we obtain
\begin{align}
  \label{eq_T_KLDDecomposition}
          2 \kl( \Psi_{m}, \Pi_{n}( \cdot | X, Y ) ) 
  & = 
          2 \kl( N( K K_{ \sigma }^{-1} Y, K - K K_{ \sigma }^{-1} K ),
                 N( K C_{m} Y,             K - K C_{m}             K ) ) 
  \\
  & = 
          \tr ( K - K K_{ \sigma }^{-1} K )^{-1} ( K - K C_{m} K ) - n 
  \notag
  \\
  & + 
          Y^{ \top } ( K_{ \sigma }^{-1} - C_{m} ) K 
                     ( K - K K_{ \sigma }^{-1} K )^{-1}
                     K ( K_{ \sigma }^{-1} - C_{m} ) Y
  \notag
  \\
  & + 
          \log \det ( [ K - K C_{m} K )^{-1} [ K - K K_{ \sigma }^{-1} K ] )
  \notag
  \\
  & =:
          (\text{I}) + (\text{II}) + (\text{III}).
  \notag
\end{align}
Since \( K_{ \sigma }^{-1} - C_{m} = \Gamma_{m} \ge 0 \), we have 
\( K - K C_{m} K \ge K - K K_{ \sigma }^{-1} K \).
This implies that the \( \log \)-determinant in the third term is negativ.
Setting 
\( 
  C_{m}^{ \text{EV} } = \sum_{ j = 1 }^{m} 
                        ( \widehat{ \mu }_{j} + \sigma^{2} )^{-1} 
                        \widehat{u}_{j} \widehat{u}_{j}^{ \top } 
\),
we can bound the remainder via
\begin{align}
  \label{eq_T_PreliminaryKLBound}
        ( \text{I} ) + ( \text{II} )
  & =
        \tr ( K - K K_{ \sigma }^{-1} K )^{-1} ( K - K C_{m} K ) - n 
      + 
        \| ( K_{ \sigma }^{-1} - C_{m} ) Y 
        \|_{ K ( K - K K_{ \sigma }^{-1} K )^{-1} K }^{2}
  \notag
  \\
  & \le 
        \tr ( K - K K_{ \sigma }^{-1} K )^{-1}          K 
            ( K_{ \sigma }^{-1} - C^{ \text{EV} }_{m} ) K
      + 
        2 \| ( K_{ \sigma }^{-1} - C^{ \text{EV} }_{m} ) Y 
          \|_{ K ( K - K K_{ \sigma }^{-1} K )^{-1} K }^{2}
  \notag
  \\
  & + 
        \tr ( K - K K_{ \sigma }^{-1} K )^{-1} K
            ( C^{ \text{EV} }_{m} - C_{m} )    K
      + 
        2 \| ( C_{m} - C^{ \text{EV} }_{m} ) Y 
          \|_{ K ( K - K K_{ \sigma }^{-1} K )^{-1} K }^{2},
\end{align}
where \( \| \cdot \|_{A} \) denotes the norm induced by the dot-product 
\( \langle \cdot, A \cdot \rangle \).
The two terms depending only on \( C_{m}^{ \text{EV} } \) will be
straightforward to analyze, since all relevant matrices are jointly
diagonalizable with respect to the true empirical projectors 
\( ( \widehat{u}_{j} )_{ j \le n } \).
The remainder crucially depends on the difference 
\( C_{m} - C_{m}^{ \text{EV} } \).
In case of the Lanczos posterior, it is given by
\begin{align}
  \label{eq_T_CDifferenceLVsEV}
    C_{m}^{ \text{L} } - C^{ \text{EV} }_{m} 
  =
    \sum_{ j = 1 }^{m} 
    \Big( \frac{1}{ \tilde \mu_{j} + \sigma^{2} } 
          \tilde u_{j} \tilde u_{j}^{ \top } 
        - 
          \frac{1}{ \widehat{ \mu } + \sigma^{2} } 
          \widehat{u}_{j} \widehat{u}_{j}^{ \top }
    \Big).
\end{align}
In Section \ref{ssec_AnalysisOfTheLanczosApproximatePosterior}, we develop a
rigorous analysis of the Lanczos algorithm that allows us to treat this
difference, resulting in the following Kullback-Leibler bound.

\begin{proposition}[Kullback-Leibler bound]
  \label{prp_KullbackLeiblerBound}
  Under Assumptions
  \hyperref[ass_SimplePopulationEigenvalues]
           {\normalfont \textbf{{\color{blue} (SPE)}}},
  \hyperref[ass_EigenvalueDecay]{\normalfont \textbf{{\color{blue} (EVD)}}}, and
  \hyperref[ass_KarhunenLoeveMoments]
           {\normalfont \textbf{{\color{blue} (KLMom)}}},
  let \( f_{0} \in \overline{ \mathbb{H} } \cap L^{ \infty }(G) \) satisfy the
  concentration function inequality from Assumption
  \hyperref[ass_ConcentrationFunctionInequality]
           {\normalfont \textbf{{\color{blue} (CFUN)}}}
  for a sequence \( \varepsilon_{n} \to 0 \) with 
  \( n \varepsilon_{n}^{2} \to \infty \).
  Additionally, let \( m_{n} \) be a sequence that satisfies 
  \( 
    C' \log n \le m_{n} 
                = o( ( \sqrt{n} / \log n ) \land 
                     ( n^{ ( p/4 - 1 ) / 2 } ( \log n )^{ p/8 - 1 } ) )
  \)
  for some \( C' > 0 \) sufficiently large and consider the Lanczos algorithm
  \ref{alg_Lanczos} iterated for \( m_{n} \log n \) steps initialized at 
  \( v_{0} \in \{ Y / \| Y \|, Z / \| Z \| \} \), where \( Z \) is a 
  \( n \)-dimensional standard Gaussian.
  Then, for any sequence \( M_{n} \to \infty \), the approximate posterior 
  \( \Psi_{m} \) from Algorithm \ref{alg_GPApproximation} based on 
  \( m = m_{n} \log n \) Lanczos actions satisfies the bound
  \begin{align}
            \kl( \Psi_{ m_{n} \log n }, \Pi_{n}( \cdot | X, Y ) ) 
    & \le 
            \frac{ M_{n} n }{ \sigma^{2} } 
            \Big( 
              \varepsilon_{n}^{2}
            + 
              \sum_{ j = m_{n} + 1 }^{ \infty } \lambda_{j}
            + 
              n \varepsilon_{n}^{2} \mathbb{E} \widehat{ \lambda }_{ m_{n} + 1 } 
            \Big) 
  \end{align}
  with probability converging to one under 
  \( \mathbb{P}_{ f_{0} }^{ \otimes n } \) and \( n \to \infty \).
\end{proposition}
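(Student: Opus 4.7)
My approach follows the decomposition \eqref{eq_T_KLDDecomposition} of $2\,\kl(\Psi_m,\Pi_n(\cdot|X,Y))$. Since $K_\sigma^{-1}-C_m^{\text{L}}=\Gamma_m\ge 0$, the log-determinant term $(\text{III})$ is non-positive and can be dropped, leaving the four-term upper bound \eqref{eq_T_PreliminaryKLBound}. I would split the analysis into two groups: the two \emph{idealised} terms, depending only on the empirical-eigenvector approximation $C_m^{\text{EV}}$, and the two \emph{Lanczos correction} terms, involving $C_m^{\text{L}}-C_m^{\text{EV}}$. The former encode the statistical cost of an oracle eigenvector algorithm, while the latter are the genuine numerical error introduced by the iteration.

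For the idealised terms, Lemma \ref{lem_EVGP} gives $K_\sigma^{-1}-C_m^{\text{EV}}=\sum_{j>m}(\widehat{\mu}_j+\sigma^2)^{-1}\widehat{u}_j\widehat{u}_j^\top$, and since $K$, $K_\sigma^{-1}$ and $C_m^{\text{EV}}$ are simultaneously diagonalisable in $(\widehat{u}_j)_{j\le n}$, both the trace and the $Y$-quadratic form reduce to scalar sums over $j>m_n$. Writing $Y=\mathbf{f}_0+\varepsilon$, applying the concentration function inequality \textbf{{\color{blue} (CFun)}} to the $\mathbf{f}_0$-part, a $\chi^2$-tail bound to $\varepsilon$, and using the relative eigenvalue concentration of \cite{JirakWahl2023RelativePerturbationBounds} to replace $\widehat{\mu}_j$ with $n\lambda_j$ up to constants, one obtains a bound of the claimed order $\varepsilon_n^2+\sum_{j>m_n}\lambda_j+n\varepsilon_n^2\,\mathbb{E}\widehat{\lambda}_{m_n+1}$. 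By Lemma \ref{lem_EmpiricalEigenvectorActionsAndVariationalBayes}, this essentially reproduces the analysis of \cite{NiemanEtal2022ContractionRates} carried out for the variational approach.

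The main obstacle is the control of the Lanczos correction terms. By \eqref{eq_T_CDifferenceLVsEV}, these are governed by how closely $(\tilde{\mu}_j,\tilde{u}_j)_{j\le m_n}$ reproduces $(\widehat{\mu}_j,\widehat{u}_j)_{j\le m_n}$ after $m_n\log n$ iterations. I would first invoke the classical Lanczos convergence bounds (Theorems \ref{thm_LanczosEigenvalueBound} and \ref{thm_LanczosEigenvectorBound}) to express these errors via Chebyshev polynomial contraction factors divided by the empirical spectral gaps $\widehat{\mu}_j-\widehat{\mu}_{j+1}$; the extra $\log n$ factor in the iteration budget is exactly what allows the exponential Chebyshev contraction to overcome the inverse gaps. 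Because the empirical gaps are themselves random, the second, and crucial, step is to translate them into the deterministic population gaps $\lambda_j-\lambda_{j+1}$ by means of the relative perturbation bounds of \cite{JirakWahl2023RelativePerturbationBounds}. This is where Assumptions \textbf{{\color{blue} (SPE)}}, \textbf{{\color{blue} (EVD)}} and \textbf{{\color{blue} (KLMom)}}, together with the growth restriction $m_n=o(\sqrt{n}/\log n\wedge n^{(p/4-1)/2}\log^{p/8-1}n)$, all enter simultaneously, and I expect this step---converting gap-dependent Lanczos bounds into deterministic population bounds with the stated sharp iteration budget---to be by far the most delicate. Once it is in place, the Kullback--Leibler cost of the Lanczos step is at most of the same order as the idealised cost.

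Finally, I would combine the four pieces on the intersection of the high-probability events supplied by \cite{JirakWahl2023RelativePerturbationBounds} together with a Gaussian concentration event for the quadratic forms in $\varepsilon$, absorbing any remaining polylogarithmic slack into the slowly diverging factor $M_n$. The non-degeneracy of the Lanczos starting vector $v_0\in\{Y/\|Y\|,Z/\|Z\|\}$, needed for the Kaniel--Saad-type estimates to apply at every index up to $m_n$, follows for $Z$ by a standard Gaussian overlap argument and for $Y$ from $f_0\in\overline{\mathbb{H}}\cap L^\infty(G)$ combined with \textbf{{\color{blue} (KLMom)}}. This yields the stated bound $(M_n n/\sigma^2)(\varepsilon_n^2+\sum_{j>m_n}\lambda_j+n\varepsilon_n^2\,\mathbb{E}\widehat{\lambda}_{m_n+1})$ on an event of probability tending to one under $\mathbb{P}_{f_0}^{\otimes n}$.
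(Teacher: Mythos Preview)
Your plan matches the paper's proof closely: the same decomposition \eqref{eq_T_PreliminaryKLBound}, the same split into the $C_m^{\text{EV}}$-part (handled by joint diagonalisation and the $\mathbf{f}_0=\mathbf{h}+(\mathbf{f}_0-\mathbf{h})$ splitting via \textbf{{\color{blue}(CFun)}}) and the $C_m^{\text{L}}-C_m^{\text{EV}}$-part (handled by Proposition~\ref{prp_ProbabilisticBoundsForLanczosEigenpairs}, i.e.\ Kaniel--Saad plus Jirak--Wahl). Two small points are worth tightening. First, the statement concerns $\Psi_{m_n\log n}$, so a priori the correction term is $C_{\tilde m}^{\text{L}}-C_{\tilde m}^{\text{EV}}$ with $\tilde m=m_n\log n$ summands, and the Lanczos bounds give no control of the last $\tilde m-m_n$ of them; the paper disposes of this by a preliminary monotonicity step, using that $\Gamma_m K_\sigma$ is a $K_\sigma$-orthogonal projection to show $\kl(\Psi_{\tilde m},\Pi_n(\cdot|X,Y))\le\kl(\Psi_{m_n},\Pi_n(\cdot|X,Y))$, after which only the first $m_n$ eigenpairs enter. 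Second, for the idealised terms you cannot use Jirak--Wahl to replace $\widehat{\lambda}_j$ by $\lambda_j$ in the tail $\sum_{j>m_n}\widehat{\lambda}_j$, since the relative bounds apply only up to index $m_0\asymp m_n\log n$; the paper instead uses Markov's inequality together with $\mathbb{E}\sum_{j>m_n}\widehat{\lambda}_j\le\sum_{j>m_n}\lambda_j$ (Proposition~\ref{prp_ExpectationOfTheTrace}) and $\widehat{\lambda}_{m_n+1}\le M_n\,\mathbb{E}\widehat{\lambda}_{m_n+1}$, which is both simpler and what actually produces the $\mathbb{E}\widehat{\lambda}_{m_n+1}$ appearing in the final bound.
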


\noindent The proof of Proposition \ref{prp_KullbackLeiblerBound} is deferred to
Appendix \ref{prf_KullbackLeiblerBound}.
Propositions \ref{prp_ContractionOfApproximation} and
             \ref{prp_KullbackLeiblerBound} then together
imply Theorem \ref{thm_ApproximateContractionRates} for the Lanczos version of
Algorithm \ref{alg_GPApproximation}.

Since the conjugate gradient actions span the same Krylov space as the Lanczos
actions,  the CG version of Algorithm \ref{alg_GPApproximation} obtains the same
upper bound for the KL divergence as in Proposition
\ref{prp_KullbackLeiblerBound}.
We formalize this statement in the following corollary, which is proven in
Appendix \ref{prf_EquivalenceBetweenLGPandCGGP}.

\begin{corollary}[Equivalence of LGP and CGGP]
  \label{cor_EquivalenceBetweenLGPandCGGP}
  For any integer \( m \ge 1 \), the approximate posterior from Algorithm
  \ref{alg_GPApproximation} based on \( m \) CG-actions is identical to the one
  resulting in from the Lanczos iteration with $m$ steps and starting value
  $v_0=Y / \| Y \|$.
  Consequently, the bound from Proposition \ref{prp_KullbackLeiblerBound} also
  holds for the CG-approximate posterior under the same conditions.
\end{corollary}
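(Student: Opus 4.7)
The plan is to reduce the equivalence statement to a single fact: the precision matrix approximation $C_m$ produced by Algorithm \ref{alg_GPApproximation} depends only on the linear span of the policies $(s_j)_{j \le m}$, not on the particular vectors chosen within that span. Once this span-invariance is in hand, it suffices to check that both the CG directions $(d_j^{\text{CG}})_{j \le m}$ and the Lanczos approximate eigenvectors $(\tilde u_j)_{j \le m}$ (with $v_0 = Y/\|Y\|$) have cumulative span equal to the Krylov subspace $\mathcal{K}_m(K, Y) = \vspan\{Y, KY, \dots, K^{m-1}Y\}$.

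First I would establish the span-invariance. A straightforward induction on the update rule $d_j = (I - C_{j-1} K_\sigma) s_j$ shows that $d_j$ is exactly the $K_\sigma$-Gram-Schmidt orthogonalization of $s_j$ against $d_1, \dots, d_{j-1}$, so $\vspan\{d_1, \dots, d_j\} = \vspan\{s_1, \dots, s_j\}$ at every step. Combined with the fact recalled in Section \ref{ssec_AGeneralClassOfAlgorithms} that $C_m K_\sigma$ is the $K_\sigma$-orthogonal projection onto $\vspan\{s_j : j \le m\}$ (Lemma S1 of \cite{WengerEtal2022ComputationalUncertainty}), this yields $C_m = C_m(\vspan\{s_j : j \le m\})$, depending only on the span.

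Next I would verify the two span identifications. For CG started at $w_0 = 0$ applied to $K_\sigma w = Y$, classical analysis of the algorithm (e.g.\ the Gram-Schmidt on gradients described in Section \ref{sssec_ConjugateGradientPosterior}) gives $\vspan\{d_j^{\text{CG}} : j \le m\} = \mathcal{K}_m(K_\sigma, Y)$. Because $K_\sigma = K + \sigma^2 I$ is a degree-one shift, $K_\sigma^j Y$ is a polynomial of degree $\le j$ in $K$ applied to $Y$ and vice versa, so $\mathcal{K}_m(K_\sigma, Y) = \mathcal{K}_m(K, Y)$. For the Lanczos policies $\tilde u_j$, by construction they are the eigenvectors of $V V^\top K V V^\top$ with $V \in \mathbb{R}^{n \times m}$ having orthonormal columns spanning $\mathcal{K}_m(K, v_0)$, hence they form another basis of $\ran(V) = \mathcal{K}_m(K, Y)$ when $v_0 = Y/\|Y\|$.

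Putting the two observations together, the CG and Lanczos variants of Algorithm \ref{alg_GPApproximation} produce the same $C_m$, and therefore identical mean and covariance functions in Equation \eqref{eq_APPN_MathematicalAndComputationalUncertainty}, so the approximate posteriors $\Psi_m^{\text{CG}}$ and $\Psi_m^{\text{L}}$ coincide. The KL bound of Proposition \ref{prp_KullbackLeiblerBound} then transfers without change. The main subtlety I would anticipate is ensuring that the Krylov dimension does not collapse before step $m$, so that the CG directions and the Lanczos vectors are indeed linearly independent and the cumulative span actually reaches $\mathcal{K}_m(K, Y)$; but $K_\sigma$-conjugacy of the $d_j^{\text{CG}}$ rules out early termination on the CG side, and the same breakdown condition governs the Lanczos side, so both occur together and the equivalence of $C_m$'s is preserved up to (and including) the terminal step.
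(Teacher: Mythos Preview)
Your proposal is correct and follows essentially the same approach as the paper: both arguments rest on the span-invariance of $C_m$ (via the fact that $C_m K_\sigma$ is the $K_\sigma$-orthogonal projection onto $\vspan\{s_j : j \le m\}$, Lemma~S1 of \cite{WengerEtal2022ComputationalUncertainty}) and then identify both the CG span and the Lanczos span with the same Krylov space $\mathcal{K}_m(K, Y)$, using that the shift $K_\sigma = K + \sigma^2 I$ leaves Krylov spaces invariant. Your additional Gram--Schmidt induction and the remark on simultaneous breakdown are correct but not needed beyond what the projection property already gives.
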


\begin{remark}[CGGP as an approximation of variational Bayes]
  \label{rem_CGAsVariationalBayes}
  In settings in which the numerical inversion of \( K_{ \sigma } \) is
  infeasible, one of the standard approaches in order to compute the posterior
  mean is to apply CG to \( K_{ \sigma } w = Y \),
  see \cite{PleissEtal2018ConstantTimePredictiveDistributions} and
      \cite{WangEtal2019ExactGPs}.
  There, CGGP is often interpreted as an exact version of the posterior with a
  preset tolerance level.
  Corollary \ref{cor_EquivalenceBetweenLGPandCGGP}, however, provides a new
  interpretation for this approach:
  EVGP is equivalent to the variational Bayes algorithm based on empirical
  spectral features inducing variables,
  see \cite{Titsias2009InducingVariablesVB,burt2019rates} and 
  Lemma \ref{lem_EmpiricalEigenvectorActionsAndVariationalBayes}.
  Since LGP is a numerical approximation of EVGP and LGP is equivalent to CGGP,
  the conjugate gradient algorithm can also be interpreted as an implicit
  implementation of a specific variational Bayes method.
  Given its numerical advantages, in many circumstances, CGGP may therefore even
  be preferable to an explicit implementation of a variational procedure.
\end{remark}


\subsection{Analysis of the Lanczos approximate posterior}
\label{ssec_AnalysisOfTheLanczosApproximatePosterior}

In this section, we analyze the application of the Lanczos algorithm to the
kernel matrix \( K \) in our probabilistic setting.
In the following, most of the results will apply to the first \( m = m_{n} \)
eigenvalues or eigenvectors under the assumption that we use a Krylov space
\begin{align}
      \mathcal{K}_{ \tilde m }:
  =
      \vspan \{ v_{0}, K v_{0}, \dots, K^{ \tilde m - 1 } v_{0} \}, 
  \qquad \text{ with } \qquad 
  \| v_{0} \| = 1 
\end{align}
and \( \tilde m = m_n \log n \) in the Lanczos algorithm.
For notational convenience, we will also use the normalized kernel matrix 
\( A: = n^{-1} K \), i.e. we consider the empirical eigenpairs 
\( ( \widehat{ \lambda }_{j}, \widehat{u}_{j} )_{ j \le m } \) and their Lanczos
counterparts \( ( \tilde \lambda_{j}, \tilde u_{j} )_{ j \le m } \) of this normalized matrix.  
In Appendix \ref{prf_KrylovSpaceDimension}, we prove that under the following
assumption, the Krylov space \( K_{ \tilde m } \) has full dimension.

\begin{enumerate}
  \item [{\color{blue} (A5)}] \textbf{{\color{blue} (LWdf)}:} 
    \label{ass_LanczosWelldefined} 
    The eigenvalues of \( A = n^{-1} K \) satisfy 
    \( \widehat{ \lambda }_{1} > \dots > \widehat{ \lambda }_{ \tilde m } > 0 \)
    and \( \langle v_{0}, \widehat{u}_{j} \rangle \ne 0 \) for all
    \( j \le \tilde m \), where \( v_{0} \) is the initial vector of the Lanczos
    algorithm.
\end{enumerate}

\begin{lemma}[Krylov space dimension]
  \label{lem_KrylovSpaceDimension}
  Under Assumption 
  \hyperref[ass_LanczosWelldefined]{\normalfont \textbf{{\color{blue} (LWdf)}}},
  \( \dim \mathcal{K}_{ \tilde m } = \tilde m \).
\end{lemma}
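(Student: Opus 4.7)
The plan is to reduce the linear independence of the Krylov vectors to a polynomial/Vandermonde argument via the eigendecomposition of $A$. First I would note that $\mathcal{K}_{\tilde m} = \vspan\{v_0, A v_0, \dots, A^{\tilde m - 1} v_0\}$, since $K = nA$ differs from $A$ only by a nonzero scalar and this does not alter spans. Writing $v_0 = \sum_{j=1}^n c_j \widehat{u}_j$ with $c_j = \langle v_0, \widehat{u}_j \rangle$, Assumption \hyperref[ass_LanczosWelldefined]{\textbf{\color{blue}(LWdf)}} guarantees $c_j \neq 0$ for all $j \le \tilde m$, and $\widehat{\lambda}_1 > \widehat{\lambda}_2 > \dots > \widehat{\lambda}_{\tilde m} > 0$ are pairwise distinct.

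Next, for any polynomial $p$ of degree at most $\tilde m - 1$, the spectral decomposition of $A$ yields
\begin{align*}
  p(A) v_0 = \sum_{j=1}^n c_j\, p(\widehat{\lambda}_j)\, \widehat{u}_j.
\end{align*}
Suppose for contradiction that $v_0, A v_0, \dots, A^{\tilde m - 1} v_0$ were linearly dependent. Then there exist coefficients $(a_k)_{k \le \tilde m - 1}$, not all zero, such that $p(A) v_0 = 0$ for $p(x) = \sum_{k=0}^{\tilde m - 1} a_k x^k$. By orthonormality of $(\widehat{u}_j)_{j \le n}$, this forces $c_j\, p(\widehat{\lambda}_j) = 0$ for every $j$, and in particular $p(\widehat{\lambda}_j) = 0$ for $j \le \tilde m$, because $c_j \neq 0$ by \hyperref[ass_LanczosWelldefined]{\textbf{\color{blue}(LWdf)}}. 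Thus $p$ is a polynomial of degree at most $\tilde m - 1$ with $\tilde m$ distinct roots $\widehat{\lambda}_1, \dots, \widehat{\lambda}_{\tilde m}$, hence $p \equiv 0$, contradicting our assumption.

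Consequently the vectors $v_0, A v_0, \dots, A^{\tilde m - 1} v_0$ are linearly independent, which gives $\dim \mathcal{K}_{\tilde m} = \tilde m$. The argument is essentially just a Vandermonde-style observation: in the basis $(\widehat{u}_j)_{j \le \tilde m}$ the projected Krylov vectors form the columns of $\diag(c_1, \dots, c_{\tilde m}) \cdot V$, where $V = (\widehat{\lambda}_j^{k})_{j,k}$ is Vandermonde, and both factors are invertible under \hyperref[ass_LanczosWelldefined]{\textbf{\color{blue}(LWdf)}}. There is no genuine obstacle here; the only role of the assumption is to simultaneously ensure distinctness of the relevant eigenvalues (so the Vandermonde factor is nonsingular) and non-degeneracy of $v_0$ with respect to the corresponding eigenvectors (so the diagonal factor is nonsingular).
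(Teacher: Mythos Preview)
Your proof is correct and follows essentially the same Vandermonde/polynomial-root argument as the paper: expand $v_0$ in the eigenbasis, observe that a nontrivial linear relation among the Krylov vectors forces a polynomial of degree $\le \tilde m - 1$ to vanish at the $\tilde m$ distinct points $\widehat{\lambda}_1,\dots,\widehat{\lambda}_{\tilde m}$, and conclude. The only cosmetic difference is that the paper phrases the final step as invertibility of the Vandermonde matrix rather than as a root-count contradiction, which you also mention in your closing remark.
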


\noindent In this setting, consequently, the following formal algorithm is well
defined.

\begin{center}
\begin{minipage}{.8\linewidth}
  \begin{algorithm}[H]
  \caption{Lanczos algorithm}
  \label{alg_Lanczos}
  \begin{algorithmic}[1]
    \Procedure{ITERLanczos\(( K, v_{0}, \tilde m ) \) }{}
    \vspace{2px}
    \State Initialize \( v_{0} \) with \( \| v_{0} \| = 1 \).
    \State Compute ONB \( v_{1}, \dots, v_{ \tilde m } \) of 
           \( \mathcal{K}_{ \tilde m } \). 
    \State \( V \gets ( v_{1}, \dots, v_{ \tilde m } ) \).
    \State \( A \gets n^{-1} K \). 
    \State Compute eigenpairs 
           \( ( \tilde \lambda_{j}, \tilde u_{j} )_{ j \le \tilde m } \)
           of \( V^{ \top } A V  \). 
    \State \( \tilde u_{j}  \gets V \tilde u_{j} \), \( j \le \tilde m \). 
    \EndProcedure
    \State \Return
           \( ( \tilde \lambda_{j}, \tilde u_{j} )_{ j \le \tilde m } \).
\end{algorithmic}
\end{algorithm}
\end{minipage}
\end{center}

\noindent Since \( V^{ \top } A V \) is the restriction of \( A \) onto 
\( \vspan \{ v_{j}: j \le \tilde m \} \) in terms of the basis 
\( ( v_{j} )_{ j \le \tilde m } \), the Lanczos algorithm computes the
eigenpairs of \( P_{ \tilde m } A |_{ K_{ \tilde m } } \), where 
\( |_{ K_{ \tilde m } } \) denotes the restriction onto 
\( K_{ \tilde m } \) and \( P_{ \tilde m } \) is the orthogonal projection
onto \( K_{ \tilde m } \).

\begin{lemma}[Elementary properties of Lanczos eigenquantities]
  \label{lem_ElementaryPropertiesOfLanczosEigenquantities}
  Under Assumption \hyperref[ass_LanczosWelldefined]{\normalfont
  \textbf{{\color{blue} (LWdf)}}}, the approximate objects \( ( \tilde
  \lambda_{j}, \tilde u_{j} )_{ j \le \tilde m } \) from Algorithm
  \ref{alg_Lanczos} satisfy the following properties:
  \begin{enumerate}[label=(\roman*)]
    \item \( \tilde u_{1}, \dots, \tilde u_{ \tilde m } \) are an ONB of 
      \( \mathcal{K}_{ \tilde m } \),

    \item \( 
      ( A - \tilde \lambda_{j} I ) \tilde u_{j} \perp \mathcal{K}_{ \tilde m } 
      \) for all \( j = 1, \dots, \tilde m \),

    \item \( \tilde \lambda_{j} \le \widehat{ \lambda }_{j} \) for all 
      \( j = 1, \dots, \tilde m \).
  \end{enumerate}
\end{lemma}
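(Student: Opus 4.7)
The three claims are essentially linear algebra facts about Rayleigh-Ritz projection applied to a symmetric operator, and I would prove them in the order stated.

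For (i), the plan is to read off the properties of $\tilde u_j$ directly from the Lanczos construction. Let $V \in \mathbb{R}^{n \times \tilde m}$ be the matrix whose columns are the orthonormal basis $v_1, \dots, v_{\tilde m}$ of $\mathcal{K}_{\tilde m}$, so that $V^\top V = I_{\tilde m}$ and the columns of $V$ span $\mathcal{K}_{\tilde m}$. The matrix $V^\top A V \in \mathbb{R}^{\tilde m \times \tilde m}$ is symmetric, so by the spectral theorem its eigenvectors $\tilde u_1^{(V)}, \dots, \tilde u_{\tilde m}^{(V)}$ form an ONB of $\mathbb{R}^{\tilde m}$. Since $V$ is an isometry from $\mathbb{R}^{\tilde m}$ onto $\mathcal{K}_{\tilde m}$, the vectors $\tilde u_j := V \tilde u_j^{(V)}$ form an ONB of $\mathcal{K}_{\tilde m}$.

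For (ii), I would start from the eigenvalue identity $V^\top A V \tilde u_j^{(V)} = \tilde \lambda_j \tilde u_j^{(V)}$ and multiply by $V^\top$ on the left after substituting $\tilde u_j = V \tilde u_j^{(V)}$. Using $V^\top V = I_{\tilde m}$, this gives $V^\top A \tilde u_j = \tilde \lambda_j \tilde u_j^{(V)} = V^\top (\tilde \lambda_j \tilde u_j)$, i.e.\ $V^\top (A - \tilde \lambda_j I) \tilde u_j = 0$. Since the columns of $V$ span $\mathcal{K}_{\tilde m}$, the residual $(A - \tilde \lambda_j I) \tilde u_j$ is orthogonal to $\mathcal{K}_{\tilde m}$, which is the Galerkin/Ritz orthogonality condition.

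For (iii), the plan is to use the Courant--Fischer min--max characterization. For any $x \in \mathcal{K}_{\tilde m}$ written as $x = V y$ with $y \in \mathbb{R}^{\tilde m}$, isometry gives $\langle x, A x \rangle = \langle y, V^\top A V y \rangle$ and $\|x\| = \|y\|$, so applying Courant--Fischer to $V^\top A V$ and transporting through $V$ yields
\begin{align*}
  \tilde \lambda_j
  = \max_{\substack{S \subseteq \mathcal{K}_{\tilde m} \\ \dim S = j}}
    \min_{\substack{x \in S \\ \|x\| = 1}}
    \langle x, A x \rangle.
\end{align*}
The analogous characterization of $\widehat{\lambda}_j$ takes the maximum over $j$-dimensional subspaces of the full space $\mathbb{R}^n$, and since $\mathcal{K}_{\tilde m} \subseteq \mathbb{R}^n$ the former max is over a smaller family and is therefore $\le \widehat{\lambda}_j$.

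None of the three parts presents a serious obstacle; the only thing that requires any care is to keep track of the distinction between $\tilde u_j^{(V)} \in \mathbb{R}^{\tilde m}$ and the returned vectors $\tilde u_j = V \tilde u_j^{(V)} \in \mathbb{R}^n$, which is an overload of notation in Algorithm \ref{alg_Lanczos}. Assumption \hyperref[ass_LanczosWelldefined]{\normalfont \textbf{{\color{blue} (LWdf)}}} and Lemma \ref{lem_KrylovSpaceDimension} are only used implicitly, to ensure that $V$ indeed has $\tilde m$ orthonormal columns spanning $\mathcal{K}_{\tilde m}$ so that every step of the argument above makes sense.
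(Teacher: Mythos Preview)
Your proposal is correct and follows essentially the same approach as the paper, which simply declares (i) immediate and cites \cite{Saad2011LargeEigenvalueProblems} (the Galerkin condition (4.17) for (ii) and Corollary 4.4, the Courant--Fischer argument, for (iii)). You have effectively written out the standard derivations that the paper defers to that reference.
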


\begin{proof}[Proof]
  Property (i) is immediate.
  (ii) is the Galerkin condition (4.17) from
  \cite{Saad2011LargeEigenvalueProblems}.
  (iii) follows from the restriction of \( A \) onto 
  \( \mathcal{K}_{ \tilde m } \) in Algorithm \ref{alg_Lanczos}.
  A formal derivation can be found in Corollary 4.4 in
  \cite{Saad2011LargeEigenvalueProblems}.
\end{proof}

\noindent We state two classical bounds for the Lanczos eigenpairs in our
setting, which we have adapted from \cite{Saad1980LanczosConvergenceRates}.
The derivations are in Appendix \ref{prf_LanczosEigenvalueBound}.
For the eigenvalues, the following bound holds, where 
\( \tan( \hat u_{i}, v_{0}  ) \) denotes the tangens of the acute angle between
\( \hat u_{i} \) and \( v_{0} \).

\begin{theorem} 
  [Lanczos: Eigenvalue bound, \cite{Saad1980LanczosConvergenceRates}]
  \label{thm_LanczosEigenvalueBound}
  Under
  Assumption \hyperref[ass_LanczosWelldefined]
                      {\normalfont \textbf{{\color{blue} (LWdf)}}},
  for any fixed integer \( i \le \tilde m < n \) with 
  \( \tilde \lambda_{ i - 1 } > \widehat{ \lambda }_{i}  \) if \( i > 1 \)
  and any integer 
  \( \tilde p \le \tilde m - i  \), the eigenvalue approximation from Algorithm
  \ref{alg_Lanczos} satisfies
  \begin{align}
    \label{eq_LanczosEigenvalueBound}
        0 
    \le
        \widehat{ \lambda }_{i} - \tilde \lambda_{i} 
    \le 
        ( \widehat{ \lambda }_{i} - \widehat{ \lambda }_{n} ) 
        \Big( 
          \frac{ \tilde \kappa_{i} \kappa_{ i, \tilde p }
                 \tan( \widehat{u}_{i}, v_{0} )
               }
               { T_{ \tilde m - i - \tilde p }( \gamma_{i} ) }
        \Big)^{2},
  \end{align}
  where 
  \( 
      \gamma_{i}:
    =
      1 
    + 2 ( \widehat{ \lambda }_{i} - \widehat{ \lambda }_{ i + \tilde p + 1 } )
      / ( \widehat{ \lambda }_{ i + \tilde p + 1 } - \widehat{ \lambda }_{n} ) 
  \),
  \begin{align}
        \tilde \kappa_{i}: 
    = 
        \prod_{ j = 1 }^{ i - 1 } 
        \frac{ \tilde \lambda_{j} - \widehat{ \lambda }_{n} }
             { \tilde \lambda_{j} - \widehat{ \lambda }_{i} },
    \qquad 
        \kappa_{ i, \tilde p }: 
    =
        \prod_{ j = i + 1 }^{ i + \tilde p } 
        \frac{ \widehat{ \lambda }_{j} - \widehat{ \lambda }_{n} }
             { \widehat{ \lambda }_{i} - \widehat{ \lambda }_{j} },
    \qquad 
  \end{align}
  and \( T_{l} \) denotes the \( l \)-th Tschebychev polynomial.
\end{theorem}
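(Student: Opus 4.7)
The result is a direct adaptation of the classical Kaniel--Paige--Saad analysis of the symmetric Lanczos method. The lower bound $\tilde \lambda_i \le \widehat \lambda_i$ is already established in Lemma \ref{lem_ElementaryPropertiesOfLanczosEigenquantities}(iii), so the task is to bound $\widehat \lambda_i - \tilde \lambda_i$ from above. Since $\tilde \lambda_i$ is the $i$-th Ritz value of $A$ on $\mathcal{K}_{\tilde m}$, Courant--Fischer gives
\begin{align*}
\tilde \lambda_i = \max_{\substack{v \in \mathcal{K}_{\tilde m} \setminus \{0\} \\ v \perp \tilde u_1, \dots, \tilde u_{i-1}}} \frac{v^\top A v}{v^\top v},
\end{align*}
so $\widehat \lambda_i - \tilde \lambda_i \le \widehat \lambda_i - v^\top A v / v^\top v$ for any single admissible $v$, reducing the task to the choice of a single clever test vector.

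Every element of $\mathcal{K}_{\tilde m}$ has the form $q(A) v_0$ with $\deg q \le \tilde m - 1$. Writing $v_0 = \sum_{j=1}^n \alpha_j \widehat u_j$ with $\alpha_j = \langle v_0, \widehat u_j \rangle$, I would take the filter
\begin{align*}
p_i(x) = \prod_{k=1}^{i-1}(x - \tilde \lambda_k) \; \prod_{j'=i+1}^{i+\tilde p}(x - \widehat \lambda_{j'}) \; T_{\tilde m - i - \tilde p}(\psi(x)),
\end{align*}
where $\psi$ maps $[\widehat \lambda_n, \widehat \lambda_{i+\tilde p+1}]$ affinely onto $[-1,1]$ with $\psi(\widehat \lambda_i) = \gamma_i \ge 1$. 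The first product combined with the Galerkin condition from Lemma \ref{lem_ElementaryPropertiesOfLanczosEigenquantities}(ii) yields $p_i(A) v_0 \perp \tilde u_k$ for $k < i$ (since $p_i(A)$ factors as $(A - \tilde \lambda_k I) \tilde q(A)$ and $\tilde q(A) v_0 \in \mathcal{K}_{\tilde m}$); the second product zeros out the contributions of $\widehat u_{i+1}, \dots, \widehat u_{i+\tilde p}$; the Chebyshev factor is uniformly bounded by $1$ on the spectrum segment $[\widehat \lambda_n, \widehat \lambda_{i+\tilde p+1}]$ while $T_{\tilde m - i - \tilde p}(\gamma_i) \gg 1$ at the target eigenvalue.

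Taking $v = p_i(A) v_0$ and expanding $R(v) = v^\top A v / v^\top v$ in the $\widehat u_j$ basis, the terms with $j < i$ contribute \emph{negatively} to $\widehat \lambda_i - R(v)$ (as $\widehat \lambda_j > \widehat \lambda_i$) and can be dropped, while the terms $j \in \{i+1, \dots, i+\tilde p\}$ vanish identically, so
\begin{align*}
\widehat \lambda_i - \tilde \lambda_i \le \frac{\sum_{j > i+\tilde p}(\widehat \lambda_i - \widehat \lambda_j)\, p_i(\widehat \lambda_j)^2 \alpha_j^2}{p_i(\widehat \lambda_i)^2 \alpha_i^2}.
\end{align*}
Bounding each factor of $p_i(\widehat \lambda_j)/p_i(\widehat \lambda_i)$ pointwise for $j > i+\tilde p$ produces $\tilde \kappa_i$ from the first product (using $\tilde \lambda_{i-1} > \widehat \lambda_i$ to keep the individual ratios positive and uniformly bounded by $(\tilde \lambda_k - \widehat \lambda_n)/(\tilde \lambda_k - \widehat \lambda_i)$), $\kappa_{i,\tilde p}$ from the second product, and $1/T_{\tilde m - i - \tilde p}(\gamma_i)$ from the Chebyshev factor; replacing $\widehat \lambda_i - \widehat \lambda_j$ by the crude spread $\widehat \lambda_i - \widehat \lambda_n$ and recognizing $\sum_{j > i+\tilde p}\alpha_j^2 / \alpha_i^2 \le \sum_{j \ne i}\alpha_j^2 / \alpha_i^2 = \tan^2(\widehat u_i, v_0)$ (well defined since $\alpha_i \ne 0$ by Assumption \hyperref[ass_LanczosWelldefined]{\textbf{{\color{blue}(LWdf)}}}) yields the announced bound.

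The principal obstacle is the appearance of the \emph{approximate} Ritz values $\tilde \lambda_1, \dots, \tilde \lambda_{i-1}$ inside $\tilde \kappa_i$, which couples the index-$i$ bound to the outcomes at smaller indices. The hypothesis $\tilde \lambda_{i-1} > \widehat \lambda_i$ is precisely what keeps every factor in $\tilde \kappa_i$ positive and finite; absent this monotonicity one would need an inductive treatment combined with Cauchy interlacing to track the sign patterns of the nested Ritz sequences, and an additional small perturbation argument to handle the borderline case $\tilde \lambda_{i-1} = \widehat \lambda_i$.
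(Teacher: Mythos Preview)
Your proposal is correct and follows essentially the same Kaniel--Paige--Saad argument as the paper's proof. The only difference is organizational: the paper first replaces $v_0$ by $x_L = \prod_{l=i+1}^{i+\tilde p}(A-\widehat\lambda_l I)v_0$ (absorbing $\kappa_{i,\tilde p}$ into the modified coefficients $\widehat\alpha_j$), then minimizes over polynomials with roots at $\tilde\lambda_1,\dots,\tilde\lambda_{i-1}$, and finally specializes to the Chebyshev factor, whereas you build the full test polynomial $p_i$ in one shot and bound the ratio $p_i(\widehat\lambda_j)/p_i(\widehat\lambda_i)$ factor by factor; the resulting estimates are identical.
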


\noindent We comment on the interpretation of the above bound.
The Tschebychev polynomials satisfy the lower bound
\begin{align}
  \label{eq_T_TschebychevLowerBound}
  T_{l}(x) & \ge c | x |^{l}, \qquad | x | \ge 1, 
\end{align}
see Chapter 4 in \cite{Saad2011LargeEigenvalueProblems}.
Then, for fixed kernel matrix $K$ and index $i$, the quantities
$\tilde{\kappa}_i$ and $ \kappa_{ i, \tilde p }$ can be considered as constants,
and since \( \gamma_{i} > 1 + c \) is bounded away from one, the upper bound in
\eqref{eq_LanczosEigenvalueBound} decreases exponentially fast in \( \tilde m \).

Noting that the Hilbert-Schmidt norm between two eigenprojectors can be
expressed as the sine  of the acute angle between the two vectors, i.e.
\begin{align}
      \| 
         \widehat{u}_{j} \widehat{u}_{j}^{ \top }
       - 
         \tilde u_{j} \tilde u_{j}^{ \top }
      \|_{ \text{HS} } 
  & = 
      2 - 2 \langle \widehat{u}_{j}, \tilde u_{j} \rangle
      \tr ( \widehat{u}_{j} \tilde u_{j}^{ \top } )
  = 
      2 ( 1 - \langle \widehat{u}_{j}, \tilde u_{j} \rangle^{2} ) 
  = 
      2 \sin^{2} ( \tilde u_{j}, \widehat{u}_{j} ),
\end{align}
 a similar bound holds for the above difference as well.

\begin{theorem} 
  [Lanczos: Eigenvector bound \cite{Saad1980LanczosConvergenceRates}]
  \label{thm_LanczosEigenvectorBound}
  Under Assumption
  \hyperref[ass_LanczosWelldefined]{\normalfont \textbf{{\color{blue} (LWdf)}}},
  for any fixed \( i \le \tilde m \) let 
  \( ( \tilde \lambda^{*}, \tilde u^{*} ) \) be the approximate eigenpair from
  Algorithm \ref{alg_Lanczos} that satisfies 
  \( 
      \widehat{ \lambda }_{i} - \tilde \lambda^{*} 
    =
      \min_{ j \le \tilde m } \widehat{ \lambda }_{i} - \tilde \lambda_{j} 
  \). 
  Then, for any integer \( \tilde p \le \tilde m - i  \), we have
  \begin{align}
          \frac{1}{2} 
          \| \tilde u^{*} \tilde u^{ * \top } 
           - \widehat{u}_{i} \widehat{u}_{i}^{ \top }
          \|_{ \text{HS} }^{2}
    & = 
          \sin^{2}( \tilde u^{*}, \widehat{u}_{i} ) 
    \le 
          \Big( 1 + \frac{ \| K \|_{ \text{op} } }{ n \delta_{i}^{2} }
          \Big)
          \Big( \frac{ \kappa_{i} \kappa_{ i, \tilde p } 
                       \tan( \widehat{u}_{i}, v_{0} )
                     }
                     { T_{ \tilde m - i - \tilde p }( \gamma_{i} ) }
          \Big)^{2},
  \end{align}
  where 
  \( 
      \delta_{i}^{2}:
    =
      \min_{ \tilde \lambda_{j} \ne \tilde \lambda^{*} } 
      | \widehat{ \lambda }_{i} - \tilde \lambda_{j} |
  \),
  \( 
        \gamma_{i}:
    =
        1 
      +
        2 ( \widehat{ \lambda }_{i} - \widehat{ \lambda }_{ i + \tilde p + 1 } ) 
        / ( \widehat{ \lambda }_{ i + \tilde p + 1 } - \widehat{ \lambda }_{n} ) 
  \),
  \begin{align}
        \kappa_{i}:
    =
        \prod_{ j = 1 }^{ i - 1 } 
        \frac{ \widehat{ \lambda }_{j} - \widehat{ \lambda }_{n} }
             { \widehat{ \lambda }_{j} - \widehat{ \lambda }_{i} },
    \qquad 
        \kappa_{ i, \tilde p }: 
    =
        \prod_{ j = i + 1 }^{ i + \tilde p } 
        \frac{ \widehat{ \lambda }_{j} - \widehat{ \lambda }_{n} }
             { \widehat{ \lambda }_{i} - \widehat{ \lambda }_{j} }
  \end{align}
  and \( T_{l} \) denotes the \( l \)-th Tschebychev polynomial.
\end{theorem}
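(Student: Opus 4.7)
The plan is to split the eigenvector bound into two tasks: (i) bounding the sine of the angle between $\widehat{u}_i$ and the whole Krylov subspace $\mathcal{K}_{\tilde m}$ by a polynomial-approximation argument, which yields the Chebyshev factor identical to the one already appearing in Theorem \ref{thm_LanczosEigenvalueBound}; and (ii) upgrading this estimate to the angle with the specific Ritz vector $\tilde u^*$ by exploiting the Galerkin orthogonality from Lemma \ref{lem_ElementaryPropertiesOfLanczosEigenquantities}(ii) together with the gap $\delta_i^2$, which produces the extra prefactor $1 + \|K\|_{\mathrm{op}}/(n \delta_i^2)$.

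For the Krylov angle, I would expand the starting vector $v_0 = \sum_j \xi_j \widehat{u}_j$ with $\xi_j = \langle v_0, \widehat{u}_j \rangle$. Any element of $\mathcal{K}_{\tilde m}$ has the form $q(A) v_0 = \sum_j q(\widehat{\lambda}_j)\xi_j \widehat{u}_j$ for a polynomial $q$ of degree at most $\tilde m - 1$, so normalising by the $\widehat{u}_i$-coefficient yields
\begin{align*}
\sin^2\bigl(\widehat{u}_i, \mathcal{K}_{\tilde m}\bigr)
\;\le\;
\min_{q}\;\frac{\sum_{j \ne i} q(\widehat{\lambda}_j)^2 \xi_j^2}{q(\widehat{\lambda}_i)^2 \xi_i^2}.
\end{align*}
A near-optimal choice is $q(x) = \pi(x) \, T_{\tilde m - i - \tilde p}(\phi(x))$, where $\pi$ is the monic polynomial vanishing at $\widehat{\lambda}_1, \dots, \widehat{\lambda}_{i-1}, \widehat{\lambda}_{i+1}, \dots, \widehat{\lambda}_{i+\tilde p}$ (killing the corresponding summands and contributing the prefactor $\kappa_i \kappa_{i,\tilde p}$), and $\phi$ is the affine map sending $[\widehat{\lambda}_n, \widehat{\lambda}_{i+\tilde p+1}]$ to $[-1,1]$, so that $|T_{\tilde m - i - \tilde p} \circ \phi| \le 1$ on the remaining eigenvalues while $T_{\tilde m - i - \tilde p}(\gamma_i)$ appears at $\widehat{\lambda}_i$. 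Rewriting $\sum_{j \ne i}\xi_j^2/\xi_i^2 = \tan^2(\widehat{u}_i, v_0)$ produces the bound stated in Theorem \ref{thm_LanczosEigenvectorBound} without the gap-dependent prefactor.

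For the upgrade, decompose $\widehat{u}_i = \cos\theta \, z + \sin\theta \, w$ with $z \in \mathcal{K}_{\tilde m}$ and $w \in \mathcal{K}_{\tilde m}^{\perp}$ unit, so that $\sin\theta = \sin(\widehat{u}_i, \mathcal{K}_{\tilde m})$ is governed by Step 1 and $z = \sum_{j=1}^{\tilde m} \beta_j \tilde u_j$. Applying $P_{\tilde m}(A - \widehat{\lambda}_i I)$ to the eigenvalue identity $A\widehat{u}_i = \widehat{\lambda}_i \widehat{u}_i$ and using the Galerkin condition $P_{\tilde m}(A - \tilde\lambda_j I) \tilde u_j = 0$ collapses the projected equation to
\begin{align*}
\cos\theta \sum_{j=1}^{\tilde m} \beta_j (\tilde\lambda_j - \widehat{\lambda}_i)\tilde u_j \;=\; -\sin\theta \, P_{\tilde m} A w.
\end{align*}
Taking squared norms, using $|\tilde\lambda_j - \widehat{\lambda}_i| \ge \delta_i^2$ for $j \ne *$, and bounding $\|P_{\tilde m} A w\|$ by a sharpened version of $\|A\|_{\mathrm{op}} = \|K\|_{\mathrm{op}}/n$, one controls $\cos^2\theta \sum_{j \ne *}\beta_j^2$ by $\|K\|_{\mathrm{op}}/(n\delta_i^2)$ times $\sin^2\theta$. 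Plugging into the identity $\sin^2(\widehat{u}_i, \tilde u^*) = \sin^2\theta + \cos^2\theta \sum_{j \ne *}\beta_j^2$ and combining with Step 1 delivers the theorem.

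The main obstacle is Step 2, specifically producing the \emph{first-power} dependence on $\|K\|_{\mathrm{op}}/(n \delta_i^2)$ rather than the squared dependence that a naive estimate $\|P_{\tilde m} A w\| \le \|A\|_{\mathrm{op}}$ would yield. This refinement uses that $P_{\tilde m} A w$ actually lives in the one-dimensional subspace $\mathcal{K}_{\tilde m} \cap \mathcal{K}_{\tilde m-1}^{\perp}$ spanned by the last Lanczos vector, a consequence of $A\mathcal{K}_{\tilde m - 1} \subseteq \mathcal{K}_{\tilde m}$ together with the orthogonality of $w$ to $\mathcal{K}_{\tilde m}$. Step 1, by contrast, is a standard Chebyshev min-max argument once the Krylov/polynomial correspondence has been set up, and is essentially the same calculation underlying the eigenvalue bound.
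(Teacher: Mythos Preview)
Your two-step decomposition is exactly the paper's: first reduce the Ritz angle $\sin(\tilde u^*, \widehat u_i)$ to the Krylov angle $\sin(\widehat u_i, \mathcal{K}_{\tilde m})$ at the cost of the gap-dependent prefactor (the paper does this by citing Theorem~4.6 in Saad), then bound the Krylov angle by a Chebyshev polynomial argument (the paper cites Saad's Lemma~6.1 and carries out the polynomial choice much as you describe, via the auxiliary vector $y_i = (I-\widehat u_i\widehat u_i^\top)v_0/\|(I-\widehat u_i\widehat u_i^\top)v_0\|$ and a factor $x_L$ absorbing the extra $\tilde p$ roots). Your Step~1 is correct and equivalent to the paper's treatment.

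The problem is your final paragraph. The projected-eigenvalue-equation argument you sketch in Step~2 \emph{is} Saad's proof of his Theorem~4.6, and the ``naive'' estimate $\|P_{\tilde m} A w\| \le \|P_{\tilde m} A (I - P_{\tilde m})\|_{\mathrm{op}} \le \|A\|_{\mathrm{op}} = \|K\|_{\mathrm{op}}/n$ is precisely what the paper uses---it cites Saad and then writes $\|P_{\mathcal{K}_{\tilde m}} A (I - P_{\mathcal{K}_{\tilde m}})\|_{\mathrm{op}} \le \|K\|_{\mathrm{op}}/n$ with no further refinement. Your observation that $P_{\tilde m} A w$ lies in the one-dimensional span of the last Lanczos vector is true (since $A\mathcal{K}_{\tilde m-1} \subset \mathcal{K}_{\tilde m}$ and $w \perp \mathcal{K}_{\tilde m}$), but it does not reduce the power: squaring the projected identity still produces $\|P_{\tilde m} A w\|^2$ on the right, and $\|P_{\tilde m} A w\| = |\langle w, A v_{\tilde m}\rangle|$ is in general only bounded by $\|A\|_{\mathrm{op}}$. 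The ``main obstacle'' you flag is therefore not one, and the proposed fix neither achieves the claimed sharpening nor is needed---your basic argument without it already recovers exactly the bound the paper quotes from Saad.
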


\noindent For a fixed kernel matrix $K$ and index $i$, this yields the same
geometric convergence in \( \tilde m \) as before.

For the results in Theorem \ref{thm_ApproximateContractionRates}, however, we
have to treat the kernel matrix as a random object, i.e., the bounds in Theorems
\ref{thm_LanczosEigenvalueBound} and \ref{thm_LanczosEigenvectorBound} are
themselves random and only provide guarantees insofar they can be restated with 
high probability in terms of deterministic population quantities.
Further, we need guarantees for the Lanczos eigenpairs up to index 
\( m_{n} \), which grows when \( n \to \infty \).
We shortly illustrate the essential challenge this poses via the inverse
empirical eigengap
\( 
  1 / ( \widehat{ \lambda }_{ m_{n} - 1 }
      - \widehat{ \lambda }_{ m_{n} } 
      )
\)
that appears in the term \( \kappa_{ m_{n} } \).
Note that \( A = n^{-1} K \) has the same eigenvalues as the operator
\begin{align}
  \widehat{ \Sigma }:
  \mathbb{H} \to     \mathbb{H}, \qquad 
           h \mapsto \frac{1}{n} \sum_{ j = 1 }^{n} 
                     \langle h, k( \cdot, X_{i} ) \rangle_{ \mathbb{H} } 
                     k( \cdot X_{i} ), 
\end{align}
which is the empirical version of the non-centered covariance operator 
\(
  \Sigma:
  \mathbb{H} \to \mathbb{H}, 
  h \mapsto \mathbb{E} ( \langle h, k( \cdot, X_{i} ) \rangle_{ \mathbb{H} }
                         k( \cdot, X_{i} ) 
                       )
\).
\( \Sigma \) is the restriction of the kernel operator \( T_{k} \) to \(
\mathbb{H} \) and has the same eigenvalues \( ( \lambda_{j} )_{ j \ge 1 } \),
see also the proof of Proposition \ref{prp_RelativePerturbatonBounds}.
In this sense the \( ( \widehat{ \lambda }_{j} )_{ j \ge 1 } \) are empirical
versions of the \( ( \lambda_{j} )_{ j \ge 1 } \).
In order to control the inverse empirical eigengap above, up to a constant, we
therefore need to be able to replace the empirical eigengap with its population
counterpart \( \lambda_{ m_{n} - 1 } - \lambda_{ m_{n} } \) in the bound with
high probability.
This, however, requires that the empirical eigengap converges to the population
eigengap with a faster rate than the population gap converges to zero, since 
\( \lambda_{ m_{n} - 1 } - \lambda_{ m_{n} } \to 0 \) for 
\( m_{n} \to \infty \).
This is a strong requirement in the sense that classical concentration results
for the spectrum of \( n^{-1} K \), see for instance
\cite{ShaweTaylorWilliams2002Stability} and
\cite{ShaweTaylorEtal2002Concentration}, do not provide a sharp enough control. 
To the best of our knowledge, only the recently developed theory 
in \cite{JirakWahl2023RelativePerturbationBounds} is able to deliver
relative perturbation bounds for the eigenvalues which are precise enough to
address this problem.
Proposition \ref{prp_RelativePerturbatonBounds} is adapted to our setting from
Corollary 4 in \cite{JirakWahl2023RelativePerturbationBounds} and its derivation
is deferred to Appendix \ref{prf_RelativePerturbationBounds}.

\begin{proposition} 
  [Relative perturbaton bounds, \cite{JirakWahl2023RelativePerturbationBounds}]
  \label{prp_RelativePerturbatonBounds}
  Under Assumptions 
  \hyperref[ass_SimplePopulationEigenvalues]
           {\normalfont \textbf{{\color{blue} (SPE)}}} and
  \hyperref[ass_KarhunenLoeveMoments]
           {\normalfont \textbf{{\color{blue} (KLMom)}}},
  fix \( m < m_{0} \le n \) such that 
  \( \lambda_{ m_{0} } \le \lambda_{m} / 2 \) and further assume that
  \begin{align}
    \label{eq_RelativeRankCondition}
    \mathbf{r}_{i}( \Sigma ): 
    = 
    \sum_{ k \ne i } 
    \frac{ \lambda_{k} }{ | \lambda_{i} - \lambda_{k} | } 
    + 
    \frac{ \lambda_{i} }
         { ( \lambda_{ i - 1 } - \lambda_{i} ) \land
           ( \lambda_{i} - \lambda_{ i + 1 } )
         }
    \le 
    C \sqrt{ \frac{n}{ \log n } },
    \\
    \qquad \text{ for all } i \le m.
    \notag
  \end{align}
  Then, the eigenvalues of \( A = n^{-1} K \) satisfy the relative perturbation
  bound
  \begin{align}
    \label{eq_RelativePerturbationBound}
    \Big| \frac{ \widehat{ \lambda }_{i} - \lambda_{i} }{ \lambda_{i} } \Big| 
    & \le C \sqrt{ \frac{ \log n }{n} } 
    \qquad \text{ for all } i \le m 
  \end{align}
  with probability at least 
  \( 1 - m_{0}^{2} ( \log n )^{ - p / 4 } n^{ 1 - p / 4 } \).
\end{proposition}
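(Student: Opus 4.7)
The plan is to translate the statement into a relative perturbation bound for an empirical covariance operator on the RKHS $\mathbb{H}$ and then invoke the cited result of Jirak and Wahl essentially as a black box. First I would identify the nonzero eigenvalues of $A = n^{-1}K$ with those of the empirical covariance operator $\widehat{\Sigma}$ introduced just before the proposition. This is the standard kernel trick: writing $\Phi_i := k(\cdot, X_i) \in \mathbb{H}$, the Gram matrix $A = n^{-1}(\langle \Phi_i, \Phi_j \rangle_{\mathbb{H}})_{i,j \le n}$ and the operator $\widehat{\Sigma} = n^{-1} \sum_i \Phi_i \otimes \Phi_i$ on $\mathbb{H}$ share their nonzero spectrum, so the empirical eigenvalues $(\widehat{\lambda}_j)_{j \le n}$ can equivalently be read off from $\widehat{\Sigma}$.

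Second, I would confirm that the population counterpart $\Sigma = \mathbb{E}(\Phi_1 \otimes \Phi_1)$ on $\mathbb{H}$ has the same nonzero spectrum as the kernel integral operator $T_k$ on $L^2(G)$, so its eigenvalues coincide with $(\lambda_j)_{j \ge 1}$. The standard argument uses that $T_k^{1/2}$ restricts to an isometry between the $L^2(G)$-closure of $\mathbb{H}$ and $\mathbb{H}$ itself and intertwines $T_k$ with $\Sigma$; this is recorded in the excerpt via the identification $\mathbb{H} = \ran T_k^{1/2}$. With this identification in hand, the Karhunen-Lo\`eve coefficients $\eta_j = \langle \Phi_1, \phi_j \rangle_{\mathbb{H}}$ in assumption \textbf{(KLMom)} play exactly the role of the coordinate random variables in the Jirak--Wahl framework, so the moment condition there is met with the same $p > 4$.

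Third, I would invoke Corollary 4 of \cite{JirakWahl2023RelativePerturbationBounds} applied to $\widehat{\Sigma}$ and $\Sigma$. That result yields a bound of the form $|\widehat{\lambda}_i/\lambda_i - 1| \lesssim \sqrt{\log n /n}$ for all $i \le m$ precisely under the relative rank condition $\mathbf{r}_i(\Sigma) \lesssim \sqrt{n/\log n}$, with a failure probability controlled by the $p/4$-th moment of the normalized spectral projection error through a Rosenthal-type bound. A union bound over the $i \le m$ indices combined with the spectral truncation at level $m_0$ contributes the $m_0^2$ factor, while Markov's inequality applied at moment order $p/4$ produces the $(\log n)^{-p/4} n^{1-p/4}$ tail. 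The main obstacle is the bookkeeping needed to line up the conditions and the constants in Jirak--Wahl's corollary with our notation, in particular ensuring that the truncation level $m_0$ with $\lambda_{m_0} \le \lambda_m/2$ plays the role required there to separate the $m$ eigenvalues of interest from the tail. Once this translation is carried out, the proof is essentially a direct citation.
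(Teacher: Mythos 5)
Your reduction is the same as the paper's: the nonzero spectrum of $A = n^{-1}K$ is identified with that of the empirical covariance operator $\widehat{\Sigma}$ on $\mathbb{H}$ via the sampling operator (the paper writes $S_n S_n^* = n^{-1}K$ and $S_n^* S_n = \widehat{\Sigma}$), and $\Sigma$ is the restriction of $T_k$ to $\mathbb{H} = \ran T_k^{1/2}$ with eigenvalues $(\lambda_j)$. Up to that point you are aligned with the paper.

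The gap is in treating the rest as ``essentially a direct citation.'' What the paper actually imports from Jirak--Wahl is their \emph{deterministic} Corollary~3: the relative bound $|\widehat{\lambda}_i - \lambda_i| \le \lambda_i x$ holds \emph{provided} the normalized blocks of the perturbation $E = \widehat{\Sigma} - \Sigma$ satisfy $\|P_s E P_t\|_{\mathrm{HS}}/\sqrt{\lambda_s\lambda_t} \le x$ for all $s,t \le m_0$ (together with the analogous tail-block conditions) and $\mathbf{r}_i(\Sigma) \le 1/(6x)$. The substance of the paper's proof --- its entire Step~2 --- is then to \emph{establish} those block conditions with the stated probability: the summands $Z_l = \sum_{i,j}(\eta_i^{(l)}\eta_j^{(l)} - \sqrt{\lambda_i\lambda_j}\delta_{ij})\,\varphi_i\otimes\varphi_j$ are centered Hilbert--Schmidt-valued random elements, one verifies the second-moment, weak-variance and $p/2$-moment conditions using \textbf{(KLMom)} (e.g.\ $\mathbb{E}\tilde\eta_{ij}^2 \le C\lambda_i\lambda_j$), and applies the Fuk--Nagaev-type inequality of Einmahl--Li for Hilbert-space-valued sums. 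Choosing the deviation level to correspond to $x \asymp \sqrt{\log n / n}$ gives per-block failure probability $n(\sqrt{n}\sqrt{\log n})^{-p/2} + e^{-c\log n}$, and the $m_0^2$ factor then comes from a union bound over the \emph{pairs} $(s,t)$ with $s,t \le m_0$ of projector blocks --- not, as you write, from a union bound over the $i \le m$ eigenvalue indices combined with Markov's inequality. Without carrying out this concentration step (or verifying that the probabilistic corollary you cite really covers random elements satisfying only the $p$-th moment condition \textbf{(KLMom)} and delivers exactly this tail), the proof is incomplete precisely where the assumptions of the proposition are actually used.
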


In our setting, under Assumption
\hyperref[ass_EigenvalueDecay]{\normalfont \textbf{{\color{blue} (EVD)}}},
the relative rank \( \mathbf{r}_{i}( \Sigma ) \) can then be bounded up to a
constant by \( m \log m \) uniformly in \( i \le m \), see Lemma
\ref{lem_ConvexFunctionDecay}.
Proposition \ref{prp_RelativePerturbatonBounds} immediately yields that the
Krylov space \( \mathcal{K}_{ \tilde m } \) is well defined asymptotically
almost surely, see Lemma \ref{lem_WellDefinednessOfTheKrylovSpace}.
From there, we obtain control over the quantities 
\( \gamma_{i}, \tilde \kappa_{i} \kappa_{i}, \kappa_{ i, \tilde p } \)
and \( \delta_{i} \) in Theorems \ref{thm_LanczosEigenvalueBound} and
                                     \ref{thm_LanczosEigenvectorBound}.
Finally, this translates to high probability bounds for the Lanczos eigenpairs
purely in terms of population quantities.

\begin{proposition}[Probabilistic bounds for Lanczos eigenpairs]
  \label{prp_ProbabilisticBoundsForLanczosEigenpairs}
  Under Assumptions 
  \hyperref[ass_SimplePopulationEigenvalues]
           {\normalfont \textbf{{\color{blue} (SPE)}}},
  \hyperref[ass_EigenvalueDecay]{\normalfont \textbf{{\color{blue} (EVD)}}} and
  \hyperref[ass_KarhunenLoeveMoments]
           {\normalfont \textbf{{\color{blue} (KLMom)}}},
  set the Lanczos iteration number to \( \tilde m = m \log n \)
  with 
  \( 
    m = o( ( \sqrt{n} / \log n ) \land 
           ( n^{ ( p/4 - 1 ) / 2 } ( \log n )^{ p/8 - 1 } ) ) 
  \).
  Then, the approximate eigenvalues $ ( \tilde \lambda_{i} )_{ i \le m } $ from
  Algorithm \ref{alg_Lanczos} started at 
  \( v_{0} \in \{ Z / \| Z \|, Y / \| Y \| \} \) from Lemma
  \ref{lem_WellDefinednessOfTheKrylovSpace} with 
  \( f_{0} \in L^{ \infty } (G) \) satisfy
  \begin{align}
          0 
    \le 
          \widehat{ \lambda }_{i} - \tilde \lambda_{i} 
    \le 
          \lambda_{i}  ( 1 + c )^{ - m }  
    \qquad \text{ and } \qquad 
        \| \tilde u_{i}    \tilde u_{i}^{ \top } 
         - \widehat{u}_{i} \widehat{u}_{i}^{ \top }
        \|_{ \text{HS} } 
    \le 
        ( 1 + c )^{ - m }  
    \\
    \qquad \text{ for all } i \le m
    \notag
  \end{align}
  with probability converging to one.
\end{proposition}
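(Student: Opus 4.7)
The plan is to combine the deterministic Lanczos convergence estimates of Theorems \ref{thm_LanczosEigenvalueBound} and \ref{thm_LanczosEigenvectorBound}, which involve empirical spectral quantities of the random kernel matrix, with the relative perturbation bound of Proposition \ref{prp_RelativePerturbatonBounds}, so that every random quantity in those bounds can be replaced by its deterministic population counterpart up to a multiplicative constant. As a first step, I would choose $m_0$ of order $m$ with $\lambda_{m_0}\le \lambda_m/2$, which is possible by \hyperref[ass_EigenvalueDecay]{\textbf{{\color{blue} (EVD)}}}(ii). Lemma \ref{lem_ConvexFunctionDecay} controls the relative rank $\mathbf{r}_i(\Sigma)\lesssim m\log m$ uniformly in $i\le m$, so the hypothesis $m = o(\sqrt{n}/\log n)$ verifies the relative rank condition \eqref{eq_RelativeRankCondition}. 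Proposition \ref{prp_RelativePerturbatonBounds} then delivers $|\hat\lambda_i-\lambda_i|\le C\lambda_i\sqrt{\log n/n}$ for all $i\le m_0$ on an event of probability at least $1 - m_0^{2}(\log n)^{-p/4}n^{1-p/4}$, which tends to one under the assumed upper bound $m = o(n^{(p/4-1)/2}(\log n)^{p/8-1})$.

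On this event, the quantities $\gamma_i$, $\tilde\kappa_i$, $\kappa_i$ and $\kappa_{i,\tilde p}$ appearing in Theorems \ref{thm_LanczosEigenvalueBound} and \ref{thm_LanczosEigenvectorBound} can be bounded in terms of population quantities. I would choose $\tilde p\asymp i$ small enough that $i+\tilde p+1\le m$; \hyperref[ass_EigenvalueDecay]{\textbf{{\color{blue} (EVD)}}}(ii) then forces $\lambda_{i+\tilde p+1}\le \lambda_i/2$ and hence $\gamma_i\ge 1+c$ for a constant $c>0$ independent of $n$ and $i$. The products $\tilde\kappa_i$, $\kappa_i$, $\kappa_{i,\tilde p}$ are controlled by combining the convexity of $\lambda(\cdot)$, which makes the successive gaps $\lambda_{j-1}-\lambda_j$ monotone and thereby controls terms with $j$ close to $i$, with the doubling property of \hyperref[ass_EigenvalueDecay]{\textbf{{\color{blue} (EVD)}}}(ii), which controls terms with $j$ far from $i$; careful bookkeeping then yields $\tilde\kappa_i\,\kappa_i\,\kappa_{i,\tilde p}\le n^{C}$ uniformly in $i\le m$. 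For the angle $\tan(\hat u_i, v_0)$ the two initializations are treated separately. If $v_0 = Z/\|Z\|$, rotational invariance conditional on $X$ makes $\langle Z,\hat u_i\rangle$ a standard normal, and a union bound over $i\le m$ gives $|\langle v_0,\hat u_i\rangle|\ge n^{-2}$ with probability tending to one. If $v_0 = Y/\|Y\|$, one decomposes $Y = \mathbf{f}_0 + \varepsilon$ and argues analogously with the Gaussian noise vector, using $f_0\in L^{\infty}(G)$ to control $\|Y\|$. Either way, $\tan(\hat u_i, v_0)\le n^{C}$ uniformly in $i\le m$.

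The conclusion is then driven by the geometric decay of the Tschebychev factor. Since $\tilde m = m\log n$, the exponent $\tilde m - i - \tilde p$ is of order $m\log n$, and the elementary bound $T_l(x)\ge c|x|^l$ for $|x|\ge 1$, applied with $\gamma_i\ge 1+c$, yields $T_{\tilde m - i - \tilde p}(\gamma_i)\gtrsim n^{c'm}$. The polynomial prefactors from the previous paragraph are absorbed into this geometric decay, giving uniformly in $i\le m$ that $\hat\lambda_i - \tilde\lambda_i\le (\hat\lambda_i-\hat\lambda_n)\cdot n^{-c''m}\le \lambda_i(1+c''')^{-m}$ and, analogously, $\|\tilde u_i\tilde u_i^\top - \hat u_i\hat u_i^\top\|_{\mathrm{HS}}\le (1+c''')^{-m}$. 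For the eigenvector statement one additionally identifies the approximate eigenpair closest to $\hat u_i$ with $\tilde u_i$ itself, which is forced by the eigenvalue bound because $\hat\lambda_i - \tilde\lambda_i$ is much smaller than the population gap $\lambda_{i-1}-\lambda_i$; the extra factor $\|K\|_{\mathrm{op}}/(n\delta_i^{2})\lesssim \lambda_1/(\lambda_{i-1}-\lambda_i)^{2}$ is again polynomial in $n$ and absorbed.

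The technically delicate part is the second step: the products $\tilde\kappa_i$ and $\kappa_i$ contain ratios $(\hat\lambda_j-\hat\lambda_n)/(\hat\lambda_j-\hat\lambda_i)$ whose individual terms blow up as $j$ approaches $i$, and establishing that their accumulated value remains polynomial in $n$ uniformly over all $i\le m$ requires combinatorial bookkeeping based on the convexity and doubling in \hyperref[ass_EigenvalueDecay]{\textbf{{\color{blue} (EVD)}}}, combined with the sharp relative perturbation bound of Proposition \ref{prp_RelativePerturbatonBounds}. Without the latter, the random eigengaps cannot reliably be compared to their population analogues, and the resulting products could inflate faster than the geometric Tschebychev decay can absorb.
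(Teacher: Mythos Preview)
Your overall strategy---convert the deterministic Lanczos estimates of Theorems \ref{thm_LanczosEigenvalueBound} and \ref{thm_LanczosEigenvectorBound} into population statements via Proposition \ref{prp_RelativePerturbatonBounds}, then let the Chebyshev factor with exponent \(\tilde m = m\log n\) absorb all prefactors---is the paper's. Two points, however, need correction.

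First, a quantitative slip that does not break the argument but misrepresents the balance of terms. The products \(\kappa_i\), \(\kappa_{i,\tilde p}\) are not bounded by \(n^C\); the paper (Lemma \ref{lem_EigenvectorProductTerms}) obtains \(\kappa_i \le \prod_{j<i} Ci/(i-j) \le C^i i^i/i! \le C^i\) via Stirling, and with \(i\) up to \(m\) and \(m\) as large as \(\sqrt{n}/\log n\) this is far from polynomial in \(n\). The proof still closes because the Chebyshev denominator is \((1+c)^{\tilde m - i - \tilde p} \gtrsim (1+c)^{m\log n} = n^{cm}\), which dominates \(C^m\). Relatedly, your choice \(\tilde p \asymp i\) with \(i+\tilde p+1\le m\) cannot be maintained for \(i\) near \(m\): (EVD)(ii) forces \(\tilde p \ge (C-1)i\) to obtain \(\lambda_{i+\tilde p+1}\le \lambda_i/2\), which pushes \(i+\tilde p+1\) beyond \(m\). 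The paper instead takes \(\tilde p = m\) uniformly and bounds \(\gamma_i\) using the convexity estimate \((\lambda_i-\lambda_{i+m+1})/\lambda_i \ge (m+1)/(i+m+1)\ge 1/2\) from Lemma \ref{lem_ConvexFunctionDecay}, then applies the relative perturbation bound with the parameter \(m_0 = C\tilde m\) so that indices up to \(2m\) are covered.

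Second, and this is a genuine gap, you treat \(\tilde\kappa_i\) on the same footing as \(\kappa_i\). But
\(\tilde\kappa_i = \prod_{j<i}(\tilde\lambda_j - \hat\lambda_n)/(\tilde\lambda_j - \hat\lambda_i)\)
involves the \emph{Lanczos} eigenvalues \(\tilde\lambda_j\), which are not touched by Proposition \ref{prp_RelativePerturbatonBounds}. Bounding the denominators \(\tilde\lambda_j - \hat\lambda_i\) from below requires \(\tilde\lambda_j\) close to \(\hat\lambda_j\), i.e., the conclusion of the proposition at the smaller index \(j\). The paper resolves this circularity by induction on \(i\): assuming \(\tilde\kappa_j \le C^j j!\) for \(j<i\), the eigenvalue bound at those indices yields \(\tilde\lambda_i - \hat\lambda_{i+1} \ge c\lambda_i/(i+1)\), whence \(\tilde\kappa_{i+1} \le C(i+1)\tilde\kappa_i\). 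The same inductive mechanism is what verifies the hypothesis \(\tilde\lambda_{i-1} > \hat\lambda_i\) of Theorem \ref{thm_LanczosEigenvalueBound}, identifies the closest Ritz pair \((\tilde\lambda^*,\tilde u^*)\) with \((\tilde\lambda_i,\tilde u_i)\) in Theorem \ref{thm_LanczosEigenvectorBound}, and lower-bounds \(\delta_i^2 = \min_{j\ne i}|\hat\lambda_i - \tilde\lambda_j| \ge c\lambda_i/i\); all of these involve \(\tilde\lambda_j\) and cannot be read off the relative perturbation bound alone, contrary to what your sketch suggests.
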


\noindent A proof of Proposition
\ref{prp_ProbabilisticBoundsForLanczosEigenpairs} is given in Appendix
\ref{prf_ProbabilisticBoundsBorLanczosEigenpairs}.
Finally, the established bounds are sufficient to treat the difference in
Equation \eqref{eq_T_CDifferenceLVsEV} and establish Proposition
\ref{prp_KullbackLeiblerBound}.



\section{Numerical simulations}
\label{sec_NumericalSimulations}

In this section, we illustrate our theoretical findings via numerical
simulations based on synthetic data sets.
We consider two of the most frequently used kernels, the Mat\'ern and squared
exponential kernel. 
Although strictly speaking, these are not fully covered by our theoretical
analysis (as there are no known upper bounds derived for the corresponding
Karhunen-Lo\`eve coefficients in the literature), they possess polynomial and
exponentially decaying eigenvalues, respectively, considered in Section
\ref{sec_Examples}.
The \texttt{python} code for our simulations is available from the website of
the corresponding author.\footnote{
  \href{https://www.bstankewitz.com}{bstankewitz.com}
}

\subsection{Mat\'ern covariance kernel}
\label{ssec_MaternCovarianceKernel}

We consider design points \( X_{i} \stackrel{iid}{\sim} \text{Unif}( 0, 1 ) \)
and observations $Y_i$, $i=1,...,n$, from the model
\begin{align}
  \label{eq_N_MaternModel}
  Y_{i} = f_{0}( X_{i} ) + \varepsilon_{i}, \qquad i = 1, \dots, n,
\end{align}
where \( \varepsilon_{i} \stackrel{iid}{\sim} N( 0, \sigma^{2} ) \) with 
\( \sigma = 0.2 \) and
\begin{align}
  f_{0}(x) = | x - 0.4 |^{ \beta } - | x - 0.2 |^{ \beta },
             \qquad x \in \mathbb{R},
\end{align}
with regularity hyperparameter \( \beta = 0.6 \).
Then, we endow the functional parameter with a centered GP prior defined by the
Mat\'ern kernel
\begin{align}
  k(x, x')  = \frac{1}{ \Gamma( \alpha ) 2^{ \alpha - 1 } }
              ( \sqrt{ 2 \alpha } | x' - x | )^{ \alpha } 
              B_{ \alpha }( \sqrt{ 2 \alpha } | x' - x | ), 
              \qquad x, x' \in \mathbb{R},
\end{align}
where \( B_{ \alpha } \) is a modified Bessel function and $\alpha$ is the
regularity hyperparameter of the prior, see \cite{RasmussenWilliams2006GPForML}.
To obtain optimal posterior inference we match the regularity of the prior with
the regularity of the true parameter by choosing $\alpha=\beta$.

We begin by comparing the results of LGP and CGGP for small \( n \) and 
\( m \) to illustrate the equivalence from Corollary
\ref{cor_EquivalenceBetweenLGPandCGGP}.
Figure \ref{fig_N_LGPEquivalentCGGPMatern} clearly demonstrates that the
posteriors are identical.
In all pictures we plot the posterior means in solid and the $95\%$ pointwise
credible bands by dashed lines. 
The true posterior is denoted by green, the approximation by blue and the true
function by black. 
In the following, we therefore focus only on the CGGP version of Algorithm
\ref{alg_GPApproximation}.

\begin{figure}[H]
  \centering
  \begin{subfigure}[t]{0.40\textwidth}
      \includegraphics[width=\textwidth]{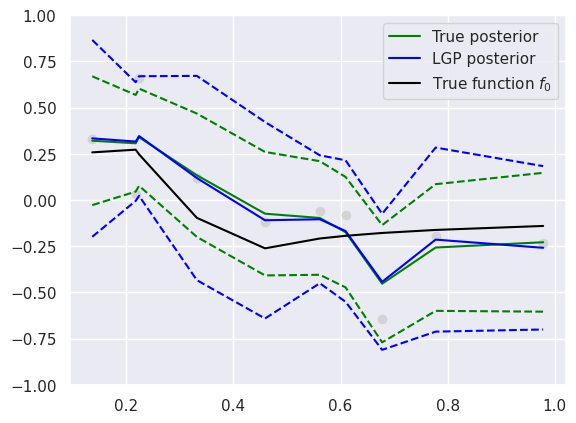}
      \caption{LGP posterior \( \Psi^{ \text{L} }_{5} \).}
  \end{subfigure}
  \begin{subfigure}[t]{0.40\textwidth}
      \includegraphics[width=\textwidth]{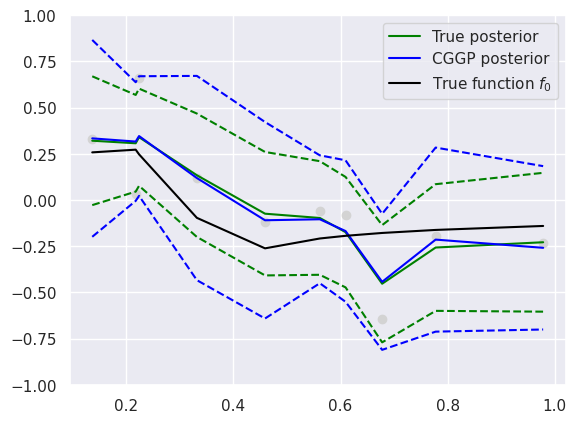}
      \caption{CGGP posterior \( \Psi^{ \text{CG} }_{5} \).}
  \end{subfigure}
  \captionsetup{width=.8\linewidth}
  \caption{Equivalence of LGP and CGGP for \( n = 10 \) observations from model
           \eqref{eq_N_MaternModel}.
           LGP and CGGP posteriors \( \Psi_{m} \) with \( m = 5 \).
          }
  \label{fig_N_LGPEquivalentCGGPMatern}
\end{figure}

Then, we investigate the accuracy of the approximation resulting from different
number of iterations. 
We recall from \cite{vdVaartvZanten2011InformationRates} that for  
\( f_{0} \in C^{ \beta } [ 0, 1 ] \cap H^{ \beta } [ 0, 1 ] \) and a Mat\'ern
process with $\alpha=\beta$,
Assumption \hyperref[ass_ConcentrationFunctionInequality] 
           {\normalfont \textbf{{\color{blue} (CFun)}}} 
holds, resulting in the minimax  contraction rate 
\( \varepsilon_{n} = n^{ -\beta / ( 2 \beta + 1 )} \) for the true posterior,
where \( C^{ \beta } [ 0, 1 ] \) and \( H^{ \beta } [ 0, 1 ] \) denote
the spaces of \( \beta \)-H\"older continuous functions and \( \beta \)-Sobolev
functions respectively. 
Furthermore, Corollary 7 of \cite{NiemanEtal2022ContractionRates} together with
the equivalence in Lemma
\ref{lem_EmpiricalEigenvectorActionsAndVariationalBayes} guarantees that the
EVGP posterior \( \Psi^{ \text{EV} }_{m} \) also contracts with the rate 
\( \varepsilon_{n} = n^{ - \beta / ( 1 + 2 \beta)} \) as long as 
\begin{align}
  m \ge n^{ 1 / ( 1 + 2 \beta ) } \approx 40
  \qquad \text{ for } n = 3000.
\end{align}
This is confirmed by the plot in Figure \ref{fig_N_MaternKernel_M40}, which
reproduces the results from \cite{NiemanEtal2022ContractionRates}. 
For the empirical eigenvectors, we use the full SVD of the kernel matrix
computed via \texttt{linalg.svd} from \texttt{NumPy} \cite{HarrisEtal2020NumPy}.

Based on our theoretical results we can expect the same optimal contraction rate
to hold for the CGGP posterior as well, as long as the Lanczos approximation of
the eigenpairs of the kernel matrix is sufficient.
For a Krylov space of dimension \( 40 \), however, we cannot expect convergence
for all 40 eigenpairs, which is the reason that Theorem
\ref{thm_ApproximateContractionRates} requires an additional slowly increasing
multiplicative factor (in the theorem a logarithmic term was introduced, but a
slower factor is also sufficient).
Correspondingly, the CGGP posterior for \( m = 40 \), which is equivalent to the
LGP posterior with \( m = 40 \) based on the Krylov space 
\( \mathcal{K}_{ 40 } \), displays slightly worse approximation behaviour than
the idealized EVGP one.
This slight suboptimality can be seen by the wider credible bands even if the
posterior mean is already replicated close to exactly and the mean squared error
(MSE)
\begin{align}
      \text{MSE} 
  =
      \frac{1}{n} \sum_{ i = 1 }^{n} 
      ( \mu_{m}^{ \text{CG} }( X_{i} ) - f_{0}( X_{i} )  )^{2} 
\end{align}
of the CGGP posterior mean \( \mu_{m}^{ \text{CG} } \) is essentially identical
to the MSE of the true posterior mean, see Table \ref{tab_N_MaternMSE}.

To incorporate the slightly larger iteration number needed for the CG and
Lanczos algorithms we also run CGGP twice as long, for $m=80$ iterations. 
Note that the logarithmic factor from our theorem would imply a $\log n=8$ (for
\( n = 3000 \)) multiplier compared to the EVGP case. 
However, as discussed above, a smaller term is also sufficient. 
In fact, the documentation of the implementation \texttt{sparse.linalg.svds} of
the Lanczos algorithm from \texttt{SciPy} \cite{VirtanenEtal2020SciPy}
suggests also to use twice as large dimensional Krylov space as the number of
approximated eigenpairs needed by the user to guarantee convergence.
The results for the CGGP posterior with \( m = 80 \) iterations then provide an
approximation that is highly similar to the EVGP, see Figure
\ref{fig_N_MaternKernel_M80M20} and Table \ref{tab_N_MaternMSE}.

\begin{figure}[H]
  \centering
  \begin{subfigure}[t]{0.40\textwidth}
      \includegraphics[width=\textwidth]{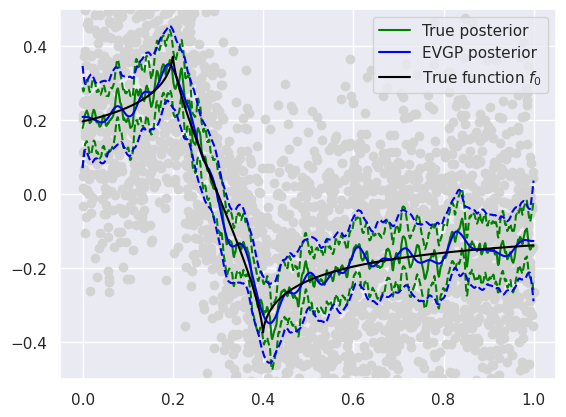}
      \caption{EVGP \( \Psi_{40}^{ \text{EV} } \).}
  \end{subfigure}
  \begin{subfigure}[t]{0.40\textwidth}
      \includegraphics[width=\textwidth]{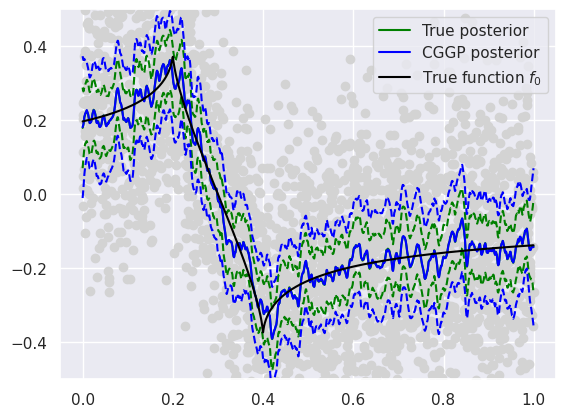}
      \caption{CGGP \( \Psi_{ 40 }^{ \text{CG} } \).}
  \end{subfigure}
  \captionsetup{width=.8\linewidth}
  \caption{Simulation results for the Matern kernel with \( n = 3000 \)
           observations from model \eqref{eq_N_MaternModel}.
           EVGP and CGGP posteriors \( \Psi_{m} \) with \( m = 40 \).
          }
  \label{fig_N_MaternKernel_M40}
\end{figure}

\begin{figure}[H]
  \centering
  \begin{subfigure}[t]{0.40\textwidth}
      \includegraphics[width=\textwidth]{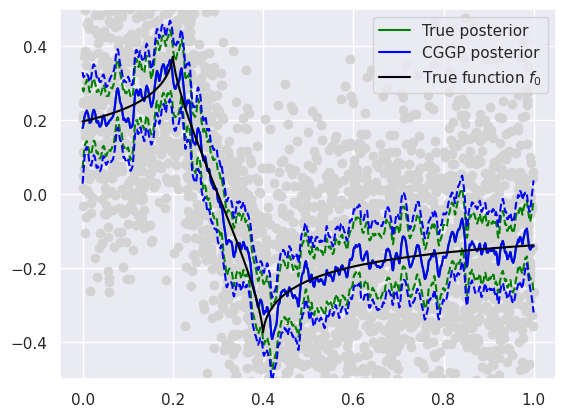}
      \caption{CGGP \( \Psi_{80}^{ \text{CG} } \).}
  \end{subfigure}
  \begin{subfigure}[t]{0.40\textwidth}
      \includegraphics[width=\textwidth]{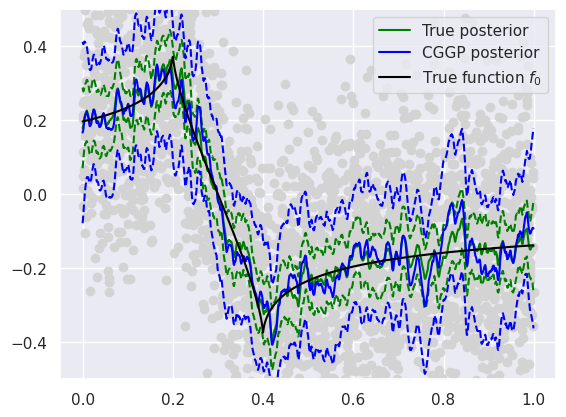}
      \caption{CGGP \( \Psi_{20}^{ \text{CG} } \).}
  \end{subfigure}
  \captionsetup{width=.8\linewidth}
  \caption{Simulation results for the Matern kernel with \( n = 3000 \)
           observations from model \eqref{eq_N_MaternModel}.
           CGGP posteriors \( \Psi^{ \text{CG} }_{m} \) with \( m = 80 \) and $20$, respectively.
          }
  \label{fig_N_MaternKernel_M80M20}
\end{figure}

Considering only half as many iterations than for the EVGP method, i.e., \( m =
20 \), as suggested by our theory, results in an approximation that is
substantially worse than the true posterior. 
This is indicated by an order of magnitude larger MSE and credible bands that
are wider by about a factor of two than for the true posterior.
Increasing the number of iterations substantially beyond what is suggested by
our theory allows to recover the true posterior more precisely, however, without
qualitatively improving the resulting inference on \( f_{0} \), see Figure
\ref{fig_N_MaternKernel_M160_Times} and Table \ref{tab_N_MaternMSE}.

Finally, the \( \log \)-\( \log \) plot in Figure
\ref{fig_N_MaternKernel_M160_Times} illustrates that the computational
cost of the CGGP posterior with \( m = 2 n^{ 1 / ( 2 \alpha + 1 ) } \)
iterations scales like \( O( m n^{2} ) \) instead of \( O( n^{3} ) \).  

\begin{figure}[H]
  \centering
  \begin{subfigure}[t]{0.40\textwidth}
      \includegraphics[width=\textwidth]{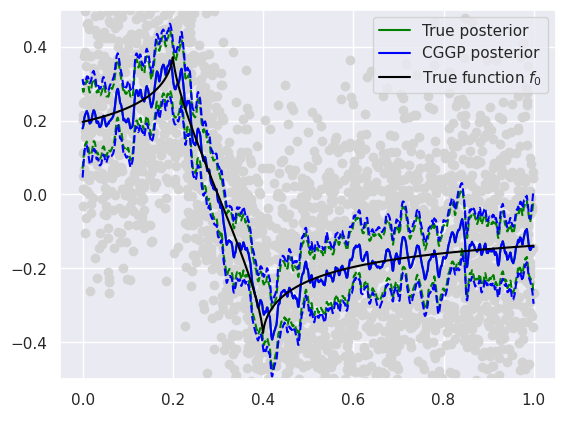}
      \caption{CGGP \( \Psi_{160}^{ \text{EV} } \),
               MSE \( = 7.5 \text{e}^{-04} \).
              }
  \end{subfigure}
  \begin{subfigure}[t]{0.40\textwidth}
      \includegraphics[width=\textwidth]{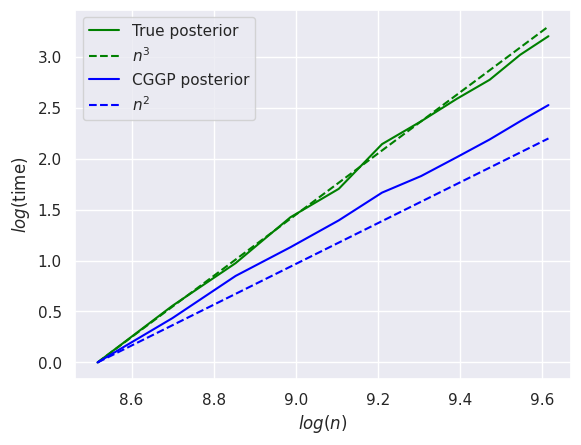}
      \caption{Log-log plot of computation times.
              }
  \end{subfigure}
  \captionsetup{width=.8\linewidth}
  \caption{Simulation results for the Matern kernel with \( n = 3000 \)
           observations from model \eqref{eq_N_MaternModel}.
           CGGP posteriors \( \Psi^{ \text{CG} }_{m} \) with \( m = 160 \).
           Log-log plot of computation times for the true posterior compared
           with the CGGP posterior for the optimal choice of \( m \).
           \( n \in \{ 5000, 6000, \dots, 15000 \} \) and intercept shifted to
           \( 0 \).
          }
  \label{fig_N_MaternKernel_M160_Times}
\end{figure}

\begin{table}[H]
  \centering
  \begin{tabular}{lcccccc}
    \\[-1.8ex]\hline
    \hline\\[-1.8ex]
    Posterior & True                          & \( \Psi_{40}^{ \text{EV} } \) & \( \Psi_{20}^{ \text{CG} } \) 
              & \( \Psi_{40}^{ \text{CG} } \) & \( \Psi_{80}^{ \text{CG} } \) & \( \Psi_{160}^{ \text{CG} } \) \\
    \hline\\[-1.8ex]
    MSE       & \( 8 \text{e}^{-04} \)        & \( 6 \text{e}^{-04} \)        & \( 2 \text{e}^{-03} \)        
              & \( 9 \text{e}^{-04} \)        & \( 8 \text{e}^{-04} \)        & \( 8 \text{e}^{-04} \) \\
    \\[-1.8ex]\hline
    \hline\\[-1.8ex]
  \end{tabular}
  \captionsetup{width=.8\linewidth}
  \caption{MSE for the posteriors displayed in Figures
          \ref{fig_N_MaternKernel_M40}, \ref{fig_N_MaternKernel_M80M20} and
          \ref{fig_N_MaternKernel_M160_Times}.
          }
  \label{tab_N_MaternMSE} 
\end{table}


\subsection{Squared exponential covariance kernel}
\label{ssec_SquaredExponentialCovarianceKernel}

We consider the regression model \eqref{eq_N_MaternModel} with 
\( \sigma = 0.2 \) and
\begin{align}
  \label{eq_N_TrueFunctionSqrExpKernel}
  f_{0}(x): = | x + 1 |^{ \beta} - | x - 3/2 |^{ \beta }, 
  \qquad x \in \mathbb{R},
\end{align}
with regularity \( \beta = 0.8 \).
As a prior, we choose a centered GP with squared exponential covariance kernel
\begin{align}
  \label{eq_N_SqrExpKernel}
  k( x, x' ): = \exp \Big( \frac{ - ( x' - x )^{2} }{ b^{2} } \Big),
  \qquad x, x' \in \mathbb{R}.
\end{align}
The corresponding eigenvalues are exponentially decaying, see Section
\ref{sec_Examples} for discussion and references.
By setting the bandwidth parameter 
\( b = b_{n} = 4 n^{ - 1 / ( 1 + 2 \beta ) } \), the corresponding true
posterior achieves the minimax contraction rate 
\( \varepsilon_{n} = n^{ - \beta / ( 1 + 2 \beta ) } \) for
any \( f_{0} \in C^{ \beta }( \mathbb{R} ) \cap L^{2}( \mathbb{R} ) \).
Since with high probability, the design points \( X_{i} \), \( i \le n \) are
contained in a compact interval and the tails of \( f_{0} \) can be adjusted to
guarantee \( f_{0} \in L^{2}( \mathbb{R} ) \), these assumptions are essentially
satisfied in our setting.

As in Section \ref{ssec_MaternCovarianceKernel}, we focus on results for the
CGGP posterior after checking the equivalence between LGP and CGGP from
Corollary \ref{cor_EquivalenceBetweenLGPandCGGP} for small \( n \) and \( m \),
see Figure \ref{fig_N_LGPEquivalentCGGPExponential}.

\begin{table}[H]
  \centering
  \begin{tabular}{lcccccc}
    \\[-1.8ex]\hline
    \hline\\[-1.8ex]
    Posterior & True                   & \( \Psi_{80}^{ \text{EV} } \) & \( \Psi_{40}^{ \text{CG} } \) & \( \Psi_{80}^{ \text{CG} } \) & \( \Psi_{160}^{ \text{CG} } \) & \( \Psi_{320}^{ \text{CG} } \) \\
    \hline\\[-1.8ex]
    MSE       & \( 6 \text{e}^{-04} \) & \( 6 \text{e}^{-04} \)        & 0.01                          & \( 6 \text{e}^{-04} \)        & \( 6 \text{e}^{-04} \)         & \( 6 \text{e}^{-04} \)         \\
    \\[-1.8ex]\hline
    \hline\\[-1.8ex]
  \end{tabular}
  \captionsetup{width=.8\linewidth}
  \caption{MSE for the posteriors displayed in Figures
           \ref{fig_N_SquaredExponentialKernel_M80},
           \ref{fig_N_SquaredExponentialKernel_M160M40} and
           \ref{fig_N_SquaredExponentialKernel_M320Times}.
          }
  \label{tab_N_ExponentialMSE} 
\end{table}

\begin{figure}[H]
  \centering
  \begin{subfigure}[t]{0.40\textwidth}
      \includegraphics[width=\textwidth]{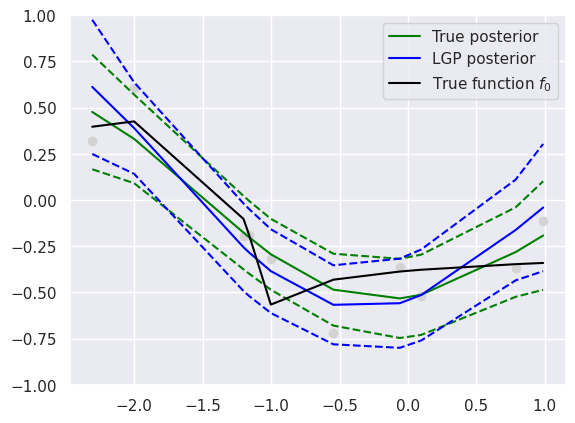}
      \caption{LGP posterior \( \Psi^{ \text{L} }_{3} \).}
  \end{subfigure}
  \begin{subfigure}[t]{0.40\textwidth}
      \includegraphics[width=\textwidth]{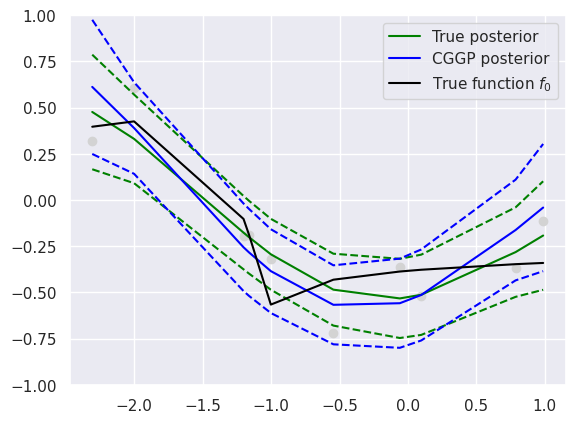}
      \caption{CGGP posterior \( \Psi^{ \text{CG} }_{3} \).}
  \end{subfigure}
  \captionsetup{width=.8\linewidth}
  \caption{Equivalence of LGP and CGGP for \( n = 10 \) observations in the regression model with $f_0$ given in
           \eqref{eq_N_TrueFunctionSqrExpKernel}. The
           LGP and CGGP posteriors \( \Psi_{m} \) are computed with \( m = 3 \) iterations.
           The Lanczos algorithm is initialized at \( w_{0} = Y / \| Y \| \).
          }
  \label{fig_N_LGPEquivalentCGGPExponential}
\end{figure}

\begin{figure}[H]
  \centering
  \begin{subfigure}[t]{0.40\textwidth}
      \includegraphics[width=\textwidth]{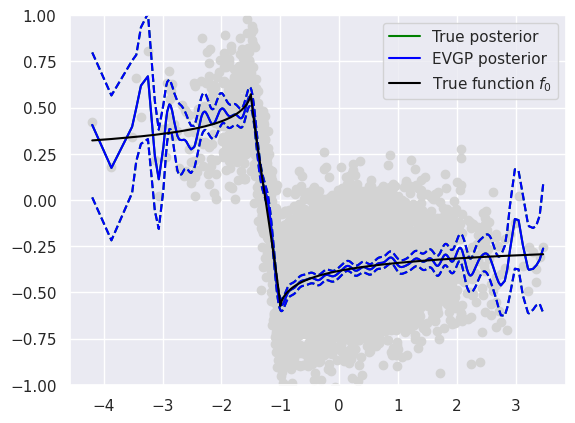}
      \caption{EVGP \( \Psi_{80}^{ \text{EV} } \)}
  \end{subfigure}
  \begin{subfigure}[t]{0.40\textwidth}
      \includegraphics[width=\textwidth]{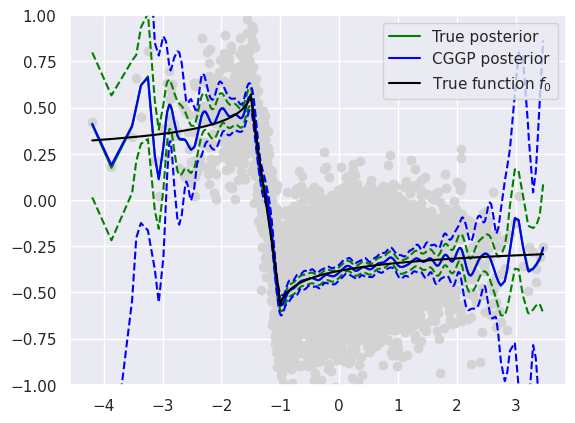}
      \caption{CGGP \( \Psi_{ 80 }^{ \text{CG} } \), }
      \label{fig_N_ExponentialCG80}
  \end{subfigure}
  \captionsetup{width=.8\linewidth}
  \caption{CGGP approximation in the regression model with $f_0$ given in
           \eqref{eq_N_TrueFunctionSqrExpKernel}, sample size
           \( n = 5000 \) and squared exponential covariance kernel.
           The approximate EVGP and CGGP posteriors \( \Psi_{m} \) are computed
           with \( m = 80 \) iterations.
          }
  \label{fig_N_SquaredExponentialKernel_M80}
\end{figure}

For the EVGP posterior \( \Psi_{m}^{ \text{EV} } \),  Corollary 9 in
\cite{NiemanEtal2022ContractionRates} together with Lemma
\ref{lem_EmpiricalEigenvectorActionsAndVariationalBayes} guarantee that the
contraction rate of the approximate posterior is (nearly) optimal 
\( n^{ - \beta / ( 1 + 2 \beta ) } \log n \) for a choice 
\begin{align}
  m \ge 2^{-3/2} n^{ 1 / ( 1 + 2 \beta ) } \log n \approx 80
  \qquad\text{for \( n = 5000 \).}
\end{align}
This is confirmed in Figure \ref{fig_N_SquaredExponentialKernel_M80},
showing that \( \Psi^{ \text{EV} }_{80} \) already recovers the true posterior
along the whole interval that contains the observations.
The CGGP posterior provides a good approximation within two standard deviations
of the standard normal design distribution, see Figure
\ref{fig_N_ExponentialCG80}.
It turns out that $m=80$ iterations are enough for the CGGP posterior mean to
achieve essentially the same MSE as the true posterior, see Table
\ref{tab_N_ExponentialMSE}.
Following the same reasoning as in Section \ref{ssec_MaternCovarianceKernel}, we
add a factor two to the iteration number.
Then, the CGGP has  essentially the same performance, including the width of the
credible bands, as the EVGP posterior.

\begin{figure}[H]
  \centering
  \begin{subfigure}[t]{0.40\textwidth}
      \includegraphics[width=\textwidth]{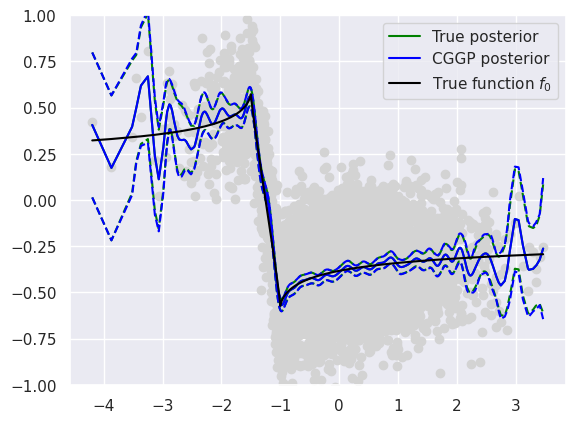}
      \caption{CGGP \( \Psi_{ 160 }^{ \text{CG} } \).}
  \end{subfigure}
  \begin{subfigure}[t]{0.40\textwidth}
      \includegraphics[width=\textwidth]{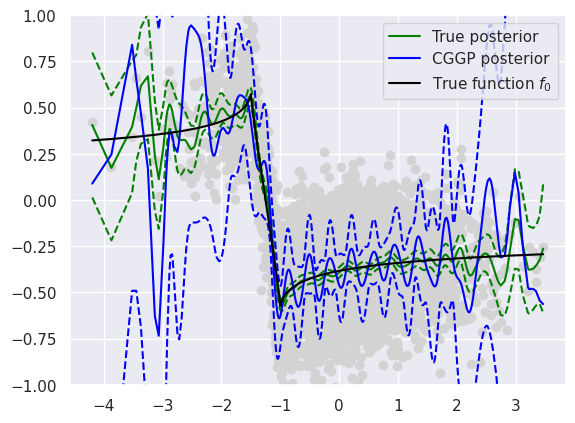}
      \caption{EVGP \( \Psi_{40}^{ \text{EV} } \)}
  \end{subfigure}
  \captionsetup{width=.8\linewidth}
  \caption{Comparing approximation accuracy for other iteration numbers in the
           setting of Figure \ref{fig_N_SquaredExponentialKernel_M80}.
           The EVGP and CGGP posteriors \( \Psi_{m} \) are computed with $m=160$
           and $m=40$ iterations, respectively.
          }
  \label{fig_N_SquaredExponentialKernel_M160M40}
\end{figure}

\begin{figure}[H]
  \centering
  \begin{subfigure}[t]{0.40\textwidth}
      \includegraphics[width=\textwidth]{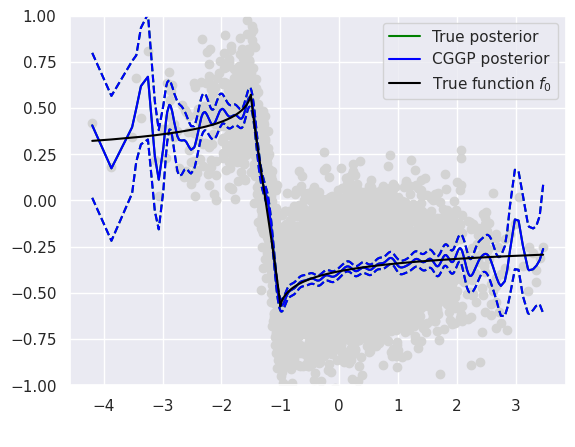}
      \caption{CGGP \( \Psi_{320}^{ \text{EV} } \)}
  \end{subfigure}
  \begin{subfigure}[t]{0.40\textwidth}
      \includegraphics[width=\textwidth]{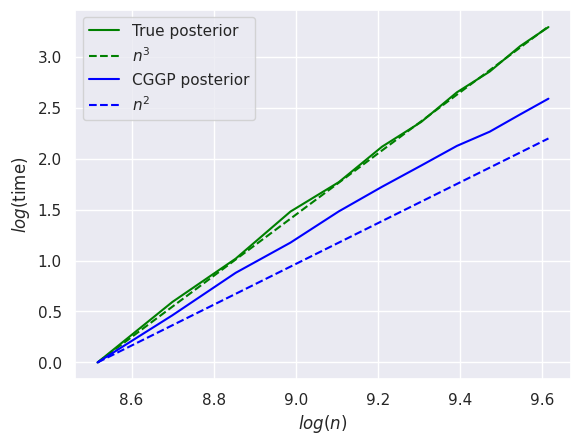}
      \caption{Log-log plot of computation times.
              }
  \end{subfigure}
  \captionsetup{width=.8\linewidth}
  \caption{CGGP posterior \( \Psi^{ \text{CG} }_{m} \) with \( m = 320 \).
           Log-log plot of computation times for the true posterior compared
           with the CGGP posterior for the optimal choice of \( m \).
           \( n \in \{ 5000, 6000, \dots, 15000 \} \) and intercept shifted to
           \( 0 \).
          }
  \label{fig_N_SquaredExponentialKernel_M320Times}
\end{figure}

Considering half the optimal amount of iterations \( m = 40 \), this
relationship is lost as the CGGP posterior cannot approximate the true posterior
even within one standard deviation of the design distribution, see Figure
\ref{fig_N_SquaredExponentialKernel_M160M40}.
This is also reflected by the substantially larger MSE of the posterior
mean.
Finally, increasing the number of iterations beyond the indicated size, here we
choose \( m = 320 \), visually recovers the true posterior exactly, however,
without improving the MSE, see Figure
\ref{fig_N_SquaredExponentialKernel_M320Times}.

In conclusion, the numerical analysis above indicates that the approximate
posterior, based on sufficiently many iterations, produces similarly reliable
inference on the true functions \( f_{0} \) while reducing the computational
complexity of the procedure substantially.
The amount of reduction is illustrated in Figure
\ref{fig_N_SquaredExponentialKernel_M320Times}, which shows that the
computation time of the true posterior scales like \( O( n^{3}) \), whereas
the approximation scales like \( O( m n^{2} ) \).



\newpage

\begin{appendices}

\section{Proofs of main results}
\label{sec_ProofsOfMainResults}

\begin{proof} 
  [\normalfont \textbf{Proof of Lemma
               \ref{lem_EmpiricalEigenvectorActionsAndVariationalBayes}}
               (Empirical eigenvector actions and variational Bayes)]
  \label{prf_EmpiricalEigenvectorActionsAndVariationalBayes}
  Since two \\ Gaussian processes are uniquely determined by their mean and
  covariance functions, it suffices to check that these are equal.
  The mean and covariance functions of the empirical eigenvector posterior from
  Algorithm \ref{alg_GPApproximation} are given by
  \begin{align}
    x & \mapsto k( X, x )^{ \top } C_{m}^{ \text{EV} } Y 
        = 
                k( X, x )^{ \top }
                \sum_{ j = 1 }^{m} 
                \frac{ \langle \widehat{u}, Y \rangle }
                     { \widehat{ \mu }_{j} + \sigma^{2} } 
                \widehat{u}_{j},
    \\
    ( x, x' ) & \mapsto k( x, x' ) - k( X, x )^{ \top } C_{m} k( X, x ) 
                = 
                        k( x, x' ) 
                        - 
                        \sum_{ j = 1 }^{m} 
                        \frac{ \langle k( X, x  ), \widehat{u}_{j} \rangle
                               \langle k( X, x' ), \widehat{u}_{j} \rangle
                             }
                             { \widehat{ \mu }_{j} + \sigma^{2} }.
    \notag
  \end{align}
  By combining Equations \eqref{eq_APC_FConditionalOnU},
                         \eqref{eq_APC_VariationalClass} and
                         \eqref{eq_APC_ExplicitSolution},
  the variational Bayes posterior mean and covariance functions, in the notation
  from there, are given by
  \begin{align}
    x & \mapsto \sigma^{2} 
                K_{xu} ( \sigma^{ - 2 } K_{uf} K_{fu} + K_{uu} )^{-1} K_{uf} Y
    \\
    ( x, x' ) & \mapsto   k( x, x' ) 
                        - 
                          K_{xu} K_{uu}^{-1} K_{ux} 
                        + 
                          K_{xu} (\sigma^{-2} K_{fu} K_{uf} + K_{uu })^{-1}
                          K_{ux}. 
    \notag
  \end{align}
  where from Equation \eqref{eq_APC_EmpiricalEigenvectorVB}, it follows that \( 
      K_{uu} = \text{Cov}(U) 
    =
      \diag( \widehat{ \mu }_{1}, \dots, \widehat{ \mu }_{m} ) 
  \),
  \begin{align}
        K_{ u f } 
    & = 
        ( \text{Cov}( U_{j}, F( X_{i} ) ) )_{ j \le m, i \le n } 
    = 
        \begin{pmatrix}
          \widehat{ \mu }_{1} \widehat{u}_{1}^{ \top } \\
          \vdots \\
          \widehat{ \mu }_{m} \widehat{u}_{m}^{ \top } \\
        \end{pmatrix}
    \in 
        \mathbb{R}^{ m \times n }
    \\
    \qquad \text{ and } \qquad 
        K_{ x u }
    & = 
        ( \text{Cov}( F(x), U_{j} ) )_{ j \le m }
    = 
        k( X, x )^{ \top } 
        \begin{pmatrix}
          \widehat{u}_{1}
          \dots
          \widehat{u}_{m}
        \end{pmatrix}
    \in
        \mathbb{R}^{ 1 \times m }.
    \notag
  \end{align}
  Consequently, the mean function is 
  \begin{align}
    x 
    & \mapsto 
    k( X, x )^{ \top } 
    \begin{pmatrix}
      \vdots              &       & \vdots              \\
      \widehat{u}_{1} & \dots & \widehat{u}_{m} \\
      \vdots              &       & \vdots  
    \end{pmatrix}
    \begin{pmatrix}
      ( \widehat{ \mu }_{1}^{2} + \sigma^{2} \widehat{ \mu }_{1} )^{-1} &        &  \\
                                                                            & \ddots &  \\
                                                                            &        & ( \widehat{ \mu }_{m}^{2} + \sigma^{2} \widehat{ \mu }_{m} )^{-1} 
    \end{pmatrix}
    \begin{pmatrix}
      \widehat{ \mu }_{1} \langle Y, \widehat{u}_{1} \rangle \\
      \vdots \\
      \widehat{ \mu }_{m} \langle Y, \widehat{u}_{m} \rangle \\
    \end{pmatrix}
    \notag
    \\
    & = 
    k( X, x )^{ \top }
    \sum_{ j = 1 }^{m} 
    \frac{ \langle \widehat{u}, Y \rangle }
         { \widehat{ \mu }_{j} + \sigma^{2} } 
    \widehat{u}_{j}.
  \end{align}
  An analogous calculation shows that also the coavariance functions coincide.
\end{proof}

\begin{proof} 
  [\normalfont \textbf{Proof of Proposition \ref{prp_KullbackLeiblerBound}}
   (Kullback-Leibler bound)]
  \label{prf_KullbackLeiblerBound}
  \
  We pick up the reasoning at the decomposition of the Kullback-Leibler
  divergence from Equation \eqref{eq_T_PreliminaryKLBound}.
  For notational convenience, we omit the dependency of \( m_{n} \) and \(
  \tilde m_{n} = m_{n} \log n \) on \( n \).

  We note that \( C_{ \tilde m } - C_{m} \ge 0 \) implies
  \begin{align}
    \tr ( K - K K_{ \sigma }^{-1} K )^{-1} K ( C_{ \tilde m } - C_{m} ) K  
    & \ge 0
  \end{align}
  and
  \begin{align}
    & \ \ \ \
            \| ( K_{ \sigma }^{-1} - C_{ \tilde m } ) Y
            \|_{ K ( K - K K_{ \sigma }^{-1} K )^{-1} K }
    = 
            \| \Gamma_{ \tilde m } K_{ \sigma } K_{ \sigma }^{-1} Y
            \|_{ K ( K - K K_{ \sigma }^{-1} K )^{-1} K }
    \\
    & =  
            \| \Gamma_{ \tilde m } K_{ \sigma } \Gamma_{m} K_{ \sigma } 
               K_{ \sigma }^{-1} Y
            \|_{ K ( K - K K_{ \sigma }^{-1} K )^{-1} K }
    \le 
            \| \Gamma_{m} K_{ \sigma } K_{ \sigma }^{-1} Y
            \|_{ K ( K - K K_{ \sigma }^{-1} K )^{-1} K },
    \notag
    \\
    & = 
            \| ( K_{ \sigma }^{-1} - C_{m} ) Y
            \|_{ K ( K - K K_{ \sigma }^{-1} K )^{-1} K },
    \notag
  \end{align}
  where we have used the fact that \( \Gamma_{m} K_{ \sigma } \) is the 
  \( K_{ \sigma } \)-orthogonal projection onto 
  \( \vspan \{ s_{j}: j \le m \}^{ \perp } \).
  Using this, we may assume without loss of generality that 
  \( 
        \kl( \Psi_{ \tilde m }, \Pi_{n}( \cdot | X, Y ) ) 
    \le
        \kl( \Psi_{m}, \Pi_{n}( \cdot | X, Y ) )
  \),
  i.e., we can consider the approximate posterior based on 
  \( ( \tilde u_{i})_{ i \le m } \) but from the Lanczos algorithm computed
  based on \( \tilde m \).

  \textbf{Step 1: Empirical eigenvector part.}
  Treating the first part of the decomposition from Equation
  \eqref{eq_T_KLDDecomposition}
  \begin{align}
    \label{eq_A_FirstPartOfTheKLDecomposition}
    (\dag): = \tr ( K - K K_{ \sigma }^{-1} K )^{-1}
                  K ( K_{ \sigma }^{-1} - C_{m}^{ \text{EV} } ) K
            + 
              2 \| ( K_{ \sigma }^{-1} - C_{m}^{ \text{EV} } ) Y 
              \|_{ K ( K - K K_{ \sigma }^{-1} K )^{-1} K }^{2},
  \end{align}
  which only depends on the idealized empirical eigenvalues 
  \( ( \widehat{u}_{i} )_{ i \le m } \), is straightforward.
  The trace term satisfies
  \begin{align}
    & \ \ \ \
        \tr ( K - K K_{ \sigma }^{-1} K )^{-1}
            K ( K_{ \sigma }^{-1} - C_{m}^{ \text{EV} } ) K
    \\
    & = 
        \tr ( K - K K_{ \sigma }^{-1} K )^{-1}
            \Big( \sum_{ j = m + 1 }^{n} 
                  \frac{ \widehat{ \mu }_{j}^{2}          }
                       { \widehat{ \mu }_{j} + \sigma^{2} } 
                  \widehat{u}_{j} \widehat{u}_{j}^{ \top }
            \Big)
    = 
        \frac{n}{ \sigma^{2} } 
        \sum_{ j = m + 1 }^{n} \widehat{ \lambda }_{j},
    \notag 
  \end{align}
  since the matrix \( ( K - K K_{ \sigma }^{-1} K ) \) can also be decomposed in terms of the projectors \( ( \widehat{u}_{j} \widehat{u}_{j}^{ \top } )_{ j \le n } \) with eigenvalues 
  \begin{align}
      \Big(  \widehat{ \mu }_{j} 
           - 
             \frac{ \widehat{ \mu }_{j}^{2}          }
                  { \widehat{ \mu }_{j} + \sigma^{2} } 
      \Big)
    = 
      \frac{ \widehat{ \mu }_{j} \sigma^{2}   }
           { \widehat{ \mu }_{j} + \sigma^{2} } 
    \qquad \text{ for all } j \le n.
  \end{align}

  For the term involving \( Y = \mathbf{f}_{0} + \varepsilon \), we set 
  \( \Gamma_{m}^{ \text{EV} }: = K_{ \sigma }^{-1} - C_{m}^{ \text{EV} } \) and
  estimate
  \begin{align}
    & \ \ \ \
              \| ( K_{ \sigma }^{-1} - C_{m}^{ \text{EV} } ) Y 
              \|_{ K ( K - K K_{ \sigma }^{-1} K ) K }^{2} 
    = 
              Y^{ \top } \Gamma^{ \text{EV} }_{m} 
              K ( K - K K_{ \sigma }^{-1} K )^{-1} K
              \Gamma_{m}^{ \text{EV} } Y
    \notag
    \\
    & \le 
              2 \mathbf{f}_{0}^{ \top }
              \Gamma_{m}^{ \text{EV} } K ( K - K K_{ \sigma }^{-1} K )^{-1} K 
              \Gamma_{m}^{ \text{EV} }
              \mathbf{f}_{0} 
            + 
              2 \varepsilon^{ \top }
              \Gamma_{m}^{ \text{EV} } K ( K - K K_{ \sigma }^{-1} K )^{-1} K
              \Gamma_{m}^{ \text{EV} }
              \varepsilon.
    \notag
  \end{align}
  We further split 
  \( \mathbf{f}_{0}: = \mathbf{h} + \mathbf{f}_{0} - \mathbf{h} \), where 
  \( h \in \mathbb{H} \) is an element of the RKHS induced by the prior process
  and estimate both individually again.
  Then,
  \begin{align}
    & \ \ \ \
            \mathbf{h}^{ \top } 
            \Gamma_{m}^{ \text{EV} } K ( K - K K_{ \sigma }^{-1} K )^{-1} K
            \Gamma_{m}^{ \text{EV} }
            \mathbf{h}
    \\
    & = 
            \mathbf{h}^{ \top } K_{ \sigma }^{-1}
            ( I - K_{ \sigma } C_{m}^{ \text{EV} } ) 
            K ( K - K K_{ \sigma }^{-1} K )^{-1} K \Gamma_{m}^{ \text{EV} }
            \mathbf{h}
    \notag
    \\
    & \le 
            \mathbf{h}^{ \top } K_{ \sigma }^{-1} \mathbf{h}
            \|
              ( I - K_{ \sigma } C_{m}^{ \text{EV} } ) 
              K ( K - K K_{ \sigma }^{-1} K )^{-1} K \Gamma_{m}^{ \text{EV} }
            \|_{ \text{op} } 
    \\
    & \le 
            \| h \|_{ \mathbb{H} }^{2}
            \| K ( K - K K_{ \sigma }^{-1} K )^{-1} K \Gamma_{m}^{ \text{EV} } 
            \|_{ \text{op} } 
      \le 
            \frac{n}{ \sigma^{2} }
            \| h \|_{ \mathbb{H} }^{2} \widehat{ \lambda }_{ m + 1 },
    \notag
  \end{align}
  where we have used the fact that 
  \( 
        h^{ \top } K_{ \sigma }^{-1} h 
    \le 
        \| P_{n, k} h \|_{ \mathbb{H} } 
  \)
  with \( P_{n, k} \) the projection onto 
  \( \vspan \{ k( X_{i}, \cdot): i = 1, \dots, n \} \) and the singular value
  decomposition of 
  \( K ( K - K K_{ \sigma }^{-1} K )^{-1} K \Gamma_{m}^{ \text{EV} } \).

  An analogous estimate shows
  \begin{align}
    & \ \ \ \
            ( \mathbf{f}_{0} - \mathbf{h} )^{ \top } 
            \Gamma_{m}^{ \text{EV} } K ( K - K K_{ \sigma }^{-1} K )^{-1} K
            \Gamma_{m}^{ \text{EV} }
            ( \mathbf{f}_{0} - \mathbf{h} )
    \\
    & \le 
            n \| f_{0} - h \|_{n}^{2}  
            \| \Gamma_{m}^{ \text{EV} } K ( K - K K_{ \sigma }^{-1} K )^{-1} K
               \Gamma_{m}^{ \text{EV} } 
            \|_{ \text{op} } 
    \le 
            \frac{n}{ \sigma^{2} } \| f_{0} - h \|_{n}^{2}.
    \notag
  \end{align}
  Plugging all estimates into Equation 
  \eqref{eq_A_FirstPartOfTheKLDecomposition} yields that
  \begin{align}
            ( \dag ) 
    & \le
            \frac{ C n }{ \sigma^{2} } 
            \Big( \sum_{ j = m + 1 }^{n} \widehat{ \lambda }_{j}
                + \| h \|_{ \mathbb{H} }^{2} \widehat{ \lambda }_{ m + 1 } 
                + \| f_{0} - h \|_{n}^{2}
            \Big)
    \\
    & + 
            2 \varepsilon^{ \top }
            \Gamma^{ \text{EV} }_{m} K ( K - K K_{ \sigma }^{-1} K )^{-1} K
            \Gamma_{m}^{ \text{EV} }
            \varepsilon
    \notag
  \end{align}
  for any \( h \in \mathbb{H} \). 

  \textbf{Step 2: Lanczos eigenvector part.}
  We treat the second part of the decomposition in Equation
  \eqref{eq_T_PreliminaryKLBound}
  \begin{align}
        ( \dag\dag ):
    = 
        \tr ( K - K K_{ \sigma }^{-1} K )^{-1} 
            K ( C^{ \text{EV} }_{m} - C_{m} ) K
      + 
        2 \| ( C_{m} - C^{ \text{EV} }_{m} ) Y 
          \|_{ K ( K - K K_{ \sigma }^{-1} K ) K }^{2},
  \end{align}
  with \( C_{m} = C_{m}^{ \text{L} } \) and set 
  \( D_{m}: = C_{m}^{ \text{EV} } - C_{m}^{ \text{L} } \).
  The trace term satisfies
  \begin{align}
    & \ \ \ \
          \tr ( K - K K_{ \sigma }^{-1} K )^{-1} 
              K ( C^{ \text{EV} }_{m} - C_{m}^{ \text{L} } ) K
    = 
          \tr K ( K - K K_{ \sigma }^{-1} K )^{-1} K D_{m}
    \\
    & \le 
          \| K ( K - K K_{ \sigma }^{-1} K )^{-1} K \|_{ HS } 
          \| D_{m} \|_{ \text{HS} } 
    \le 
          \frac{ n ( \| K \|_{ \text{op} }^{2} + \sigma^{2} \| K \|_{ \text{op} } ) }
               { \sigma^{2} }
          \| D_{m} \|_{ \text{HS} },
    \notag
  \end{align}
  where we have used the singular value decomposition of \( K \) again.

  For the other term, we again split 
  \( Y = \mathbf{f}_{0} - \mathbf{h} + \mathbf{h} + \mathbf{ \varepsilon } \)
  for an arbitrary element \( h \in \mathbb{H} \) and estimate 
  \begin{align}
    & \ \ \ \
            \| ( C_{m}^{ \text{L} } - C_{m}^{ \text{EV} } ) Y 
            \|_{ K ( K - K K_{ \sigma }^{-1} K ) K }^{2} 
    \\
    & \le 
            C \| K ( K - K K_{ \sigma }^{-1} K ) K \|_{ \text{op} }
            ( \| \varepsilon \|^{2} + n \| f_{0} - h \|_{n}^{2} + 
              \| \mathbf{h} \|^{2}
            ) 
            \| D_{m} \|_{ \text{HS} }^{2} 
    \\
    & \le 
            C \frac{ \| K \|_{ \text{op} }^{2} + \sigma^{2} \| K \|_{ \text{op} } }
                   { \sigma^{2} } 
            ( \| \varepsilon \|^{2} + n \| f_{0} - h \|_{n}^{2} 
            + ( \| K \|_{ \text{op} } + \sigma^{2} ) \| h \|_{ \mathbb{H} }) 
            \| D_{m} \|_{ \text{HS} }^{2},
    \notag
  \end{align}
  where we have used again that 
  \( 
    \| \mathbf{h} \|^{2} \le ( \| K \|_{ \text{op} } + \sigma^{2} ) 
                             \| K_{ \sigma }^{ - 1 / 2 } \mathbf{h} \|^{2} 
                         \le ( \| K \|_{ \text{op} } + \sigma^{2} )
                             \| h \|_{ \mathbb{H} }^{2} 
  \). 
  Together, this yields
  \begin{align}
    ( \dag \dag ) 
    & \le 
    C \Big( \frac{ n ( \| K \|_{ \text{op} }^{2} + \sigma^{2} \| K \|_{ \text{op} } ) }
                 { \sigma^{2} }
            \| D_{m} \|_{ \text{HS} }
          \\
    & \qquad + 
            \frac{ \| K \|_{ \text{op} }^{2} + \sigma^{2} \| K \|_{ \text{op} } }
                 { \sigma^{2} } 
            ( \| \varepsilon \|^{2} + n \| f_{0} - h \|_{n}^{2} 
            + ( \| K \|_{ \text{op} } + \sigma^{2} ) \| h \|_{ \mathbb{H} }^{2} 
            ) 
            \| D_{m} \|_{ \text{HS} }^{2}
      \Big)
    \notag 
  \end{align}
  for any \( h \in \mathbb{H} \).

  \textbf{Step 3: Probabilistic part.} 
  Via Markov's inequality, for a fixed sequence \( M_{n} \to \infty \), we can
  restrict to an event with probability converging to one, on which
  \begin{align}
    \sum_{ j = m + 1 }^{n} 
    \widehat{ \lambda }_{j} \le M_{n} \sum_{ j = m + 1 }^{n} \lambda_{j},
    \quad 
    \widehat{ \lambda }_{ m + 1 } \le M_{n} 
                                      \mathbb{E} \widehat{ \lambda }_{ m + 1 }, 
    \quad 
    \| f_{0} - h \|_{n}^{2} \le M_{n} \| f_{0} - h \|_{ L^{2} }^{2},
  \end{align}
  where we have also used Proposition \ref{prp_ExpectationOfTheTrace} for the
  first inequality.
  By the same reasoning, we can also assume that 
  \begin{align}
    & \ \ \ \
          \varepsilon^{ \top }
          \Gamma_{m}^{ \text{EV} } K ( K - K K_{ \sigma }^{-1} K )^{-1} K
          \Gamma_{m}^{ \text{EV} }
          \varepsilon
    \le 
          M_{n} \mathbb{E}_{ \varepsilon }
          \tr 
          ( \varepsilon \varepsilon^{ \top } )
          \Gamma_{m}^{ \text{EV} } K ( K - K K_{ \sigma }^{-1} K )^{-1} K
          \Gamma_{m}^{ \text{EV} } 
    \notag
    \\
    & = 
          M_{n} \sigma^{2}
          \tr \Gamma_{m}^{ \text{EV} } K ( K - K K_{ \sigma }^{-1} K )^{-1} K
              \Gamma_{m}^{ \text{EV} }
    \le 
          \frac{ M_{n} n }{ \sigma^{2} } 
          \sum_{ j = m + 1 }^{n} \widehat{ \lambda }_{j}
  \end{align}
  and 
  \( 
    \| \varepsilon \|^{2} \le M_{n} \mathbb{E} \| \varepsilon \|^{2} 
                            = M_{n} \sigma^{2} n
  \). 

  By further restricting to the event from Proposition
  \ref{prp_RelativePerturbatonBounds} with \( m_{0} = C \tilde m = C m \log n \)
  and using Proposition \ref{prp_ProbabilisticBoundsForLanczosEigenpairs}, we
  finally have
  \begin{align}
            \| D_{m} \|_{ \text{HS} } 
    & \le 
            \sum_{ j = 1 }^{m} 
            \Big\|
              \frac{1}{ \widehat{ \mu }_{j} + \sigma^{2} } 
              \widehat{u}_{j} \widehat{u}_{j}^{ \top } 
              - 
              \frac{1}{ \tilde \mu_{j} + \sigma^{2} } 
              \tilde u_{j} \tilde u_{j}^{ \top } 
            \Big\|_{ \text{HS} }
    \\
    & \le
            \sum_{ j = 1 }^{m} 
            \Big| 
              \frac{1}{ \widehat{ \mu }_{j} + \sigma^{2} } 
            - 
              \frac{1}{ \tilde \mu_{j} + \sigma^{2} } 
            \Big|
            + 
            \frac{ \| \widehat{u}_{j} \widehat{u}_{j}^{ \top } 
                    - \tilde u_{j} \tilde u_{j} ^{ \top } 
                   \|_{ \text{HS} }
                 }
                 { \sigma^{2} }
    \notag
    \\
    & \le 
            \Big( 
              \frac{ m n }{ \sigma^{4} }
            + 
              \frac{m}{ \sigma^{2} } 
            \Big)
            ( 1 + c )^{-m}
    \notag
  \end{align}
  and 
  \( \| K \|_{ \text{op} } =   \widehat{ \mu }_{1} 
                           =   n \widehat{ \lambda }_{1} 
                           \le C n \lambda_{1} 
  \) as in Step 1 of the proof of Proposition
  \ref{prp_ProbabilisticBoundsForLanczosEigenpairs}.

  Now, Assumption \hyperref[ass_ConcentrationFunctionInequality]
                  {\normalfont \textbf{{\color{blue} (CFUN)}}} 
  implies that we can choose \( h \in \mathbb{H} \) such that 
  \begin{align}
    \| h \|_{ \mathbb{H} }^{2} \le n \varepsilon_{n}^{2} 
    \qquad \text{ and } \qquad 
    \| f_{0} - h \|_{ L^{2} }^{2} \le \varepsilon_{n}^{2}.
  \end{align}
  In combining all of our estimates, we note that since \( C' \log n \le m \)
  for \( C' \) sufficiently large, all terms involving 
  \( \| D_{m} \|_{ \text{HS} } \) are of lower order.
  The remaining terms yield the estimate
  \begin{align}
            \kl( \Psi_{ \tilde m }, \Pi_{n}( \cdot | X, Y ) ) 
    & \le 
            ( \dag ) + ( \dag \dag ) 
    \le 
            \frac{ C M_{n}^{2} n }{ \sigma^{2} }
            \Big( \varepsilon_{n}^{2}
                + \sum_{ j = m + 1 }^{ \infty } \lambda_{j} 
                + n \varepsilon_{n}^{2} \mathbb{E} \widehat{ \lambda }_{ m + 1 }
            \Big). 
  \end{align}
  The result now follows by adjusting the sequence \( M_{n} \) accordingly.
  This finishes the proof.
\end{proof}

\begin{proof} 
  [\normalfont \textbf{Proof of Corollary
   \ref{cor_EquivalenceBetweenLGPandCGGP}} (Equivalence between LGP and CGGP)]
  \label{prf_EquivalenceBetweenLGPandCGGP}
  The approximate posterior from Algorithm \ref{alg_GPApproximation} only
  depends on the approximation \( C_{m} \) of \( K_{ \sigma }^{-1} \).
  Additionally, since \( C_{m} K_{ \sigma }^{-1} \) is the \( K_{ \sigma }^{-1}
  \)-orthogonal projection onto \( \vspan \{ s_{j}: j \le m \} \), see Lemma S1
  in \citet{WengerEtal2022ComputationalUncertainty}, \( C_{m} \) only depends on
  \( \vspan \{ s_{j}: j \le m \} \).
  Consequently, if for two versions of the algorithm \( \vspan \{ s_{j}: j \le m \} \) coincide, then the resulting approximate posteriors are identical.

  Now, for the Lanczos algorithm initiated at \( Y / \| Y \| \) iterated for \( m \) steps
  \begin{align}
      \vspan \{ s_{j}: j \le m \} 
    = 
      \vspan \{ \tilde u_{j}: j \le m \} 
    =
      \vspan \{ Y, K Y, \dots K^{ m - 1 } Y \}
  \end{align}
  and since the conjugate gradients of the CG-method span the Krylov space
  associated with \( K_{ \sigma } \) and \( Y \), we also have 
  \begin{align}
      \vspan \{ s_{j}: j \le m \} 
    = 
      \vspan \{ Y, K_{ \sigma } Y, \dots, K_{ \sigma }^{ m - 1 } Y \} 
    = 
      \vspan \{ Y, K Y, \dots K^{ m - 1 } Y \}.
    \notag
  \end{align}
  This proves that the resulting approximate posteriors are identical.
  The last statement of Corollary
  \ref{cor_EquivalenceBetweenLGPandCGGP} then follows immediately.
\end{proof}


\section{Auxiliary results}
\label{sec_AuxiliaryResults}

\begin{lemma}[Multivariate conditional Gaussians]
  \label{lem_MultivariateConditionalGaussians}
  For a random variable \( ( X, Y )^{ \top } \in \mathbb{R}^{ m + n } \) with joint distribution
  \begin{align}
    \begin{pmatrix}
      X \\
      Y
    \end{pmatrix}
    \sim 
    N \left( \begin{pmatrix}
              \mu \\ \nu 
            \end{pmatrix},
            \begin{pmatrix}
              R          & C \\ 
              C^{ \top } & S
            \end{pmatrix}
    \right)
  \end{align}
  where \( R \) is nonsingular, the conditional distribution \( Y | X \) is given by \( N( \nu + C^{ \top } ( X - \mu ), S - C^{ \top } R^{-1} C ) \).
\end{lemma}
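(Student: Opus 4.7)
The plan is to prove this by constructing a linear transformation that decouples $Y$ from $X$, exploiting the fact that for jointly Gaussian vectors, uncorrelatedness implies independence. Concretely, I would define the auxiliary random vector
\begin{align}
  Z := Y - C^{\top} R^{-1}( X - \mu ),
\end{align}
which is well defined since $R$ is nonsingular. By construction, $(Z, X)^{\top}$ is an affine transformation of $(X, Y)^{\top}$, hence jointly Gaussian.

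The first key computation is to show that $Z$ and $X$ are uncorrelated. A direct calculation gives
\begin{align}
  \operatorname{Cov}(Z, X)
  = \operatorname{Cov}(Y, X) - C^{\top} R^{-1} \operatorname{Cov}( X, X )
  = C^{\top} - C^{\top} R^{-1} R = 0.
\end{align}
Since $(Z, X)$ is jointly Gaussian, zero covariance upgrades to independence, so $Z$ is independent of $X$. The second routine check is that $\mathbb{E} Z = \nu$ and
\begin{align}
  \operatorname{Var}(Z)
  = S - C^{\top} R^{-1} C - C^{\top} R^{-1} C + C^{\top} R^{-1} R R^{-1} C
  = S - C^{\top} R^{-1} C,
\end{align}
so marginally $Z \sim N(\nu, S - C^{\top} R^{-1} C)$.

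To conclude, I would write $Y = Z + C^{\top} R^{-1}( X - \mu )$ and condition on $X = x$. Since $Z$ is independent of $X$, the conditional law of $Y$ given $X = x$ is the law of the constant shift $C^{\top} R^{-1}( x - \mu )$ added to the unconditional law of $Z$. This yields
\begin{align}
  Y \mid X = x \;\sim\; N\bigl( \nu + C^{\top} R^{-1}( x - \mu ),\; S - C^{\top} R^{-1} C \bigr),
\end{align}
which is the claim. There is no real obstacle here; the only point requiring mild care is ensuring that we invoke the Gaussian implication ``uncorrelated $\Rightarrow$ independent'' at the level of the joint vector $(Z, X)$ rather than just the marginals, which is justified because $(Z, X)^{\top}$ is a linear image of the jointly Gaussian $(X, Y)^{\top}$.
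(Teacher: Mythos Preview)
Your proof is correct and self-contained; it is the standard decoupling (regression) argument for conditional Gaussian distributions. The paper itself does not give a proof at all but simply refers to Proposition 3.13 in Eaton's \emph{Multivariate Statistics}, so your write-up actually supplies more than the paper does. One incidental point: the statement as printed in the paper appears to have a typo in the conditional mean (the factor $R^{-1}$ is missing); your derivation yields the correct formula $\nu + C^{\top} R^{-1}(X-\mu)$, which is also what is used elsewhere in the paper.
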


\begin{proof}[Proof]
  See Proposition 3.13 in \cite{Eaten2007MultivariateStatistics}.
\end{proof}

\begin{proof}[\normalfont \textbf{Proof of Lemma \ref{lem_EVGP}} (EVGP)]
  \label{prf_EVGP}
  We establish the claim via induction over \( m \in \mathbb{N} \).
  For \( m = 1 \), the definition of Algorithm \ref{alg_GPApproximation} states
  that
  \begin{align}
        d_{1} 
    =
        ( I - 0 \cdot K_{ \sigma } ) s_{1} = \widehat{u}_{1}, 
    \qquad \text{ and } \qquad 
        \eta_{1} 
    = 
        \widehat{u}_{1}^{ \top } K_{ \sigma } \widehat{u}_{1} 
    = 
        \widehat{ \mu }_{1} + \sigma^{2}, 
  \end{align}
  which yields 
  \( 
    C^{ \text{EV} }_{1} = ( \widehat{ \mu }_{1} + \sigma^{2}  )^{-1}
                          \widehat{u}_{1} \widehat{u}_{1}^{ \top } 
  \).
  Suppose now that we have shown the claim for \( m \). 
  Then, we obtain
  \begin{align}
    d_{ m + 1 } 
    = 
    ( I - C_{m}^{ \text{EV} } K_{ \sigma } ) \widehat{u}_{ m + 1 } 
    = 
    \widehat{u}_{ m + 1 } 
    \quad \text{ and } \quad 
    \eta_{ m + 1 } 
    = 
    \widehat{u}_{ m + 1 }^{ \top } K_{ \sigma } \widehat{u}_{ m + 1 } 
    = 
    \widehat{ \mu }_{ m + 1 } + \sigma^{2},
  \end{align}
  where we have used that
  \(
    C^{ \text{EV} }_{m} = \sum_{ j = 1 }^{m} 
                            ( \widehat{ \mu }_{j} + \sigma^{2}  )^{-j}
                            \widehat{u}_{j} \widehat{u}_{j}^{ \top } 
  \).
  This finishes the proof.
\end{proof}

\begin{proof}[\normalfont \textbf{Proof of Lemma \ref{lem_LGP}} (LGP)]
  \label{prf_LGP}
  We establish the result via induction over \( m \in \mathbb{N} \). 
  For \( m = 1 \), the definition of Algorithm \ref{alg_GPApproximation} and
  Lemma \ref{lem_ElementaryPropertiesOfLanczosEigenquantities} (ii) yield that 
  \begin{align}
    d_{1} = s_{1} = \tilde u_{1}
    \qquad \text{ and } \qquad 
    \eta_{1} = \tilde u_{1}^{ \top } K_{ \sigma } \tilde u_{1} 
             = ( \tilde \mu_{1} + \sigma^{2} ) 
               \tilde u_{1}^{ \top } ( \tilde u_{1} + u^{ \perp } ) 
             = \tilde \mu_{1} + \sigma^{2},
  \end{align}
  for some \( u^{ \perp } \perp \mathcal{K}_{ \tilde m } \).

  Suppose, we have established the claim for \( m \in \mathbb{N} \).
  Then, we obtain
  \begin{align}
            d_{ m + 1 } 
    & = 
            ( I - C_{m}^{ \text{L} } K_{ \sigma } ) \tilde u_{ m + 1 } 
    = 
            \tilde u_{ m + 1 } 
            -
            C_{m}^{ \text{L} }
            ( ( \tilde \mu_{ m + 1 } + \sigma^{2} ) \tilde u_{ m + 1 } 
            +
              u_{1}^{ \perp } 
            ) 
    = 
            \tilde u_{ m + 1 },
    \\
            \eta_{ m + 1 } 
    & = 
            \tilde u_{ m + 1 }^{ \top } K_{ \sigma } \tilde u_{ m + 1 } 
    = 
            \tilde u_{ m + 1 }^{ \top } 
            ( ( \widehat{ \mu }_{ m + 1 } + \sigma^{2} ) \tilde u_{ m + 1 } 
            + 
              u_{2}^{ \perp } 
            ) 
    = 
            \widehat{ \mu }_{ m + 1 } + \sigma^{2},
    \notag
  \end{align}
  where again, \( u_{1}^{ \perp }, u_{2}^{ \perp } \) are elements in 
  \( \mathcal{K}_{m}^{ \perp } \) and we have used that
  \(
    C^{ \text{L} }_{m} = \sum_{ j = 1 }^{m} 
                           ( \tilde \mu_{j} + \sigma^{2}  )^{-j}
                           \tilde u_{j} \tilde u_{j}^{ \top } 
  \).
\end{proof}

\begin{proof}[\normalfont \textbf{Proof of Lemma \ref{lem_CGGP}} (CGGP)]
  \label{prf_CGGP}
  We establish the result via induction over \( m \in \mathbb{N} \). 
  For \( m = 1 \), the definition of Algorithm \ref{alg_GPApproximation}
  yields that
  \begin{align}
    d_{1} = s_{1} = d_{1}^{ \text{CG} } 
    \qquad \text{ and } \qquad
    \eta_{1} = ( d_{1}^{ \text{CG} } )^{ \top } K_{ \sigma } d_{1}^{ \text{CG} }
  \end{align}
  Suppose, we have established the claim for \( m \in \mathbb{N} \).
  Then, we obtain
  \begin{align}
    d_{ m + 1 } = ( I - C_{m}^{ \text{CG} } K_{ \sigma } ) 
                                           d_{ m + 1 }^{ \text{CG} }  
                = d_{ m + 1 }^{ \text{CG} }
    \qquad \text{ and } \qquad 
    \eta_{ m + 1 } = ( d_{ m + 1 }^{ \text{CG} } )^{ \top } K_{ \sigma } 
                       d_{ m + 1 }^{ \text{CG} },
  \end{align}
  where we have used that 
  \( 
       C_{m}^{ \text{CG} } 
    =
       \sum_{ j = 1 }^{m}
       ( ( d_{j}^{ \text{CG} } )^{ \top } K_{ \sigma } 
           d_{j}^{ \text{CG} } 
       )^{-1} 
         d_{j}^{  \text{CG} } 
       ( d_{j}^{  \text{CG} } )^{ \top }
  \)
  and the conjugacy condition from Equation \eqref{eq_APPN_ConjugacyCondition}.
\end{proof}

\begin{lemma}[KL between posterior processes]
  \label{lem_KLBetweenPosteriorProcesses}
  Conditional on the design \( X \) and the observations \( Y \), the
  Kullback-Leibler divergence \( \kl( \Psi_{m} , \Pi_{n}( \cdot | X, Y ) ) \)
  between the approximate posterior GP from Algorithm \ref{alg_GPApproximation}
  and the true posterior GP is equal to the Kullback-Leibler divergence 
  \(
    \kl( \Psi_{m} \{ \mathbf{f} \in \cdot \},
         \Pi_{n}\{ \mathbf{f} \in \cdot | X, Y  \} ) 
       )
  \)
  between the finite dimensional Gaussians given by  
  \( \mathbf{f} = ( f( X_{1} ), \dots f( X_{n} ) ) \), i.e., the function 
  \( f \) evaluated at the design points.
\end{lemma}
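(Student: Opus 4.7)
The plan is to apply the chain rule for Kullback–Leibler divergence with respect to the evaluation map $T: f \mapsto \mathbf{f} = (f(X_1), \dots, f(X_n))$. Both $\Psi_m$ and $\Pi_n(\cdot\mid X,Y)$ are Gaussian process measures with means and covariances of the common form $\mu_A(x) = k(X,x)^\top A Y$ and $k_A(x,x') = k(x,x') - k(X,x)^\top A k(X,x')$ for some $A \in \{C_m, K_\sigma^{-1}\}$. The central observation I would establish is that the conditional law of $F$ given $\mathbf{F} = \mathbf{f}$ under such a GP does not depend on $A$ (nor on $Y$). Once this is shown, writing each measure as a mixture against the common conditional kernel $P_{\mathbf{f}}$ reduces the KL divergence between the processes to the KL divergence between the laws of $\mathbf{F}$, which is exactly the finite-dimensional Gaussian KL claimed.

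To establish the $A$-independence of the conditional law, I would invoke Lemma \ref{lem_MultivariateConditionalGaussians} applied to the joint Gaussian vector $(F(x), \mathbf{F})$. A short calculation gives the cross-covariance $\text{Cov}(F(x), \mathbf{F}) = (I - KA) k(X,x)$ and the marginal $\text{Cov}(\mathbf{F}) = K - KAK = K(I - AK)$. Using the factorization $(K - KAK)^{-1} = (I - AK)^{-1} K^{-1}$, the Gaussian conditioning formula gives
\begin{align}
    \mathbb{E}[F(x) \mid \mathbf{F} = \mathbf{f}]
  & =
    \mu_A(x) + k(X,x)^\top (I - AK)(K - KAK)^{-1} (\mathbf{f} - KAY)
    =
    k(X,x)^\top K^{-1} \mathbf{f},
\end{align}
after cancellation of the $A$-dependent pieces. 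An analogous computation for the conditional covariance yields $k(x,x') - k(X,x)^\top K^{-1} k(X,x')$, again free of $A$ and $Y$. Both expressions require only that $K$ and $I - AK$ be nonsingular, which holds almost surely for $A = K_\sigma^{-1}$ and for $A = C_m$ (since $C_m$ is of rank at most $m < n$ and $K$ is a.s.\ strictly positive definite in the regime of interest).

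Equipped with a common conditional kernel $P_{\mathbf{f}}$, the chain rule for KL divergence then yields
\begin{align}
      \kl(\Psi_m, \Pi_n(\cdot \mid X, Y))
  & =
      \kl(\Psi_m^{\mathbf{F}}, \Pi_n(\cdot \mid X, Y)^{\mathbf{F}})
    + \mathbb{E}_{\Psi_m^{\mathbf{F}}}[\kl(P_{\mathbf{f}}, P_{\mathbf{f}})]
    =
      \kl(\Psi_m^{\mathbf{F}}, \Pi_n(\cdot \mid X, Y)^{\mathbf{F}}),
\end{align}
which is precisely the assertion of the lemma. The main obstacle will be justifying the chain rule in the present infinite-dimensional setting. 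I would handle this either by appealing to the standard disintegration theorem for probability measures on the Polish space $L^2(G)$, or, more concretely, by reducing to finite-dimensional marginals: for every finite set of points $x_1, \dots, x_k$ that extends the design, the KL between the corresponding $(n+k)$-dimensional Gaussians of $\Psi_m$ and $\Pi_n(\cdot \mid X, Y)$ already equals the KL between the $n$-dimensional marginals at the design points, by the same conditional-mean/covariance cancellation above; the claim then follows by the determination of GP laws through finite-dimensional marginals.
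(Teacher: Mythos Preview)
Your argument is correct and takes a genuinely different route from the paper. The paper splits the Radon--Nikodym derivative through the prior, writing $\log (d\Psi_m/d\Pi_n(\cdot\mid X,Y)) = \log(d\Psi_m/d\Pi) + \log(d\Pi/d\Pi_n(\cdot\mid X,Y))$, and then shows that each factor depends on $f$ only through $\mathbf f$: the first by tracing the construction of $\Psi_m$ back to the representer weights $W^*, W_m$ and computing $d\mathbb Q/d\mathbb P$ in terms of $d\mathbb P^{W_m}/d\mathbb P^{W^*}$, the second via Bayes' formula $d\mathbb P^F/d\mathbb P^{F\mid Y} = p(Y)/p(Y\mid \vec F)$. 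Your approach instead verifies directly, via Lemma~\ref{lem_MultivariateConditionalGaussians}, that the conditional law of $F\mid \mathbf F$ is the same $A$-independent GP under both $\Psi_m$ and $\Pi_n(\cdot\mid X,Y)$, and then applies the KL chain rule so that the conditional term vanishes.

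The two are equivalent at the level of the underlying fact (a density that factors through $\mathbf f$ is the same thing as matching conditionals given $\mathbf f$), but package it differently. Your route is more transparent and does not use the Bayesian-update construction of $\Psi_m$ at all, only the final mean/covariance formulas; the paper's route avoids invoking disintegration or the chain rule for KL on $L^2(G)$. Your fallback via finite-dimensional marginals is also sound, provided you make explicit the identity $\kl(\mu,\nu)=\sup_S \kl(\mu^S,\nu^S)$ over finite evaluation sets $S$ and use data processing to see that the supremum is attained at $S=\{X_1,\dots,X_n\}$. The invertibility of $I-C_mK$ that you need follows, as you note, from $K-KC_mK \ge K-KK_\sigma^{-1}K>0$ whenever $K>0$; the paper uses the same positivity implicitly when writing down the finite-dimensional KL formula in \eqref{eq_T_KLDDecomposition}.
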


\begin{proof}[Proof]
  Initially, we introduce some helpful notation.
  Let \( ( \Omega, \mathscr{A}, \mathbb{P} ) \) denote the probability space on
  which the Gaussian regression model from Equation
  \eqref{eq_I_GPRegressionModel} is defined conditional on the design \( X \).
  In the following, superscripts will denote push forward measures, see the
  remarks on notation in Section \ref{sec_Introduction}.
  Let \( W^{*} = K_{ \sigma }^{-1} Y \) denote the true representer weights and
  without loss of generality assume that on \( ( \Omega, \mathscr{A}, \mathbb{P}
  ) \) there exists a random variable \( W_{m} \sim N( C_{m} Y, \Gamma_{m} ) \)
  representing our believes at iteration \( m \) of the Bayesian updating scheme
  in Equations \eqref{eq_APPN_BayesianUpdateMean} and
               \eqref{eq_APPN_BayesianUpdateCov}.
  The approximate posterior \( \Psi_{m} \) is the push forward of \( F \) under
  the measure
  \begin{align}
    \mathbb{Q}(A): = \int 
                     \mathbb{P}(A | W^{*} = w)
                     \, \mathbb{P}^{ W_{m} }(d w), 
                     \qquad A \in \mathscr{A},
  \end{align}
  i.e., \( \Psi_{m} = \mathbb{Q}^{F} \).

  Assuming all quantities exist, we may now write the Kullback-Leibler
  divergence in terms of Radon-Nikodym derivatives, i.e.,
  \begin{align}
    \label{eq_B_KLOfGPs}
            \kl( \Psi_{m}, \Pi_{n}( \cdot | X, Y ) ) 
    & = 
            \int \frac{ d \Psi_{m} }{ d \Pi } \, d \Psi_{m} 
          + 
            \int \frac{ d \Pi }{ d \Pi_{n}( \cdot | X, Y ) } \, d \Psi_{m}
    \\
    & = 
            \int \frac{ d \mathbb{Q}^{F} }{ d \mathbb{P}^{F} } 
            \, d \mathbb{Q}^{F} 
          + 
            \int \frac{ d \mathbb{P}^{F} }{ d \mathbb{P}^{ F | Y } } 
            \, d \mathbb{Q}^{F}
    \notag
  \end{align}
  and analogously
  \begin{align}
    \label{eq_B_KLOfFidis}
           \kl( \Psi_{m} \{ \mathbf{ f \in \cdot } \},
                \Pi_{n} \{ \mathbf{f} \in \cdot | X, Y \} 
              ) 
    & = 
           \int \frac{ d \mathbb{Q}^{ \vec{F} } }
                     { d \mathbb{P}^{ \vec{F} } } 
           \, d \mathbb{Q}^{ \vec{F} } 
         + 
           \int \frac{ d \mathbb{P}^{ \vec{F} }     }
                     { d \mathbb{P}^{ \vec{F} | Y } } 
           \, d \mathbb{Q}^{ \vec{F} },
  \end{align}
  where \( \vec{F}: = ( F( X_{1} ), \dots, F( X_{n} ) ) \) denotes the
  finite dimensional evaluation of the process \( F \). 
  Therefore, it is sufficient to show that the integrands in Equations
  \eqref{eq_B_KLOfGPs} and \eqref{eq_B_KLOfFidis} and exist and coincide
  respectively.

  For the first integrand, we initially note that \( \mathbb{Q} \) is absolutely
  continuous with respect to \( \mathbb{P} \) with 
  \begin{align}
    \label{eq_B_RadonNikodymRepresentation}
      \frac{ d \mathbb{Q} }{ d \mathbb{P} } 
    = 
      \Big( \frac{ d \mathbb{P}^{ W_{m} } }{ d \mathbb{P}^{ W^{*} } } \Big) 
      \circ W^{*},
  \end{align}
  where
  \( 
      d \mathbb{P}^{ W_{m} } / d \mathbb{P}^{ W^{*} } 
    = d \mathbb{Q}^{ W_{m} } / d \mathbb{P}^{ W^{*} } 
  \)
  exists, since under \( \mathbb{P} \) both \( W_{m} \) and \( W^{*} \) are
  non-degenerate Gaussians.
  For Equation \eqref{eq_B_RadonNikodymRepresentation}, consider that indeed,
  for any \( A \in \mathscr{A} \), we have
  \begin{align}
    & \ \ \ \
      \int_{A} 
        \frac{ d \mathbb{P}^{ W_{m} } }{ d \mathbb{P}^{ W^{*} } }(W^{*}) 
      \, d \mathbb{P}  
    = 
      \int
        \frac{ d \mathbb{P}^{ W_{m} } }{ d \mathbb{P}^{ W^{*} } }(w) 
      \, \mathbb{P}( A | W^{*} = w ) \mathbb{P}^{ W^{*} }( d w ) 
    = 
      \mathbb{Q}(A).
  \end{align}
  This can now be used to prove 
  \( 
      d \mathbb{Q}^{F} / d \mathbb{P}^{F} 
    = d \mathbb{Q}^{ \vec{F} } / d \mathbb{P}^{ \vec{F} } 
  \):
  For any Borel set \( B \subset L^{2}(G) \), 
  \begin{align}
        \mathbb{Q} \{ F \in B \} 
    & = 
        \int_{ F \in B } 
          \frac{ d \mathbb{Q}^{ W_{m} } }{ d \mathbb{P}^{ W^{*} } }( W^{*} )
        \, d \mathbb{P} 
    = 
        \iint 
          \mathbf{1}_{B}(f) 
          \frac{ d \mathbb{Q}^{ W_{m} } }{ d \mathbb{P}^{ W^{*} } }
          ( K_{ \sigma }^{-1} ( \mathbf{f} + \varepsilon ) )
        \, \mathbb{P}^{ \varepsilon }( d \varepsilon )  
        \, \mathbb{P}^{F}( d f ),
  \end{align}
  where we have used the fact that 
  \( W^{*} = K_{ \sigma }^{-1} Y \) is a function of 
  \( \mathbf{f} \) and \( \varepsilon \).
  This implies
  \begin{align}
        \frac{ d \mathbb{Q}^{F}  }{ d \mathbb{P}^{F} } 
    & = 
        \int \frac{ d \mathbb{Q}^{ W_{m} } }{ d \mathbb{P}^{ W^{*} } } 
             ( \mathbf{f} + \varepsilon )
        \, \mathbb{P}^{ \varepsilon }( d \varepsilon ), 
  \end{align}
  which only depends on \( \mathbf{f} = f( X_{1}, \dots, X_{n} ) \).
  It follows that for any Borel set \( B \subset \mathbb{R}^{n} \), 
  \begin{align}
    \int_{B} \frac{ d \mathbb{Q}^{F} } { d \mathbb{P}^{F} }( \mathbf{f} ) 
    \, d \mathbb{P}^{ \vec{F} }( d \mathbf{f} ) 
    & = 
    \int \mathbf{1}_{B}( \mathbf{f} ) 
         \frac{ d \mathbb{Q}^{F} } { d \mathbb{P}^{F} }(f) 
    \, d \mathbb{P}^{F}( d f ) 
    = 
    \mathbb{Q} \{ \vec{F} \in B \},
  \end{align}
  which establishes existence and equality of the first integrands.

  An analogous argument applies for the second integrands, since
  \begin{align}
    \frac{ d \mathbb{P}^{F} }{ d \mathbb{P}^{ F | Y } } 
    & 
    = 
    \Big( \frac{ d \mathbb{P}^{ F | Y } }{ d \mathbb{P}^{F} }
    \Big)^{-1}
    = 
    \frac{ p(Y) }{ p( Y | F ) } 
    = 
    \frac{ p(Y) }{ p( Y | \vec{F} ) }, 
  \end{align}
  where \( p \) is the likelihood of our model and we have used the fact that distribution of \( Y \) only depends on \( F \) evaluated at the design.
\end{proof}

\begin{proof} 
  [\normalfont \textbf{Proof of Lemma \ref{lem_KrylovSpaceDimension}} 
   (Krylov space dimension)]
  \label{prf_KrylovSpaceDimension}
  We prove the linear independence of the vectors spanning the Krylov space.
  From the linear independence of the \( ( \widehat{u}_{j} )_{ j \le n } \),
  it follows that for any \( \beta \in \mathbb{R}^{ \tilde m } \),
  \begin{align}
        0 
    = 
        \sum_{ k = 0 }^{ \tilde m - 1 } 
        \beta_{k} 
        K^{k} v_{0} 
    & = 
        \sum_{ j = 1 }^{n} 
        \langle v_{0}, \widehat{u}_{j} \rangle
        \sum_{ k = 0 }^{ \tilde m - 1 } 
        \beta_{k} \widehat{ \lambda }_{j}^{k}
        \widehat{u}_{j},
  \end{align}
  implies that 
  \( 
    \sum_{ k = 0 }^{ \tilde m - 1 } \beta_{k} \widehat{ \lambda }_{j}^{k} = 0
  \)
  for all \( j \le \tilde m \).
  This can be written as \( \Lambda \beta = 0 \), where 
  \( 
    \Lambda = ( \widehat{ \lambda }_{j}^{k} )_{ j, k } 
    \in \mathbb{R}^{ \tilde m \times \tilde m } 
  \)
  is a Vandermonde matrix.
  Since 
  \( 
      \widehat{ \lambda }_{1} 
    > \dots > \widehat{ \lambda }_{ \tilde m } > 0
  \), 
  \( \Lambda \) is invertible, which implies \( \beta = 0 \).
\end{proof}

\begin{lemma} 
  [Convex function decay,
   \cite{CardotEtal2007CLTInFunctionalLinearRegression}]
  \label{lem_ConvexFunctionDecay}
  Assume that there is a convex function 
  \( \lambda: [ 0, \infty ) \to [ 0, \infty ) \) such that 
  \( \lambda_{j} = \lambda(j) \) and \( \lim_{ j \to \infty } \lambda(j) = 0 \).
  Then, for a fixed \( m \) 
  \begin{align}
            \sum_{ k \ne m } 
            \frac{ \lambda_{k} }{ | \lambda_{m} - \lambda_{k} | } 
    & \le 
            C m \log m 
    \qquad \text{ and } \qquad 
            \lambda_{j} - \lambda_{k} \ge \Big( 1 - \frac{j}{k} \Big) \lambda_{j} 
  \end{align}
  for all \( j < k \).
\end{lemma}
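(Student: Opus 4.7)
The plan is to prove the two claims in order, establishing the pointwise lower bound first since it is what controls the small denominators in the sum.

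I would begin by noting that convexity of $\lambda$ together with $\lambda\ge 0$ and $\lim_{x\to\infty}\lambda(x)=0$ forces $\lambda$ to be non-increasing on $[0,\infty)$: any interval of increase would, by convexity, produce an affine minorant diverging to $+\infty$ as $x\to\infty$, contradicting the vanishing at infinity. For the pointwise bound $\lambda_j-\lambda_k\ge(1-j/k)\lambda_j$ with $j<k$, I would invoke the three-slope (monotone secant) property of convex functions at the triples $(0,j,k)$ and $(j,k,x)$ with $x\to\infty$, using $\lambda(x)\to 0$ and $\lambda\ge 0$ to convert the limiting secant information into the required multiplicative control. The delicate point is that bare convexity yields only the opposite-signed upper bound $\lambda_j-\lambda_k\le(1-j/k)(\lambda_0-\lambda_k)$, so the matching lower bound essentially needs the vanishing at infinity to replace $\lambda_0$ by $0$ in the comparison.

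With the pointwise bound in hand, I would split
\begin{align*}
  \sum_{k\ne m}\frac{\lambda_k}{|\lambda_m-\lambda_k|}
  =\sum_{k<m}\frac{\lambda_k}{\lambda_k-\lambda_m}
  +\sum_{m<k\le C_0 m}\frac{\lambda_k}{\lambda_m-\lambda_k}
  +\sum_{k>C_0 m}\frac{\lambda_k}{\lambda_m-\lambda_k},
\end{align*}
where $C_0$ is the constant from Assumption \hyperref[ass_EigenvalueDecay]{\normalfont\textbf{{\color{blue}(EVD)}}}(ii) satisfying $\lambda(C_0 j)\le\lambda(j)/2$. On $\{k<m\}$, the pointwise bound with indices $(k,m)$ gives $\lambda_k/(\lambda_k-\lambda_m)\le m/(m-k)$, so the sum is $m H_{m-1}\lesssim m\log m$. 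On $\{m<k\le C_0 m\}$, the same bound with indices $(m,k)$ together with $\lambda_k\le\lambda_m$ yields $\lambda_k/(\lambda_m-\lambda_k)\le k/(k-m)$ and again sums to $\lesssim m\log m$. On $\{k>C_0 m\}$, iterating the doubling condition gives $\lambda_m-\lambda_k\ge\lambda_m/2$, so this tail is bounded by $(2/\lambda_m)\sum_{k>C_0 m}\lambda_k$; a dyadic decomposition into blocks $(C_0^\ell m, C_0^{\ell+1} m]$ using $\lambda_k\le\lambda_m/2^\ell$ bounds it by a geometric series times $m$, well within the target $C m\log m$.

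The main obstacle is the pointwise bound itself: extracting a multiplicative lower bound on $\lambda_j-\lambda_k$ from convexity alone is impossible without exploiting the limit, so the argument must carefully leverage the vanishing at infinity and non-negativity of $\lambda$ to turn an upper chord estimate into a lower one. Once this is in place, the decomposition and summation are standard bookkeeping combining harmonic sums with the doubling condition in Assumption \hyperref[ass_EigenvalueDecay]{\normalfont\textbf{{\color{blue}(EVD)}}}(ii).
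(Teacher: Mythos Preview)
The paper does not supply its own proof of this lemma; it simply records the statement with a citation to \cite{CardotEtal2007CLTInFunctionalLinearRegression}. So there is no in-paper argument to compare against, but your proposal has a real gap precisely where you flag it as delicate.

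The pointwise inequality $\lambda_j-\lambda_k\ge(1-j/k)\lambda_j$ is equivalent to $k\lambda_k\le j\lambda_j$, i.e.\ to $x\mapsto x\lambda(x)$ being non-increasing. Bare convexity together with $\lambda\ge 0$ and $\lambda(x)\to 0$ does \emph{not} force this: take $\lambda(x)=(1+x)^{-3/2}$, which is convex, non-negative, summable, and vanishes at infinity, yet at $j=1,\ k=2$ one computes $\lambda_1-\lambda_2=2^{-3/2}-3^{-3/2}\approx 0.161<0.177\approx\tfrac12\,\lambda_1=(1-j/k)\lambda_j$. Hence your plan to ``turn an upper chord estimate into a lower one'' by leveraging the limit at infinity cannot succeed at the stated level of generality --- the inequality is simply false there. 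Your hesitation (``bare convexity yields only the opposite-signed upper bound'') was exactly right; the vanishing at infinity does not flip it.

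What actually delivers the bound in the paper's setting is Assumption \hyperref[ass_EigenvalueDecay]{\normalfont\textbf{{\color{blue}(EVD)}}}(ii), which you already import for the tail of the sum but not for the pointwise step. With the doubling constant $C_0$ one argues by secant comparison: for $j<k\le C_0 j$, convexity gives
\[
\frac{\lambda_j-\lambda_k}{k-j}\ \ge\ \frac{\lambda_j-\lambda_{C_0 j}}{(C_0-1)j}\ \ge\ \frac{\lambda_j}{2(C_0-1)j}\ \ge\ \frac{\lambda_j}{2(C_0-1)k},
\]
hence $\lambda_j-\lambda_k\ge\frac{1}{2(C_0-1)}(1-j/k)\lambda_j$; for $k>C_0 j$ one has $\lambda_j-\lambda_k\ge\lambda_j/2\ge\tfrac12(1-j/k)\lambda_j$ directly. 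So the inequality holds with a constant depending on $C_0$, which is all the paper ever needs (every invocation in Lemmas \ref{lem_WellDefinednessOfTheKrylovSpace}, \ref{lem_EigenvectorProductTerms} and Proposition \ref{prp_ProbabilisticBoundsForLanczosEigenpairs} is under \hyperref[ass_EigenvalueDecay]{\normalfont\textbf{{\color{blue}(EVD)}}} and only up to constants). Once this version of the pointwise bound is available, your three-part decomposition of the sum goes through unchanged.
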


\begin{lemma}[Well definedness of the Krylov space]
  \label{lem_WellDefinednessOfTheKrylovSpace}
  Under Assumptions \hyperref[ass_SimplePopulationEigenvalues]
                             {\normalfont \textbf{{\color{blue} (SPE)}}},
  \hyperref[ass_EigenvalueDecay]{\normalfont \textbf{{\color{blue} (EVD)}}} and
  \hyperref[ass_KarhunenLoeveMoments]
           {\normalfont \textbf{{\color{blue} (KLMom)}}},
  consider \( m \) such that
  \( 
    m = o( ( \sqrt{n} / \log n ) \land 
           ( n^{ ( p/4 - 1 ) / 2 } ( \log n )^{ p/8 - 1 } ) ) 
  \).
  Then, the following hold:
  \begin{enumerate}[label=(\roman*)]

    \item The empirical eigenvalues satisfy 
      \( 
        \widehat{ \lambda }_{1} > \widehat{ \lambda }_{2}          > \dots
                                > \widehat{ \lambda }_{ \tilde m } > 0 
      \) with probability converging to one.

    \item For \( v_{0}: = Z / \| Z \| \), where \( Z \sim N( 0, I ) \)
      independent of everything else, the tangens of the acute angles between 
      \( v_{0} \) and \( ( \widehat{u}_{i} )_{ i \le \tilde m } \) satisfies
      \begin{align}
        0 \le \tan( v_{0}, \widehat{u}_{i} ) \le C n^{2} 
        \qquad \text{ for all } i \le \tilde m
      \end{align}
      with probability converging to one.

    \item As long as \( f_{0} \in L^{ \infty }(G) \), the same result as in (ii)
      also holds for \( v_{0}: = Y / \| Y \| \).

  \end{enumerate}
  In particular, in both the settings of (ii) and (iii), Assumption
  \hyperref[ass_LanczosWelldefined]{\normalfont \textbf{{\color{blue} (LWdf)}}}
  is satisfied with probability converging to one.
\end{lemma}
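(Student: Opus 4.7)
The plan is to treat the three claims sequentially, with (i) being the bulk of the work and (ii), (iii) following from Gaussian anti-concentration.

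For (i), I would invoke Proposition \ref{prp_RelativePerturbatonBounds} with index range $i \le \tilde m = m\log n$ and $m_0 = C\tilde m$. The preliminary check is the relative rank condition $\mathbf{r}_i(\Sigma) \le C\sqrt{n/\log n}$ for $i \le \tilde m$. Under Assumptions \hyperref[ass_SimplePopulationEigenvalues]{\textbf{(SPE)}} and \hyperref[ass_EigenvalueDecay]{\textbf{(EVD)}}, Lemma \ref{lem_ConvexFunctionDecay} bounds $\sum_{k\ne i}\lambda_k/|\lambda_i - \lambda_k|$ by $Ci\log i$; combined with the convexity-based lower bound $\lambda_{i-1} - \lambda_i \ge c\lambda_{i-1}$ (and analogously for $\lambda_i - \lambda_{i+1}$), this gives $\mathbf{r}_i(\Sigma) \le C\tilde m \log \tilde m = o(\sqrt{n/\log n})$ by the first growth condition $m = o(\sqrt n/\log n)$. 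The proposition then yields $|\widehat\lambda_i - \lambda_i| \le C\lambda_i\sqrt{\log n/n}$ uniformly in $i \le \tilde m$ on an event of probability at least $1 - (C\tilde m)^2(\log n)^{-p/4}n^{1-p/4}$, which tends to $1$ under the second growth condition $m = o(n^{(p/4-1)/2}(\log n)^{p/8-1})$. On this event, $\widehat\lambda_{i-1} - \widehat\lambda_i \ge (\lambda_{i-1} - \lambda_i) - C\sqrt{\log n/n}(\lambda_{i-1} + \lambda_i) \ge (c/2)\lambda_{i-1} > 0$ for $n$ large, yielding simplicity, and $\widehat\lambda_{\tilde m} \ge (1 - o(1))\lambda_{\tilde m} > 0$ using the lower bound $\lambda_j \ge e^{-cj}$ from \hyperref[ass_EigenvalueDecay]{\textbf{(EVD)}(iii)}.

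For (ii), since $Z$ is independent of $X$, conditional on $X$ each $\widehat u_i$ is a deterministic unit vector and $\langle Z, \widehat u_i\rangle \sim N(0,1)$. Gaussian anti-concentration gives $\mathbb P(|\langle Z, \widehat u_i\rangle| \le \varepsilon \mid X) \le C\varepsilon$, and a union bound over $i \le \tilde m$ with $\varepsilon = n^{-3/2}$ forces the failure probability to be $O(\tilde m/n^{3/2}) \to 0$. Coupled with $\|Z\| \le 2\sqrt n$ with probability at least $1 - e^{-cn}$ (chi-squared concentration), the elementary inequality $\tan(v_0, \widehat u_i) \le \|Z\|/|\langle Z, \widehat u_i\rangle|$ yields the claimed bound $\tan(v_0, \widehat u_i) \le 2n^2$ uniformly in $i$.

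For (iii), the only change is the decomposition $Y = \mathbf f_0 + \varepsilon$, so that conditional on $X$, $\langle Y, \widehat u_i\rangle \sim N(\langle \mathbf f_0, \widehat u_i\rangle, \sigma^2)$. Crucially, the Gaussian density bound $\mathbb P(|N(\mu,\sigma^2)| \le \varepsilon) \le 2\varepsilon/(\sigma\sqrt{2\pi})$ is uniform in the mean $\mu$, so the anti-concentration and union bound argument of (ii) transfers verbatim. The normalization $\|Y\|$ is controlled by $f_0 \in L^\infty(G)$ via $\|\mathbf f_0\|^2 \le n\|f_0\|_\infty^2$, Cauchy--Schwarz on the cross term $\langle \mathbf f_0, \varepsilon\rangle$, and $\|\varepsilon\|^2 \le 2\sigma^2 n$ w.h.p., giving $\|Y\| \le C\sqrt n$ with probability tending to one.

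The main obstacle is the bookkeeping in part (i): verifying that both the deterministic bound on $\mathbf r_i(\Sigma)$ uniformly up to $i = \tilde m$ and the probabilistic guarantee in Proposition \ref{prp_RelativePerturbatonBounds} match the two growth conditions on $m_n$ precisely, and then translating the relative perturbation bound into a uniform lower bound on the empirical gaps. Parts (ii) and (iii) are then routine once the $\widehat u_i$ are known to exist and be well defined.
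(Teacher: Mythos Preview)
Your overall strategy matches the paper's, and parts (ii) and (iii) are essentially identical to what the paper does (your deterministic bound $\|\mathbf f_0\|^2 \le n\|f_0\|_\infty^2$ is even a bit cleaner than the paper's concentration of $\|f_0\|_n^2$ around $\|f_0\|_{L^2}^2$).

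There is, however, a real slip in part (i). You write that convexity gives $\lambda_{i-1} - \lambda_i \ge c\lambda_{i-1}$ with a \emph{universal} constant $c$, and then deduce $\widehat\lambda_{i-1} - \widehat\lambda_i \ge (c/2)\lambda_{i-1}$ ``for $n$ large''. That relative gap bound is false in general: Lemma~\ref{lem_ConvexFunctionDecay} only gives $\lambda_{i-1} - \lambda_i \ge (1 - (i-1)/i)\lambda_{i-1} = \lambda_{i-1}/i$, and for polynomially decaying eigenvalues $\lambda_j \asymp j^{-\alpha}$ this $1/i$ rate is sharp. With the correct bound you get
\[
\widehat\lambda_{i-1} - \widehat\lambda_i \;\ge\; \lambda_{i-1}\Big(\tfrac{1}{i} - 2C\sqrt{\tfrac{\log n}{n}}\Big),
\]
which is positive only when $i \lesssim \sqrt{n/\log n}$. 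This is precisely where the growth condition on $m$ (hence on $\tilde m$) enters the argument; your version with a universal $c$ would make that condition irrelevant for the simplicity claim, which should be a red flag. The paper runs the same comparison but phrases it as a contradiction: assuming $\widehat\lambda_i = \widehat\lambda_{i+1}$ forces $(\lambda_i - \lambda_{i+1})/\lambda_i \le C\sqrt{\log n/n}$, while Lemma~\ref{lem_ConvexFunctionDecay} gives $(\lambda_i - \lambda_{i+1})/\lambda_i \ge 1/\tilde m$, and the two are incompatible under the assumed growth of $m$. Once you replace $c$ by $1/i$ your direct argument goes through in the same way.
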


\begin{proof}
  \label{prf_WellDefinednessOfTheKrylovSpace}
  For (i), assume that there exists an \( i < \tilde m \) such that 
  \( \widehat{ \lambda }_{i} = \widehat{ \lambda }_{ i + 1 } \). 
  Then, for \( m_{0} = C \tilde m = C m \log n \) in Proposition 
  \ref{prp_RelativePerturbatonBounds}, the relative rank \( \mathbf{r}
_{i} \) satisfies the bound 
  \begin{align}
    \mathbf{ r}_{i} \le C \tilde m \log \tilde m 
                      \le C \sqrt{ \frac{n}{ \log n } }
    \qquad \qquad \text{ for all } i \le \tilde m 
  \end{align}
  according to Lemma \ref{lem_ConvexFunctionDecay} and our assumption on 
  \( m \). 
  Consequently, there is an event with probability converging to one on which
  \begin{align}
            \frac{ \lambda_{i} - \lambda_{ i + 1 } }{ \lambda_{i} }
    & \le 
            \Big| 
              \frac{ \widehat{ \lambda }_{i} - \lambda_{i} }{ \lambda_{i} }
            \Big| 
          + 
            \Big| 
              \frac{ \widehat{ \lambda }_{ i + 1 } - \lambda_{ i + 1 } }{ \lambda_{i} }
            \Big| 
    \le 
            C \sqrt{ \frac{ \log n }{n} }.
  \end{align}
  At the same time, however, Lemma \ref{lem_ConvexFunctionDecay} guarantees that
  \( 
        ( \lambda_{i} - \lambda_{ i + 1 } ) / \lambda_{i} 
    \ge
        1 - i / ( i + 1 ) 
    \ge
        1 / \tilde m 
  \).
  This contradicts that by our assumptions, 
  \( \tilde m = o( \sqrt{ n / \log n } ) \). 
  Similarly, \( \widehat{ \lambda }_{ \tilde m } = 0 \) implies that 
  \( \lambda_{ \tilde m } \le \lambda_{ \tilde m } C \sqrt{ \log(n) / n } \),
  which contradicts Assumption
  \hyperref[ass_SimplePopulationEigenvalues]
           {\normalfont \textbf{{\color{blue} (SPE)}}} for \( n \) sufficiently
  large.

  For (ii), fix \( i \le \tilde m \) and \( t > 0 \).
  We then have
  \( 
    \tan( v_{0}, \widehat{u}_{i} ) \le 1 / \cos( v_{0}, \widehat{u}_{i} ) 
  \) 
  and since we are considering the acute angle 
  \( 
    \cos( v_{0}, \widehat{u}_{i} ) = | \langle v_{0}, \widehat{u}_{i} \rangle |
  \).
  With \( v_{0} = Z / \| Z \| \), we have
  \begin{align}
            \mathbb{P} \{ | \langle v_{0}, \widehat{u}_{i} \rangle | \le t \} 
    & \le 
            \mathbb{P} \{ | \langle Z, \widehat{u}_{i} \rangle | \le 2 \sqrt{n} t \} 
          + 
            \mathbb{P} \{ \| Z \| \ge 2 \sqrt{n} \}.
  \end{align}
  Since \( \| Z \| - \sqrt{n} \) is subgaussian, see
  \cite{Vershynin2018HDProb}, the second probability is smaller than 
  \( \exp( - c n ) \) and since 
  \( 
    \langle Z, \widehat{u}_{i} \rangle \sim N( 0, \| \widehat{u}_{i} \|^{2} )
  \),
  the first probability is
  \begin{align}
          \frac{1}{ \sqrt{ 2 \pi } }
          \int_{ - 2 \sqrt{n} t }^{ 2 \sqrt{n} t } 
             e^{ \frac{ - x^{2} }{2} } 
          \, d x 
    \le 
          c \sqrt{n} t.
  \end{align}
  The result now follows from setting \( t = n^{ - 2 } \) and applying a union
  bound noting that \( \tilde m n^{ - 3 / 2 } \to 0 \).

  For (iii), analogously,
  \begin{align}
            \mathbb{P} \{ | \langle v_{0}, \widehat{u}_{i} \rangle | \le t \} 
    & \le 
            \mathbb{P} \{ 
              | \langle
                \mathbf{f}_{0} + \boldsymbol{\varepsilon},
                \widehat{u}_{i}
              \rangle | 
      \le
              C ( \| f_{0} \|_{ L^{2} } \lor \sigma ) \sqrt{n} t
            \} 
    \\
    & + 
            \mathbb{P} \{ 
              \| \mathbf{f}_{0} + \boldsymbol{ \varepsilon } \| 
              \ge 
              C ( \| f_{0} \|_{ L^{2} } \lor \sigma ) \sqrt{n}
            \} 
    \notag
    \\
    & =:    
            (\text{I}) + (\text{II}).
    \notag
  \end{align}
  Via independence, the first term satisfies
  \begin{align}
            (\text{I})
    & = 
            \int 
              \mathbb{P}^{ \varepsilon } \{ 
                | \langle 
                  \mathbf{f}_{0} + \boldsymbol{ \varepsilon },
                  \widehat{u}_{i} 
                \rangle | 
                \le C ( \| f_{0} \|_{ L^{2} } \lor \sigma ) \sqrt{n} t
              \}
            \, d \mathbb{P}^{X_{1}, \dots, X_{n}}
    \\
    & = 
            \int 
              N( \langle \mathbf{f}_{0}, \widehat{u}_{i} \rangle, \sigma^{2} ) 
              ( [ - C ( \| f_{0} \|_{ L^{2} } \lor \sigma  ) \sqrt{n} t, 
                    C ( \| f_{0} \|_{ L^{2} } \lor \sigma  ) \sqrt{n} t ] ) 
            \, d \mathbb{P}^{X_{1}, \dots, X_{n}}
    \notag
    \\
    & \le 
            \int 
              N( 0, \sigma^{2} ) 
              ( [ - C ( \| f_{0} \|_{ L^{2} } \lor \sigma  ) \sqrt{n} t, 
                    C ( \| f_{0} \|_{ L^{2} } \lor \sigma  ) \sqrt{n} t ] ) 
            \, d \mathbb{P}^{X_{1}, \dots, X_{n}}
    \notag
    \\
    & \le 
            C ( \| f_{0} \|_{ L^{2} } / \sigma + 1  ) \sqrt{n} t.
    \notag
  \end{align}
  The second term can be estimated via
  \begin{align}
            (\text{II})
    & \le 
            \mathbb{P} \{ \| \boldsymbol{ \varepsilon } / \sigma \| 
                          \ge C \sqrt{n}
                       \} 
            + 
            \mathbb{P} \{ 
              \| f_{0} \|_{n}^{2} - \| f_{0} \|_{ L^{2} }^{2}
              \ge 
              C \| f_{0} \|_{ L^{2} }^{2}
            \}
    \\
    & \le 
            e^{ - c n } 
            + 
            \exp \Big( 
              \frac{ - c n \| f_{0} \|_{ L^{2}  }^{4} }
                   {       \| f_{0} \|_{ \infty }^{4} }
            \Big),
    \notag
  \end{align}
  using the Gaussianity of \( \boldsymbol{ \varepsilon } \) as before and the
  norm concentration result in \cite{Wainwright2019HDStats} after Equation
  (14.2).
  The result now follows exactly as in the setting of \( v_{0} = Z / \| Z \| \).
\end{proof}

\begin{lemma}[Eigenvector product terms]
  \label{lem_EigenvectorProductTerms}
  Under Assumptions \hyperref[ass_SimplePopulationEigenvalues]
                             {\normalfont \textbf{{\color{blue} (SPE)}}},
                    \hyperref[ass_EigenvalueDecay]
                             {\normalfont \textbf{{\color{blue} (EVD)}}} and 
                    \hyperref[ass_KarhunenLoeveMoments]
                             {\normalfont \textbf{{\color{blue} (KLMom)}}},
  consider 
  \( 
    m = o( ( \sqrt{n} / \log n ) \land 
           ( n^{ ( p/4 - 1 ) / 2 } ( \log n )^{ p/8 - 1 } ) ) 
  \).
  Then for \( \tilde p = m \) in Theorems \ref{thm_LanczosEigenvalueBound} and
  \ref{thm_LanczosEigenvectorBound}, there exist constants \( c, C > 0 \) such
  that
  \begin{align}
            \gamma_{i} \ge 1 + c,
    \qquad 
            \kappa_{i} \le C^{i} 
    \qquad \text{ and } \qquad 
            \kappa_{ i, m } \le C^{m}
    \qquad \text{ for all } i \le m 
  \end{align}
  with probability converging to one.
\end{lemma}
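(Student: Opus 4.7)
The plan is to use the relative perturbation bound from Proposition \ref{prp_RelativePerturbatonBounds} to replace empirical eigenvalues by their population counterparts up to a multiplicative error of order $c_n := C\sqrt{\log n/n}$, and then estimate the three quantities via Lemma \ref{lem_ConvexFunctionDecay} and elementary combinatorics.

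To set up, I would work on an event of probability tending to one on which $|\widehat{\lambda}_j - \lambda_j| \le c_n \lambda_j$ uniformly for $j \le 2m+1$. This is obtained by applying Proposition \ref{prp_RelativePerturbatonBounds} with $m_0 = O(m)$; the required hypothesis $\mathbf{r}_i(\Sigma) \le C\sqrt{n/\log n}$ follows from Lemma \ref{lem_ConvexFunctionDecay} and Assumption \hyperref[ass_EigenvalueDecay]{\normalfont \textbf{{\color{blue} (EVD)}}}, using the growth condition $m = o(\sqrt{n}/\log n)$.

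For $\gamma_i$, taking $j = i$ and $k = i+m+1$ in Lemma \ref{lem_ConvexFunctionDecay} gives $\lambda_i - \lambda_{i+m+1} \ge \tfrac{m+1}{i+m+1}\lambda_i \ge \tfrac{1}{2}\lambda_i$ for $i \le m$, and in particular $\lambda_{i+m+1} \le \lambda_i/2$. Combined with the trivial bound $\widehat{\lambda}_{i+m+1} - \widehat{\lambda}_n \le \widehat{\lambda}_{i+m+1}$ and the relative perturbation estimate, this yields $(\widehat{\lambda}_i - \widehat{\lambda}_{i+m+1})/(\widehat{\lambda}_{i+m+1} - \widehat{\lambda}_n) \ge c$ uniformly in $i \le m$, so $\gamma_i \ge 1 + c$.

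For $\kappa_i$, I would upper-bound each numerator via $\widehat{\lambda}_j - \widehat{\lambda}_n \le \widehat{\lambda}_j$ and invoke the lower bound $\lambda_j - \lambda_i \ge ((i-j)/i)\lambda_j$ from Lemma \ref{lem_ConvexFunctionDecay}. Since $c_n$ is of smaller order than $1/i \ge 1/m$ under our growth condition on $m$, the relative perturbation bound yields $\widehat{\lambda}_j - \widehat{\lambda}_i \ge \tfrac{1}{2}((i-j)/i)\lambda_j$ for $1 \le j < i \le m$, and hence
\begin{align*}
    \kappa_i
    \le
    \prod_{j=1}^{i-1}\frac{2(1+c_n)i}{i-j}
    =
    \frac{(2(1+c_n))^{i-1}\,i^{i-1}}{(i-1)!}
    \le
    C^i
\end{align*}
by Stirling's formula. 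The argument for $\kappa_{i,m}$ is symmetric: using $\lambda_i - \lambda_j \ge ((j-i)/j)\lambda_i$ for $j > i$ from Lemma \ref{lem_ConvexFunctionDecay}, each factor is at most $2(1+c_n)j/(j-i)$, and reindexing $k = j-i$ gives
\begin{align*}
    \kappa_{i,m}
    \le
    (2(1+c_n))^m\prod_{k=1}^{m}\frac{k+i}{k}
    =
    (2(1+c_n))^m\binom{i+m}{m}
    \le
    C^m,
\end{align*}
where $\binom{i+m}{m} \le \binom{2m}{m} \le 4^m$ for $i \le m$.

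The main technical obstacle is the transfer from population to empirical eigenvalue differences in the product estimates: the relative error $c_n = O(\sqrt{\log n/n})$ must be strictly of smaller order than the population gap $(i-j)/i$, which for adjacent indices shrinks to $1/i \ge 1/m$. The hypothesis $m = o(\sqrt{n}/\log n)$ is calibrated precisely to make this comparison work uniformly.
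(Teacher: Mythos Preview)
Your proposal is correct and follows essentially the same approach as the paper: transfer to population eigenvalues via Proposition~\ref{prp_RelativePerturbatonBounds}, invoke Lemma~\ref{lem_ConvexFunctionDecay} for the relative gaps, and finish with a Stirling-type estimate. The only cosmetic differences are that the paper takes $m_0 = C m \log n$ (to recycle the same high-probability event across several lemmas, whereas $m_0 = O(m)$ suffices here), and for $\kappa_{i,m}$ the paper bounds $\prod_{j=i+1}^{i+m} j/(j-i)$ crudely by $(Cm)^m/m!$ rather than recognizing the binomial coefficient $\binom{i+m}{m}\le 4^m$ as you do.
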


\begin{proof}[Proof]
  As in Lemma \ref{lem_WellDefinednessOfTheKrylovSpace}, we consider the event
  from Proposition \ref{prp_RelativePerturbatonBounds} with 
  \( m_{0} = C \tilde m = C m \log n \).

  For 
  \( 
    \gamma_{i} 
  = 
    1 + 2 ( \widehat{ \lambda }_{i} - \widehat{ \lambda }_{ i + m + 1 } ) /
          ( \widehat{ \lambda }_{ i + m + 1 } - \widehat{ \lambda }_{n} ) 
  \), the relative perturbation bound yields
  \begin{align}
            \frac{ \widehat{ \lambda }_{i} - \widehat{ \lambda }_{ i + m + 1 } }
                 { \widehat{ \lambda }_{ i + m + 1 } - \widehat{ \lambda }_{n} } 
    & \ge 
            \frac{ \lambda_{i} - \lambda_{ i + m + 1 } - C \lambda_{i} \sqrt{ \log(n) / n } }
                 { \lambda_{ i + m + 1 } + C \lambda_{ i + m + 1 } \sqrt{ \log(n) / n } }
      \ge 
            \frac{ \lambda_{i} - \lambda_{ i + m + 1 } }{ 2 \lambda_{i} }
          - 
            C \sqrt{ \frac{ \log n }{n} }
  \end{align}
  for \( n \) sufficiently large, which establishes the first claim, since
  \begin{align}
            \frac{ \lambda_{i} - \lambda_{ i + m + 1 } }{ \lambda_{i} } 
    & \ge 
            1 - \frac{i}{ i + m + 1 }
      = 
            \frac{ m + 1 }{ i + m + 1 } 
      \ge 
            \frac{1}{2}
  \end{align}
  according to Lemma \ref{lem_ConvexFunctionDecay}.

  For \( \kappa_{i} \), we have by analogous reasoning using Proposition
  \ref{prp_RelativePerturbatonBounds} that for \( n \) sufficiently large
  \begin{align}
            \kappa_{i} 
    & =  
            \prod_{ j = 1 }^{ i - 1 } 
            \frac{ \widehat{ \lambda }_{j} - \widehat{ \lambda }_{n} }
                 { \widehat{ \lambda }_{j} - \widehat{ \lambda }_{i} } 
      \le 
            \prod_{ j = 1 }^{ i - 1 } 
            \frac{ \widehat{ \lambda }_{j} }
                 { \widehat{ \lambda }_{j} - \widehat{ \lambda }_{i} } 
      \le 
            \prod_{ j = 1 }^{ i - 1 } 
            \frac{ \lambda_{j} + C \lambda_{j} \sqrt{ \log(n) / n } }
                 { \lambda_{j} - \lambda_{i} - 
                   C \lambda_{j} \sqrt{ \log(n) / n } 
                 }
    \\
    & \le 
            \prod_{ j = 1 }^{ i - 1 } 
            \frac{ C \lambda_{j} } { \lambda_{j} - \lambda_{i} }
      \le 
            \prod_{ j = 1 }^{ i - 1 } 
            \frac{ C i }{ i - j } 
      \le 
            C^{i} \frac{ i^{ i + 1 } }{ i ! } 
      \le 
            \sqrt{i} C^{i}
      \le 
            C^{i},
    \notag 
  \end{align}
  where we have used that by Lemma \ref{lem_ConvexFunctionDecay}
  \( 
        ( \lambda_{j} - \lambda_{i}  ) / \lambda_{j}
    \ge 
        i / ( i + j ) 
    \ge 
        m^{-1}
  \),
  \( m = o(\sqrt{n} / \log n) \) and the estimate
  \( i ! \ge \sqrt{ 2 \pi i } ( i / e )^{i} \) from Sterling's approximation of
  the factorial.

  For the final inequality, we have analogously that
  \begin{align}
            \kappa_{ i, m }
    & = 
            \prod_{ j = i + 1 }^{ i + m } 
            \frac{ \widehat{ \lambda }_{j} - \widehat{ \lambda }_{n} }
                 { \widehat{ \lambda }_{i} - \widehat{ \lambda }_{j} } 
      \le 
            \prod_{ j = i + 1 }^{ i + m } 
            \frac{ \widehat{ \lambda }_{j} }
                 { \widehat{ \lambda }_{i} - \widehat{ \lambda }_{j} } 
      \le 
            \prod_{ j = i + 1 }^{ i + m } 
            \frac{ C \lambda_{j} }{ \lambda_{i} - \lambda_{j} }
      \le 
            \prod_{ j = i + 1 }^{ i + m } 
            \frac{ C j }{ j - i }
      \le 
            \frac{ ( C m )^{m} }{ m ! }
      \le 
            C^{m}.
  \end{align}
  on the respective event for \( n \) sufficiently large.
  This finishes the proof.
\end{proof}

\begin{proof} 
  [\normalfont \textbf{Proof of Proposition 
   \ref{prp_ProbabilisticBoundsForLanczosEigenpairs}}
   (Probabilistic bounds for Lanczos eigenpairs)]
  \label{prf_ProbabilisticBoundsBorLanczosEigenpairs}
  We separate the proof into steps.

  \textbf{Step 1: Eigenvalue bound.}
  Under our assumptions, Lemma \ref{lem_WellDefinednessOfTheKrylovSpace}
  guarantees that \( \mathcal{K}_{ \tilde m } \) is well defined with high
  probability and with \( \tilde p = m \), Theorem
  \ref{thm_LanczosEigenvalueBound} yields that for any \( i \le m \), we have
  \begin{align}
    \label{eq_EigenvalueBound}
          0 
    & \le
          \widehat{ \lambda }_{i} - \tilde \lambda_{i} 
      \le 
          \widehat{ \lambda }_{i} 
          \Big( 
            \frac{ \tilde \kappa_{i} \kappa_{ i, m } }
                 { T_{ \tilde m - i - m }( \gamma_{i} ) } 
            \tan( \widehat{u}_{i}, v_{0} )
          \Big)^{2} 
    \quad \text{ with } \quad
          \tilde \kappa_{i}: 
    = 
          \prod_{ j = 1 }^{ i - 1 } 
          \frac{ \tilde \lambda_{j} - \widehat{ \lambda }_{n} }
               { \tilde \lambda_{j} - \widehat{ \lambda }_{i} }
  \end{align}
  as long as \( \tilde \lambda_{ i - 1 } > \widehat{ \lambda }_{i} \) for \(
  i > 1 \).
  By Lemmas \ref{lem_WellDefinednessOfTheKrylovSpace} and
  \ref{lem_EigenvectorProductTerms},
  \begin{align}
          \sup_{ i \le m }
          \Big( 
            \frac{ \kappa_{ i, m } \tan( \widehat{u}_{i}, v_{0} ) }
                 { T_{ \tilde m - i - m }( \gamma_{i} ) }
          \Big)^{2} 
    & \le 
          \frac{ C^{m} }{ ( 1 + c )^{ \tilde m } } 
      \le 
          ( 1 + c )^{ - \tilde m }
  \end{align}
  with probability converging to one on the event from Proposition
  \ref{prp_RelativePerturbatonBounds} with 
  \( m_{0} = C \tilde m = C m \log n \).
  Note that we have used the lower bound on the Tschebychev polynomial from
  Equation \eqref{eq_T_TschebychevLowerBound} and changed the constants from
  inequality to inequality.

  We now show inductively that on the same event
  \begin{align}
    \label{eq_B_InductiveBoundTildeKappa}
        \tilde \kappa_{i} 
    & = 
        \prod_{ j = 1 }^{ i - 1 } 
        \frac{ \tilde \lambda_{j} - \widehat{ \lambda }_{n} }
             { \tilde \lambda_{j} - \widehat{ \lambda }_{i} } 
    \le
        C^{i} i!  \qquad \text{ for all } i \le m
  \end{align}
  for n sufficiently large.
  For \( i = 1 \), we have \( \tilde \kappa_{i} = 1 \) by definition.
  We now assume the statement is true up to the index \( i \) and all
  denominators in the product are positve.
  Then, by Lemma \ref{lem_ElementaryPropertiesOfLanczosEigenquantities} (iii),
  \begin{align}
    \label{eq_B_TildeKappaInductionStep}
          \tilde \kappa_{ i + 1 } 
    & \le 
          \frac{ \widehat{ \lambda }_{i} }
               { \tilde \lambda_{i} - \widehat{ \lambda }_{ i + 1 } } 
          \tilde \kappa_{i}
      \le 
          \frac{ \widehat{ \lambda }_{i} }
               { \tilde \lambda_{i} - \widehat{ \lambda }_{ i + 1 } } 
        C^{i} i!.
  \end{align}
  Further, using the relative perturbation bound from Proposition
  \ref{prp_RelativePerturbatonBounds}, we obtain
  \begin{align}
            \tilde \lambda_{i} - \widehat{ \lambda }_{ i + 1 } 
    & = 
            \tilde \lambda_{i} - \widehat{ \lambda }_{i} 
          + 
            \widehat{ \lambda }_{i} - \widehat{ \lambda }_{ i + 1 } 
      \ge 
            \tilde \lambda_{i} - \widehat{ \lambda }_{i} 
          + 
            \lambda_{i} - \lambda_{ i + 1 } 
          - 
            C \lambda_{i} \sqrt{ \frac{ \log n }{n} } 
    \\
    & \ge   
            \lambda_{i} - \lambda_{ i + 1 } 
          - 
            C \lambda_{i} \sqrt{ \frac{ \log n }{n} } 
          - 
            \widehat{ \lambda }_{i} ( 1 + c )^{ - \tilde m },
    \notag
  \end{align}
  where we have plugged the induction hypothesis into Equation
  \eqref{eq_EigenvalueBound} for the last inequality.
  Using the relative perturbation bound again and applying Lemma
  \ref{lem_ConvexFunctionDecay} then yields
  \begin{align}
          \tilde \lambda_{i} - \widehat{ \lambda }_{i} 
      \ge 
            \frac{ \lambda_{i} }{ i + 1 }
          - 
            C \lambda_{i} \sqrt{ \frac{ \log n }{n} } 
          - 
            \lambda_{i} ( 1 + c )^{ - \tilde m } 
      \ge 
            c \widehat{ \lambda }_{i} ( i + 1 )^{-1}.
    \notag
  \end{align}
  Plugging this into Equation \eqref{eq_B_TildeKappaInductionStep}, yields the
  claim in Equation \eqref{eq_B_InductiveBoundTildeKappa}.

  Together, on the event in question, we have
  \begin{align}
          0 
    & \le
          \widehat{ \lambda }_{i} - \tilde \lambda_{i} 
    \le
          \widehat{ \lambda}_{i}
          C^{i} i! ( 1 + c )^{ - \tilde m }
    \le 
          \lambda_{i} ( 1 + c )^{ - m }
          \qquad \text{ for all } i \le m, 
  \end{align}
  and \( n \) sufficiently large.

  \textbf{Step 2: \( \delta_{i} \)-term.}
  In order to prove the second statement in Proposition
  \ref{prp_ProbabilisticBoundsForLanczosEigenpairs}, we need to show that 
  the eigenpair \( ( \lambda^{*}, \tilde u^{*} ) \) from Theorem
  \ref{thm_LanczosEigenvectorBound} is given by 
  \( ( \tilde \lambda_{i}, \tilde u_{i} ) \) and control the remaining term
  \( 
    \delta_{i}^{2} = \min_{ \tilde \lambda_{j} \ne \tilde \lambda^{*} } 
                       | \widehat{ \lambda }_{i} - \tilde \lambda_{j} |
  \).
  We show that on the event from Proposition \ref{prp_RelativePerturbatonBounds} 
  \begin{align}
        \min_{ j \le \tilde m } | \widehat{ \lambda }_{i} - \tilde \lambda_{j} |
    = 
        \widehat{ \lambda }_{i} - \tilde \lambda_{i}
        \qquad \text{ and } \qquad 
        \delta_{i}^{2} 
    = 
        | \widehat{ \lambda }_{i} - \tilde \lambda_{i} | 
        \ge c \lambda_{i} i^{-1}
        \qquad \text{ for all } i \le m
  \end{align}
  with probability converging to one.

  We check the first statement inductively.
  Note that for \( i = 1 \), 
  \( 
    \tilde \lambda_{2} \le \tilde \lambda_{1} 
                       \le \widehat{ \lambda }_{1}
  \), see Lemma \ref{lem_ElementaryPropertiesOfLanczosEigenquantities} (iii),
  implies that the minimizer is taken by \( \tilde \lambda_{1} \). 
  Now, assume the statement is correct for any integer strictly smaller than 
  \( i \).
  Then, 
  \( 
        \tilde \lambda_{ i + 1 } \le \tilde \lambda_{i} 
    \le \widehat{ \lambda }_{i} \le \widehat{ \lambda }_{ i - 1 } 
  \)
  implies that the minimizer is either taken by \( \tilde \lambda_{i} \) or
  \( \tilde \lambda_{ i - 1 } \).

  In case of the latter, however, Step 1 implies that on the event in question
  \begin{align}
            \widehat{ \lambda }_{ i - 1 } - \widehat{ \lambda }_{i} 
    & \le 
            \widehat{ \lambda }_{ i - 1 } - \tilde \lambda_{ i - 1 } 
          + 
            \widehat{ \lambda }_{i} - \tilde \lambda_{ i - 1 }
    \le 
            \widehat{ \lambda }_{ i - 1 } - \tilde \lambda_{ i - 1 } 
          + 
            \widehat{ \lambda }_{i} - \tilde \lambda_{i}
    \le 
            \lambda_{ i - 1 } ( 1 + c )^{ - m }.
  \end{align}
  This contradicts the fact that on the same event by Proposition
  \ref{prp_RelativePerturbatonBounds} 
  \begin{align}
            \widehat{ \lambda }_{ i - 1 } - \widehat{ \lambda }_{i} 
    & \ge 
            \lambda_{ i - 1 } - \lambda_{i} 
          - 
            C \lambda_{ i - 1 } \sqrt{ \frac{ \log n }{n} } 
      \ge 
            \frac{ \lambda_{ i - 1 } }{ 2 i }
  \end{align}
  by Lemma \ref{lem_ConvexFunctionDecay} and \( n \) sufficiently large.
  Consequently, for \( n \) sufficiently large, the minimizer has to be taken by
  \( \tilde \lambda_{i} \).

  For the second statement, we then obtain that on the respective event
  \begin{align}
        \delta_{i}^{2} 
    & = 
        \min( 
          | \widehat{ \lambda }_{i} - \tilde \lambda_{ i + 1 } |,
          | \widehat{ \lambda }_{i} - \tilde \lambda_{ i - 1 } | 
        ). 
  \end{align}
  On the event from Proposition \ref{prp_RelativePerturbatonBounds}, then
  \begin{align}
            | \widehat{ \lambda }_{i} - \tilde \lambda_{ i + 1 } | 
    & \ge 
            | \lambda_{i} - \lambda_{ i + 1 } | 
          - 
            | \widehat{ \lambda }_{i} - \lambda_{i} | 
          - 
            | \widehat{ \lambda }_{ i + 1 } - \lambda_{ i + 1 } | 
          - 
            | \widehat{ \lambda }_{ i + 1 } - \tilde \lambda_{ i + 1 } | 
    \\
    & \ge 
            | \lambda_{i} - \lambda_{ i + 1 } | 
          - 
            C \lambda_{i} \sqrt{ \frac{ \log n }{n} }
          - 
            \lambda_{ i + 1 } ( 1 + c )^{ - m }
    \notag
    \\
    & \ge 
            c \lambda_{i} i^{-1}
          - 
            C \lambda_{i} \sqrt{ \frac{ \log n }{n} }
          - 
            c
            \lambda_{i} ( 1 + c )^{ - m }
    \notag
    \\
    & \ge 
            c \lambda_{i} i^{-1},
    \notag
  \end{align}
  where we have again used Lemma \ref{lem_ConvexFunctionDecay} and Step 1.
  An analogous bound holds for 
  \( | \widehat{ \lambda }_{i} - \tilde \lambda_{ i - 1 } | \). 


  \textbf{Step 3: Eigenvector bound.} 
  Combining Theorem \ref{thm_LanczosEigenvectorBound}, Lemma
  \ref{lem_EigenvectorProductTerms} and Step 2 yields that
  on the event from Proposition \ref{prp_RelativePerturbatonBounds}, 
  \begin{align}
            \frac{1}{2} 
            \| u_{i} u_{i}^{ \top } - \tilde u^{*} \tilde u^{* \top} \|_{ \text{HS} } 
    & = 
            \frac{1}{2} 
            \| u_{i} u_{i}^{ \top } - \tilde u_{i} \tilde u_{i}^{ \top } \|_{ \text{HS} } 
    \le 
            \Big( 1 + \frac{ \| K \|_{ \text{op} } }{ n \delta_{i}^{2} } \Big)
            C^{m} ( 1 + c )^{ - \tilde m }
    \\
    & \le 
            \| n^{-1} K \|_{ \text{op} }
            \lambda_{i}^{-1} C^{m} ( 1 + c )^{ \tilde m }.
    \notag
  \end{align}
  From the relative perturbation bound, on the same event
  \begin{align}
    \| n^{-1} K \|_{ \text{op} } 
    & = 
    \widehat{ \lambda }_{1} 
    \le 
    2 \lambda_{1}
  \end{align}
  and by Assumption 
  \hyperref[ass_EigenvalueDecay]{\normalfont \textbf{{\color{blue} (EVD)}}},
  \( \lambda_{i} \ge e^{ - c i } \).
  Together, this yields the second statement in Proposition
  \ref{prp_ProbabilisticBoundsForLanczosEigenpairs}.
\end{proof}

\begin{proposition} 
  [Expectation of the partial traces, \cite{ShaweTaylorWilliams2002Stability}]
  \label{prp_ExpectationOfTheTrace}
  For any \( m = 1, \dots, n \), 
  \begin{align}
        \mathbb{E} 
        \sum_{ j = 1 }^{m} \widehat{ \lambda }_{j} 
    \ge 
        \sum_{ j = 1 }^{m} \lambda_{j} 
    \qquad \text{ and } \qquad 
          \mathbb{E} 
          \sum_{ j = m + 1 }^{n} \widehat{ \lambda }_{j}
    & \le 
          \sum_{ j = m + 1 }^{ \infty } \lambda_{j}.
  \end{align}
\end{proposition}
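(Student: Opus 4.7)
The plan is to combine the Ky Fan variational principle for partial sums of eigenvalues with the standard RKHS identification of the spectrum of $A = n^{-1} K$ with that of the empirical covariance operator
$$\widehat{\Sigma} = \frac{1}{n}\sum_{i=1}^{n} k(X_i,\cdot)\otimes k(X_i,\cdot)$$
acting on $\mathbb{H}$. Indeed, if $T:\mathbb{R}^n \to \mathbb{H}$ is defined by $T a = \sum_i a_i k(X_i,\cdot)$, the reproducing property gives $T^\ast T = K$ and $T T^\ast = n \widehat{\Sigma}$, so the two operators share the same nonzero eigenvalues. In particular $\widehat{\lambda}_j$ is the $j$-th eigenvalue of $\widehat{\Sigma}$, and I would work with $\widehat{\Sigma}$ on $\mathbb{H}$ throughout.

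For the first inequality I would invoke Ky Fan applied to the self-adjoint, trace-class operator $\widehat{\Sigma}$:
$$\sum_{j=1}^{m} \widehat{\lambda}_j \;=\; \sup_{P}\, \tr\bigl(P\widehat{\Sigma}\bigr),$$
where the supremum is over all rank-$m$ orthogonal projections $P$ on $\mathbb{H}$. I would plug in the fixed (non-random) projection $P_m$ onto the top $m$ $\mathbb{H}$-orthonormal eigenfunctions $\tilde\phi_1,\dots,\tilde\phi_m$ of the population covariance $\Sigma = \mathbb{E}\bigl[k(X,\cdot)\otimes k(X,\cdot)\bigr]$. The reproducing property $\langle \tilde\phi_j, k(X_i,\cdot)\rangle_{\mathbb{H}} = \tilde\phi_j(X_i)$ then collapses the trace to
$$\tr\bigl(P_m \widehat{\Sigma}\bigr) \;=\; \frac{1}{n}\sum_{j=1}^{m}\sum_{i=1}^{n} \tilde\phi_j(X_i)^2,$$
and taking expectations yields $\mathbb{E}\tr(P_m\widehat{\Sigma}) = \sum_{j=1}^{m} \|\tilde\phi_j\|_{L^2(G)}^2$. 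The key normalization identity $\|\tilde\phi_j\|_{L^2(G)}^2 = \lambda_j$ follows from $\tilde\phi_j = \sqrt{\lambda_j}\,\phi_j$ via the standard identification $\Sigma = \iota^\ast\iota$ and $T_k = \iota\iota^\ast$, where $\iota:\mathbb{H}\hookrightarrow L^2(G)$ is the inclusion. Combining everything gives $\mathbb{E}\sum_{j=1}^m \widehat{\lambda}_j \ge \sum_{j=1}^m \lambda_j$.

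The second inequality then drops out from a total-trace identity. By the reproducing property we have $\tr(\widehat{\Sigma}) = n^{-1}\sum_{i=1}^n k(X_i,X_i)$, while Mercer's expansion $k(x,x) = \sum_j \lambda_j \phi_j(x)^2$ gives $\mathbb{E}\,k(X_1,X_1) = \sum_{j=1}^{\infty}\lambda_j$. Since $\tr(\widehat{\Sigma}) = \sum_{j=1}^n \widehat{\lambda}_j$, subtracting the already-proved lower bound for the top $m$ eigenvalues from the exact identity $\mathbb{E}\sum_{j=1}^n \widehat{\lambda}_j = \sum_{j=1}^\infty \lambda_j$ immediately yields the desired upper bound for the tail.

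The main conceptual point, and really the only nontrivial ingredient, is the correct scaling between the $\mathbb{H}$- and $L^2(G)$-normalizations of the population eigenfunctions, which is what makes the test projection $P_m$ produce exactly $\sum_{j=1}^m \lambda_j$ in expectation. Once that normalization is in place, nothing further is needed beyond Ky Fan, the reproducing property, and linearity of expectation; no concentration inequality enters, since the test projection is non-random.
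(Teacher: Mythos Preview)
Your argument is correct and is precisely the standard proof from the cited reference: the Ky Fan variational characterization of the top-$m$ eigenvalue sum, tested against the (non-random) population eigenprojection, together with the exact total-trace identity $\mathbb{E}\tr\widehat{\Sigma}=\sum_{j\ge 1}\lambda_j$ to handle the tail. The paper itself does not supply a proof for this proposition but simply cites \cite{ShaweTaylorWilliams2002Stability}, so there is nothing further to compare.
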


\begin{proof} 
  [\normalfont \textbf{Proof of Remark \ref{rem_InconsistencyExample}}
   (Inconsistency example)]
  \label{prf_InconsistencyExample} 
  We consider the policy choices \( s_{j}: = \widehat{u}_{ j + 1 } \), 
  \( j \le n - 1 \).
  Analogous to the proof of Lemma \ref{lem_EVGP}, we obtain the approximation 
  \begin{align}
    C_{ > 1 }: = \sum_{ j = 2 }^{n} 
                 \frac{1}{ \widehat{ \mu }_{j} + \sigma^{2} } 
                 \widehat{u}_{j} \widehat{u}_{j}^{ \top }
  \end{align}
  of \( K_{ \sigma }^{-1} \) in Algorithm \ref{alg_GPApproximation}, i.e. 
  \( C_{ > 1 } \) includes all but the information from the first empirical
  eigenvector.
  The squared difference between the true approximate and the approximate
  posterior mean function is then given by
  \begin{align}
    & \ \ \ \
            \| k( X, \cdot )^{ \top } K_{ \sigma }^{-1} Y
             -
               k( X, \cdot )^{ \top } C_{ > 1 } Y
            \|_{ \mathbb{H} }^{2}
    = 
            \| k( X, \cdot )^{ \top } 
               ( \widehat{ \mu }_{1} + \sigma^{2} )^{-1} 
               \langle \widehat{u}_{1}, Y \rangle \widehat{u}_{1}
            \|_{ \mathbb{H} }^{2}
    \\
    & = 
            \frac{ \langle \widehat{u}_{1}, Y \rangle^{2} }
                 { ( \widehat{ \mu }_{1} + \sigma^{2} )^{2} }
            \langle \widehat{u}_{1}, K \widehat{u}_{1}
            \rangle
    = 
            \frac{ \widehat{ \mu }_{1} }
                 { ( \widehat{ \mu }_{1} + \sigma^{2} )^{2} }
            \langle \widehat{u}_{1}, Y \rangle^{2}
    = 
            \frac{ \widehat{ \mu }_{1} }
                 { ( \widehat{ \mu }_{1} + \sigma^{2} )^{2} }
            ( \langle \widehat{u}_{1}, \mathbf{f}_{0} \rangle 
            + \langle \widehat{u}_{1}, \varepsilon \rangle
            )^{2}.
    \notag
  \end{align}
  With probability \( 1 / 2 \),
  \( \langle \widehat{u}_{1}, \varepsilon \rangle \) has the same sign as 
  \( \langle \widehat{u}_{1}, \mathbf{f}_{0}\rangle \). 
  If we further assume that, using the notation from the proof of Proposition
  \ref{prp_RelativePerturbatonBounds}, the true data are sampled from 
  \( \mathbf{f}_{0} = \widehat{ \varphi }_{1} \), we obtain
  \begin{align}
        \langle \widehat{u}_{1}, \mathbf{f}_{0} \rangle^{2}
    & = 
        n^{2} \langle \widehat{u}_{1}, S_{n} f_{0} \rangle_{n}^{2}
    = 
        n^{2} \langle S_{n}^{*} \widehat{u}_{1}, f_{0} 
              \rangle_{ \mathbb{H} }^{2}
    = 
        n^{2} \| \widehat{ \varphi }_{1} \|_{ \mathbb{H} }^{2} 
    = 
        n^{2}.
  \end{align}
  Consequently on an event with probability larger than \( 1 / 2 \) 
  \begin{align}
            \| k( X, \cdot )^{ \top } K_{ \sigma }^{-1} Y
             -
               k( X, \cdot )^{ \top } C_{ > 1 } Y
            \|_{ \mathbb{H} }^{2}
    & \ge 
            \frac{ n^{2} \widehat{ \mu }_{1} }
                 { ( \widehat{ \mu }_{1} + \sigma^{2} )^{2} } 
    = 
            \frac{ n \widehat{ \lambda }_{1} }
                 { ( \widehat{ \lambda }_{1} + \sigma^{2} / n )^{2} } 
            \xrightarrow[ ]{ n \to \infty } \infty,  
  \end{align}
  as long as \( \widehat{ \lambda }_{1} \) is larger than a constant.
  Therefore, the above choice of actions will not produce a consistent estimator
  for the mean.
\end{proof}


\section{Proofs for the Examples}
\label{sec_ProofsForTheExamples}

In this section we collect the proofs for the considered examples.

\begin{proof} 
  [\normalfont \textbf{Proof of Corollary
  \ref{cor_PolynomiallyDecayingEigenvalues}} (Polynomially decaying
  eigenvalues)]
  \label{prf_PolynomiallyDecayingEigenvalues}

  We show below that the GP prior from Equation
  \eqref{eq_E_PolynomiallyDecayingPrior} satisfies the conditions of our general
  Theorem \ref{thm_ApproximateContractionRates} and hence the contraction rate
  of the approximate posteriors is a direct consequence.

  First note that Assumptions 
  \hyperref[ass_SimplePopulationEigenvalues]
           {\normalfont \textbf{{\color{blue} (SPE)}}},
  \hyperref[ass_EigenvalueDecay]
           {\normalfont \textbf{{\color{blue} (EVD)}}} and 
  \hyperref[ass_KarhunenLoeveMoments]
           {\normalfont \textbf{{\color{blue} (KLMom)}}} are all satisfied in
  the setting of Equation \eqref{eq_E_PolynomiallyDecayingPrior}.

  Next we verify Assumption \hyperref[ass_ConcentrationFunctionInequality]
                            {\normalfont \textbf{{\color{blue} (CFUN)}}}.
  First note that the small ball probability can be bounded as
  \begin{align}
             - \log \Pi_{n} \{ \| f \|_{ L^{2} } \le \varepsilon \}
    = 
             - \log \mathbb{P} \Big\{ \sum_{ j = 1 }^{ \infty } 
                                      j^{ - 1 - 2 \alpha / d } Z_{j}^{2} 
                                      \le \frac{ \varepsilon^{2} }{ \tau^{2} }
                               \Big\} 
    \lesssim 
             \Big( \frac{ \varepsilon }{ \tau } \Big)^{ d / \alpha },
  \end{align}
  see  Section 11.4.5 of \cite{GhosalvdVaart2017FundamentalsOfBayes}.
  Therefore, for 
  \( \epsilon = \varepsilon_n \asymp n^{- \beta / (2 \beta + d)} \) the small
  ball exponent term is bounded from above by a multiple of
  \( n^{d / (2 \beta + d)} \asymp n \epsilon_n^2 \).
  Furthermore, by taking \( h = \sum_{j = 1}^J f_{0, j} \phi_j \in \mathbb{H} \)
  with \( J = n^{d / (2 \beta + d)} \) and 
  \( f_{0, j} =  \langle f_0, \phi_j \rangle \), we get that 
  \begin{align}
        \| h \|_{ \mathbb{H} }^2
    & =
        \sum_{j = 1}^J \frac{ f_{0, j}^2 }{ \lambda_j }
    \le
        J^{1+2(\alpha-\beta)/d}  \tau^{-2}
        \sum_{j = 1}^J 
        f_{0, j}^2 j^{2 \beta / d} 
    \le
        \| f_0 \|_{ S^{\beta} }^2 n^{d / (2 \beta + d)},
    \\
        \| h - f_0 \|_{2}^2
    & =
        \sum_{j = J + 1}^{\infty} f_{0, j}^2
    \le
        J^{-2 \beta / d} \| f_{0} \|_{ S^{\beta} }^2
    =
        \| f_0 \|_{ S^{\beta} }^2.  n^{-2 \beta / (2 \beta + d)}.
    \notag
  \end{align}
  Hence the decentralization term with
  \( \varepsilon_n = n^{-\frac{\beta}{2\beta+d}}\|f_0\|_{S^{\beta}} \)  
  can be bounded from above by
  \begin{align*}
             \inf_{ h \in \mathbb{H}:\, \| h - f_0 \|_{2} \le \varepsilon_n }
             \| h \|_{ \mathbb{H} }^2 
    \lesssim 
             n \varepsilon_n^2.
  \end{align*}
  Together with the upper bound on the small ball exponent this yields
  Assumption \hyperref[ass_ConcentrationFunctionInequality]
                      {\normalfont \textbf{{\color{blue} (CFUN)}}}.

  By the definition of the eigenvalues
  \begin{align*}
             \sum_{j = m + 1}^\infty \lambda_j 
    =
             \sum_{j = m + 1}^\infty \tau^2 j^{-1 - 2 \alpha / d}
    \lesssim
             n^{2 ( \alpha - \beta ) / (2 \beta + d)} m^{-2 \alpha / d}
  \end{align*}
  and by Lemma 4 of \cite{NiemanEtal2022ContractionRates}
  \( 
             \mathbb{E} \widehat{ \lambda }_{m} 
    \lesssim \tau^{2} m^{-1 - 2 \alpha / d} 
  \).
  Hence for \( m = m_{n} \gtrsim n^{d / (2 \beta + d)} \) condition
  \eqref{eq_M_EigenvalueCondition} holds.
  Finally, the conditions \( \beta > d/2 \) and \( p > 4 + 8d / (2 \beta + d) \)
  guarantee that 
  \(
    m_{n} = o( ( \sqrt{n} / \log n ) \land 
                 ( n^{ ( p / 4 - 1 ) / 2 ( \log n )^{ p / 8 - 1 } } )
                )
  \). 
  This concludes the proof of the corollary.
\end{proof}

\begin{proof} 
  [\normalfont \textbf{Proof of Corollary
  \ref{cor_ExponentiallyDecayingEigenvalues}} (Exponentially decaying
  eigenvalues)]
  \label{prf_ExponentiallyDecayingEigenvalues}

  We again proceed by verifying that the conditions of Theorem
  \ref{thm_ApproximateContractionRates} hold for the prior
  \eqref{eq_E_ExponentiallyDecayingPrior}, directly implying our contraction
  rate results.
  First, we note that by construction, similarly to the polynomial case the
  Assumptions \hyperref[ass_SimplePopulationEigenvalues]
                       {\normalfont \textbf{{\color{blue} (SPE)}}},
              \hyperref[ass_KarhunenLoeveMoments]
                       {\normalfont \textbf{{\color{blue} (KLMom)}}} are all
  satisfied.
  Assumption \hyperref[ass_EigenvalueDecay]
             {\normalfont \textbf{{\color{blue} (EVD)}}} is not satisfied, since
  for any \( C > 1 \), 
  \( 
    \lambda( C j ) =  e^{ - \tau_{n} ( C - 1 ) } \lambda(j)
                  \to \lambda(j)
  \) 
  for \( n \to \infty \).
  It is, however, satisfied for 
  \( j = m_{0} = n^{ d / ( 2 \beta + d ) } \log n \), since
  \begin{align}
        \lambda( C m_{0} ) 
    = 
        \exp \big( - (C - 1) \tau_{n} m_{0}^{ 1 / d } \big)
        \lambda( m_{0} ) 
    \le
        \exp \big( -C ( \log n )^{d} \big) 
        \lambda( m_{0} ) 
    \le 
        \frac{1}{2} \lambda( m_{0} )
  \end{align}
  for \( C > 1 \) sufficiently large independent of \( n \in \mathbb{N} \).
  Since we only need to apply 
  Assumption \hyperref[ass_EigenvalueDecay]
             {\normalfont \textbf{{\color{blue} (EVD)}}} to this \( j = m_{0} \)
  to use Proposition \ref{prp_RelativePerturbatonBounds}, we may still apply 
  Theorem \ref{thm_ApproximateContractionRates} in this setting.

  Next we show that Assumption \hyperref[ass_ConcentrationFunctionInequality]
                               {\normalfont \textbf{{\color{blue} (CFUN)}}}
  holds.
  First, we give a lower bound for the prior small ball probability.
  Note that by independence, for any \( J \in \mathbb{N} \)
  \begin{align*}
          \mathbb{P} \Big\{ 
                       \sum_{j = 1}^{ \infty } 
                       e^{ - \tau j^{1/d} } Z_{j}^{2} \le \varepsilon^{2}
                     \Big\} 
    & \ge
          \mathbb{P} \Big\{ 
                       \sum_{j = 1}^{J} 
                       e^{ - \tau j^{1/d} } Z_{j}^{2} 
                       \le \frac{ \varepsilon^{2} }{2}
                     \Big\} 
          \mathbb{P} \Big\{ 
                       \sum_{ j = J + 1 }^{ \infty } 
                       e^{ - \tau j^{1/d} } Z_{j}^{2}
                       \le \frac{ \varepsilon^{2} }{2}
                     \Big\}. 
  \end{align*}
  Let us take \( J = ( \tau_n^{-1} \log n)^d = n^{d / (2 \beta + d)} \) and 
  \( \varepsilon_n^2 = C_0 n^{-2 \beta / (2 \beta + d)} \log n \), for some
  large enough \( C_0 > 0 \) to be specified later. 
  Then by noting that for \( 0 < \sigma_1 \le \sigma_2 \), the fraction of the
  centered Gaussian densities \( g_{ \sigma_{1} }, g_{ \sigma_{2} } \) with
  variances \( \sigma_1 \) and \( \sigma_2 \) satisfy that 
  \( g_{ \sigma_2 } / g_{ \sigma_1 } \ge \sigma_1 / \sigma_2 \),
  we get that
  \begin{align}
          \mathbb{P} 
          \Big\{ 
                \sum_{ j = 1 }^{J} e^{ - \tau j^{ 1 / d } } Z_{j}^{2} 
            \le \frac{ \varepsilon_{n}^{2} }{2} 
          \Big\} 
    & \ge 
          \exp \Big( -\tau \sum_{j = 1}^{J} j^{1/d} - J^{1 + 1/d} \Big) 
          \mathbb{P} \Big\{ 
                           \sum_{ j = 1 }^{J} Z_{j}^{2} 
                       \le \frac{ \varepsilon_{n}^{2} \exp(\tau J^{1/d}) }{2} 
                     \Big\} 
    \notag
    \\
    & \ge 
            \exp(- c \tau J^{ 1 + 1 / d }) 
            \mathbb{P} \Big\{ 
                             \frac{1}{J} \sum_{ j = 1 }^{J} Z_{j}^{2} 
                         \le \frac{ C_{0} \log n }{2} 
                       \Big\} 
    \\
    & \ge 
            \frac{1}{2} \exp(- c n^{d / ( 2 \beta + d )} \log n ) 
    \notag
  \end{align}
  for \( n \) sufficiently large, where the last inequality follows from the law
  of large numbers.
  We note that 
  \begin{align}
            \sum_{j = J + 1}^{ \infty } \exp( - \tau j^{ 1 / d } ) 
    & \le 
            \int_{J}^{ \infty }  \exp( - \tau x^{ 1 / d } ) \, d x 
    = 
            d 
            \int_{ J^{1/d} }^{ \infty } e^{- \tau y} y^{d - 1} \, d y 
    \\
    & \lesssim 
            \tau^{-1} \exp( - \tau J^{ 1 / d } ) J^{ 1 - 1 / d },
    \notag
  \end{align}
  where we have used
  \begin{align}
    \int_{ J^{1/d} }^{ \infty } e^{- \tau y} y^{d - 1} \, d y 
    & = 
    \tau^{-1} \exp( - \tau J^{ 1/d } ) J^{ 1 - 1/d } 
    + 
    ( d - 1 ) \tau^{-1} 
    \int_{ J^{1/d} }^{ \infty } e^{- \tau y} y^{d - 2} \, d y 
    \\
    & \le 
    \tau^{-1} \exp( - \tau J^{ 1/d } ) J^{ 1 - 1/d } 
    + 
    \frac{ ( d - 1 ) J^{ - 1/d } }{ \tau } 
    \int_{ J^{1/d} }^{ \infty } e^{- \tau y} y^{d - 1} \, d y 
    \notag
    \\
    & \le 
    \tau^{-1} \exp( - \tau J^{ 1/d } ) J^{ 1 - 1/d } 
    + 
    \frac{1}{2}
    \int_{ J^{1/d} }^{ \infty } e^{- \tau y} y^{d - 1} \, d y 
    \notag
  \end{align}
  for \( n \) sufficiently large.
  Via Markov's inequality, we arrive at
  \begin{align}
          \mathbb{P}
          \Big\{ \sum_{ j = J + 1 }^{ \infty } 
                 \exp( - \tau j^{ 1 / d } ) Z_{j}^{2} 
             \le 
                 \frac{ \varepsilon_{n}^{2} }{2}
          \Big\} 
    & \ge 
          1 - \frac{2}{ \varepsilon_{n}^{2} } 
              \sum_{ j = J + 1 }^{ \infty } 
              \exp( - \tau j^{ 1 / d } )
    \\
    & \ge 
          1 - \frac{ c J \exp( - \tau J^{1/d} )           }
                   { J^{ 1 / d } \tau \varepsilon_{n}^{2} } 
      \ge 
          \frac{1}{2}
    \notag
  \end{align}
  for \( C_0 \) large enough in the definition of \( \varepsilon_n \) above.

  Furthermore, with the same notation as in the proof of Corollary
  \ref{cor_ExponentiallyDecayingEigenvalues}, taking 
  \( h = \sum_{j = 1}^J f_{0, j} \phi_j \in \mathbb{H} \) with 
  \( J = ( \tau_n^{-1} \log n )^d = n^{d / (2 \beta + d)} \), we get that
  \begin{align}
    \|h-f_0\|_{2}^2&=\sum_{j=J+1}^{\infty}f_{0,j}^2\leq J^{-2\beta/d}\|f_0\|_{S^{\beta}}^2=n^{-\frac{2\beta}{2\beta+d}}\|f_0\|_{S^{\beta}}^2,\\
    \|h\|_{\mathbb{H}}^2 
    & =
    \sum_{j = 1}^J \frac{ f_{0, j}^2 }{ \lambda_{j} }
    \le
    \max_{j \leq J} \big( j^{-2 \beta / d} \exp( \tau j^{ 1 / d } ) \big) 
    \sum_{j = 1}^J j^{2 \beta / d} f_{0, j}^2
    \le 
    C n^{\frac{d}{2\beta+d}} \|f_0\|_{S^{\beta}}^2,
    \notag
  \end{align}
  where in the last inequality we have used that the function
  \( x \mapsto e^{\tau x} x^{-2 \beta} \) is convex for any \( \beta > 0 \) and
  therefore the maximum is taken at one of the end points, i.e.,
  \begin{align}
             \max_{j \le J} \big( j^{-2 \beta / d} \exp(\tau j^{1/d}) \big)
    \le      e^{\tau} \lor n J^{-2 \beta / d} 
    \lesssim n^{d / (2 \beta + d)}. 
  \end{align}

  Hence the decentralization term for \( f_0 \in S^{\beta}(M) \) can be bounded
  from above for \( \epsilon_{n} \gtrsim n^{-\beta / (2 \beta + d)} \) by
  \begin{align*}
             \inf_{ h \in \mathbb{H}: \| h - f_0 \|_{2} \le \epsilon_{n} }
             \|h\|_{\mathbb{H}}^2 
    \lesssim n^{d/(d+2\beta)} 
    \lesssim n \epsilon_{n}^2.
  \end{align*}
  Combining the upper bounds on the decentralization term and log small ball
  probability, we get that that
  \begin{align*}
              \varphi_{ f_0 }(\varepsilon_n)
    \le       C n^{d / (2 \beta + d)} \log n 
    \lesssim  n\varepsilon_n^2.
  \end{align*}
  For  condition \eqref{eq_M_EigenvalueCondition}, we simply note that for
  \( m = m_{n} \ge n^{d / (2 \beta + d)} \), as before,
  \begin{align}
             \sum_{j = m + 1}^{\infty} \lambda_j
    =        \sum_{j = m + 1}^{\infty} \exp( -\tau j^{1/d} )
    \lesssim \tau^{-1} \exp( -\tau m^{1/d} ) m^{1 - 1/d}
    \lesssim n^{-2 \beta / (2\beta + d)},
  \end{align}
  and with the analogous reasoning as in Lemma 4 of
  \cite{NiemanEtal2022ContractionRates},
  \( \mathbb{E} \widehat{\lambda}_{ m + 1 } \lesssim 1/n \).
  Indeed, for 
  \( \underline{m}: = \lfloor ( \tau^{-1} ( \log n - C ) )^{d} \rfloor < m \)
  with \( C > 0 \) large enough, we have
  \begin{align}
    \sum_{ j = \underline{m} }^{m} 
    \exp( - \tau j^{ 1 / d } )
    & \ge 
    \int_{ ( \underline{m}  - 1 )^{ 1 / d } }^{ m^{ 1/d } } 
      e^{ - \tau y } y^{ d - 1 }
    \, d y  
    \ge 
    \Big[ 
      -\tau^{-1} e^{ - \tau y } y^{ d - 1 } 
    \Big]_{ ( \underline{m} - 1 )^{ 1 / d } }^{ m^{ 1/d } } 
    \\
    & \gtrsim 
    \tau^{-1} \exp \big(- \tau m^{1/d} \big) m^{1/d} 
    \gtrsim 
    n^{ - 2 \beta / ( 2 \beta + d ) }
    \notag
  \end{align}
  via partial integration.
  Consequently, there exists an 
  \( i \in \{ \underline{m}, \underline{m} + 1, \dots, m \} \) with 
  \( \mathbb{E} \widehat{ \lambda }_{i} \le C \lambda_{i} \).
  Otherwise
  \begin{align}
    \mathbb{E} \sum_{ j = \underline{m} }^{m} \widehat{ \lambda }_{j}
    \ge  
    C \sum_{ j = \underline{m} }^{m} \lambda_{j}
    = 
    C \sum_{ j = \underline{m} }^{m} \exp( - \tau j^{ 1 / d } )
    \ge 
    C \sum_{ j = \underline{m} }^{\infty} \lambda_{j}, 
  \end{align}
  which contradicts Proposition \ref{prp_ExpectationOfTheTrace} for \( C > 0 \)
  large enough.
  This implies
  \begin{align}
    \mathbb{E} \widehat{ \lambda }_{m + 1} 
    & \le 
    \mathbb{E} \widehat{ \lambda }_{i} 
    \le 
    C \lambda_{i} 
    \le 
    C \lambda_{ \underline{m} }
    = 
    C n^{-1}. 
  \end{align}
  
  Finally, the conditions \( \beta > d/2 \) and \( p > 4 + 8d / (2 \beta + d) \)
  guarantee that 
  \(
    m_{n} = o(      ( \sqrt{n} / \log n ) \\ 
              \land ( n^{ ( p / 4 - 1 ) / 2 } ( \log n )^{ p / 8 - 1 }  )
             )
  \). 
  This concludes the proof of the corollary.
\end{proof}


\section{Complementary results}
\label{sec_ComplementaryResults}

\begin{proof}[\normalfont \textbf{Proof of Proposition \ref{prp_ContractionOfApproximation}} (Contraction of approximation)]
  \label{prf_ContractionOfApproximation}
  Fix \( M_{n} \to \infty \), set \( M_{n}': = \sqrt{ M_{n} } \) and \( \mathcal{F}_{n}: = \{ d_{H}( \cdot, f_{0} ) \ge M_{n} \varepsilon_{n} \} \).
  Since by Markov's inequality,
  \begin{align}
          \mathbb{P}_{ f_{0} }^{ \otimes n } \{ 
                \Pi_{n} \{ \mathcal{F}_{n} | X, Y \}
            \ge
                M_{n}'
                \mathbb{E}_{ f_{0} } \Pi_{n} \{ \mathcal{F}_{n} | X, Y \}
          \} 
    & \le
          \frac{1}{ M_{n}' } 
    \xrightarrow[ ]{ n \to \infty } 0, 
  \end{align}
  we can restrict the event \( A_{n} \) from Proposition \ref{prp_StandardContractionRate} such that 
  \begin{align}
        \Pi_{n} \{ \mathcal{F}_{n} | X, Y \} \mathbf{1}_{ A_{n} }
    \le
        C_{1} M_{n}' 
        \exp( - C_{2} n M_{n}^{2} \varepsilon_{n}^{2} ).
  \end{align}

  Exactly as in the proof of Theorem 5 in \cite{RaySzabo2019VariationalBayes},
  we can then use the duality formula, see Corollary 4.15 in
  \cite{BoucheronEtal2013Concentration},
  \begin{align}
    \label{eq_B_KLDuality}
        \kl( \mathbb{Q}, \mathbb{P} ) 
    & = 
        \sup_{g} \Big( 
          \int g \, d \mathbb{Q} - \log \int e^{g} \, d \mathbb{P}  
        \Big)
  \end{align}
  for the Kullback-Leibler divergence between two probability measures 
  \( \mathbb{Q} \) and \( \mathbb{P} \) on the same space.
  The supremum above is taken over all measurable functions \( f \) such that 
  \( \int e^{g} \, d \mathbb{P} < \infty \). 
  Applying Equation \eqref{eq_B_KLDuality} with 
  \( \mathbb{Q} = \nu_{n}, \mathbb{P} = \Pi_{n}( \cdot | X, Y ) \)
  and 
  \( g = n M_{n}^{2} \varepsilon_{n}^{2} \mathbf{1}_{ \mathcal{F}_{n} } \)
  yields that on \( A_{n} \cap A_{n}' \),
  \begin{align}
            n M_{n}^{2} \varepsilon_{n}^{2} 
            \nu_{n}( \mathcal{F}_{n} ) \mathbf{1}_{ A_{n} \cap A_{n}' } 
    & \le 
            \kl( \nu_{n}, \Pi_{n}( \cdot | X, Y ) ) \mathbf{1}_{ A_{n}' } 
          + 
            \log 
            \int 
              e^{ n M_{n}^{2} \varepsilon_{n}^{2} 
                  \mathbf{1}_{ \mathcal{F}_{n} }(f) 
                }
            \, \Pi_{n}( d f | X, Y )  
            \mathbf{1}_{ A_{n} }
    \notag 
    \\
    & \le 
            n M_{n}'^{2} \varepsilon_{n}^{2}
          + 
            e^{ n M_{n}^{2} \varepsilon_{n}^{2} / 2 } 
            \Pi_{n}( \mathcal{F}_{n} | X, Y ) \mathbf{1}_{ A_{n} }
    \\
    & \le
            n M_{n}'^{2} \varepsilon_{n}^{2}
          + 
            C_{1} M_{n}'  
            e^{ ( 1 - C_{2} ) n M_{n}^{2} \varepsilon_{n}^{2} }.
    \notag
  \end{align}
  Note that for the second inequality, we have used 
  \( \log( 1 + x ) \le x \), \( x \ge 0 \).
  Rearranging the terms and using that the constant \( C_{2} \) from Proposition
  \ref{prp_ContractionOfApproximation} can be chosen strictly larger than 
  \( 1 \) implies that 
  \( \nu( \mathcal{F}_{n} ) \mathbf{1}_{ A_{n} \cap A_{n}' } \to 0 \).
  Consequently, for any \( \alpha > 0 \),
  \begin{align}
            \mathbb{P}_{ f_{0} }^{ \otimes n } \{ 
              \nu_{n}( \mathcal{F}_{n} ) \ge \alpha
            \} 
    & \le 
            \mathbb{P}_{ f_{0} }^{ \otimes n }( ( A_{n} \cap A_{n}' )^{c} )
          + 
            \mathbb{P}_{ f_{0} }^{ \otimes n } \{ 
              \nu_{n}( \mathcal{F}_{n} ) \mathbf{1}_{ A_{n} \cap A_{n}' }
              \ge \alpha
            \}
    \xrightarrow[ ]{ n \to \infty } 0,
  \end{align}
  which finishes the proof.
\end{proof}

\begin{proof} 
  [\normalfont \textbf{Proof of Theorem \ref{thm_LanczosEigenvalueBound}}
  (Lanczos: Eigenvalue bound)]
  \label{prf_LanczosEigenvalueBound}
  We split the proof in separate steps.

  \textbf{Step 1: Polynomial formulation of orthogonality.}
  Any \( x \in \mathcal{K}_{ \tilde m } \), can be written as 
  \( x = p(A) v_{0} \) where \( p \) is a polynomial with 
  \( \deg p \le \tilde m - 1 \).
  We now prove that for any \( i \le \tilde m \), the statement
  \( x \perp \tilde u_{1}, \dots \tilde u_{ i - 1 } \) is equivalent to 
  \( p( \tilde \lambda_{1} ) = \dots = p( \tilde \lambda_{ i - 1 } ) = 0 \).

  If we set \( \tilde \alpha_{l}: = \langle v_{0}, \tilde u_{l} \rangle \), \( l \le \tilde m \), then for any \( j \le i - 1 \),
  \begin{align}
        \langle x, \tilde u_{j} \rangle
    & = 
        \Big\langle 
          \sum_{ l = 1 }^{ \tilde m }
          \tilde \alpha_{l} p(A) \tilde u_{l}, 
          \tilde u_{j}
        \Big\rangle 
    = 
        \Big\langle 
          \sum_{ l = 1 }^{ \tilde m }
          \tilde \alpha_{l} p( \tilde \lambda_{l}  ) \tilde u_{l}, 
          \tilde u_{j}
        \Big\rangle 
    = 
        \tilde \alpha_{j} p( \tilde \lambda_{j} ),
  \end{align}
  where we have used the fact that 
  \( 
    A \tilde u_{l} - \tilde \lambda_{l} \tilde u_{l} 
    \perp \mathcal{K}_{ \tilde m } 
  \) 
  from Lemma \ref{lem_ElementaryPropertiesOfLanczosEigenquantities} (ii) for the
  second equality.
  The claim now follows from the fact that \( \tilde \alpha_{j} \ne 0 \). 
  Indeed, otherwise, 
  \( 0 = \alpha_{j} = \langle \tilde u_{j}, v_{0} \rangle \) and further
  \begin{align}
        0 
    & = 
        \tilde \lambda_{j} \tilde \alpha_{j} 
    = 
        \langle \tilde \lambda_{j} \tilde u_{j}, v_{0} \rangle 
    = 
        \langle A \tilde u_{j}, v_{0} \rangle
    = 
        \langle \tilde u_{j}, A v_{0} \rangle,
  \end{align}
  where we have again used that 
  \( 
    A \tilde u_{l} - \tilde \lambda_{l} \tilde u_{l} 
    \perp \mathcal{K}_{ \tilde m } 
  \) 
  and 
  \( v_{0} \in \mathcal{K}_{ \tilde m } \).
  Inductively, this yields 
  \( \langle \tilde u_{j}, A^{l} v_{0} \rangle = 0 \) for all 
  \( 0 \le l \le \tilde m \), i.e., \( \tilde u_{j} = 0 \).
  This contradicts \( \dim \mathcal{K}_{ \tilde m } = \tilde m \), which is true
  under Assumption \hyperref[ass_LanczosWelldefined]
                   {\normalfont \textbf{{\color{blue} (LWdf)}}}.

  \textbf{Step 2: Polynomial formulation of the approximation.}
  Let \( U \) denote a linear subspace of \( \mathbb{R}^{n} \) and 
  \( \tilde A: = V V^{ \top } A V V^{ \top } \).
  The definition of the Lanczos algorithm \ref{alg_Lanczos} and the
  Courant-Fisher characterization of eigenvalues implies that
  \begin{align}
        \tilde \lambda_{i} 
    & = 
        \min_{ \dim U^{ \perp } = i - 1 } 
        \max_{ 0 \ne u \in U }
        \frac{ \langle \tilde A u, u \rangle }{ \langle u, u \rangle }
    = 
        \max_{ 0 \ne u \perp \tilde u_{1}, \dots, \tilde u_{ i - 1 } } 
        \frac{ \langle \tilde A u, u \rangle }
             { \langle u, u \rangle }
    \\
    & = 
        \max_{ 0 \ne u \perp \tilde u_{1}, \dots, \tilde u_{ i - 1 } } 
        \frac{ \langle A V V^{ \top } u, V V^{ \top } u \rangle }
             { \langle V V^{ \top } u, V V^{ \top } u \rangle }
    = 
        \max_{ \substack{ 
               0 \ne x \in \mathcal{K}_{ \tilde m }, \\
               x \perp \tilde u_{1}, \dots, \tilde u_{ i - 1 }
             } }
        \frac{ \langle A x, x \rangle }{ \langle x, x \rangle }.
    \notag
  \end{align}
  Note that for the third equality above, we may assume that without loss of
  generality \( u \in K_{ \tilde m } \), since restricting to 
  \( \mathcal{K}_{ \tilde m } \) reduces the denominator without reducing the
  numerator.

  Now, for \( L: = \{ i + 1, \dots, i + \tilde p \} \), set 
  \( x_{L}: = \prod_{ l \in L } ( A - \widehat{ \lambda }_{l}I ) v_{0} \) and
  \( \widehat{ \alpha }_{j}: = \langle x_{L}, \widehat{u}_{j} \rangle \),
  \( j \le n \). 
  Here, \( \widehat{ \alpha }_{j} = 0 \) for \( j \in L \), due to the
  definition of \( x_{L} \) and 
  \( \prod_{ l \in L } ( \tilde \lambda_{j} - \widehat{ \lambda }_{l} ) \ne 0 \)
  for all \( j < i \) due to Lemma
  \ref{lem_ElementaryPropertiesOfLanczosEigenquantities} (iii).
  Writing \( x \in \mathcal{K}_{ \tilde m } \) as \( p(A) v_{0} \) for a
  polynomial \( p \) with \( \deg p \le \tilde m - 1 \), Step 1 implies that
  \begin{align}
    \label{eq_A_PolynomaialApproximationFormulation}
            \widehat{ \lambda }_{i} - \tilde \lambda_{i}
    & = 
            \widehat{ \lambda }_{i} 
            - 
            \max_{ \substack{ 
                   0 \ne x \in \mathcal{K}_{ \tilde m }, \\
                   x \perp \tilde u_{1}, \dots, \tilde u_{ i - 1 }
                 } }
            \frac{ \langle A x, x \rangle }{ \langle x, x \rangle }
    = 
            \min_{ \substack{ 
                   \deg p \le \tilde m - 1, \\ 
                   \forall j < i: p( \tilde \lambda_{j} ) = 0
                 } }
            \frac{ \langle ( \widehat{ \lambda }_{i} I - A )
                           p(A) v_{0}, p(A) v_{0}
                   \rangle 
                 }
                 { \langle p(A) v_{0}, p(A) v_{0} \rangle }
    \\
    & \le 
            \min_{ \substack{ 
                   \deg p \le \tilde m - \tilde p - 1, \\ 
                   \forall j < i: p( \tilde \lambda_{j} ) = 0
                 } }
            \frac{ \langle ( \widehat{ \lambda }_{i} I - A ) 
                   p(A) x_{L}, p(A) x_{L}
                   \rangle 
                 }
                 { \langle p(A) x_{L}, p(A) x_{L} \rangle }
    \notag 
    \\
    & = 
            \min_{ \substack{ 
                   \deg p \le \tilde m - 1 - \tilde p, \\ 
                   \forall j < i: p( \tilde \lambda_{j} ) = 0
                 } }
            \frac{ \sum_{ j = 1 }^{ i - 1 } 
                   ( \widehat{ \lambda }_{i} - \widehat{ \lambda }_{j} ) 
                   | \widehat{ \alpha }_{j} p( \widehat{ \lambda }_{j} ) |^{2}
                 + 
                   \sum_{ j = i + \tilde p + 1 }^{n} 
                   ( \widehat{ \lambda }_{i} - \widehat{ \lambda }_{j} ) 
                   | \widehat{ \alpha }_{j} p( \widehat{ \lambda }_{j} ) |^{2}
                 }
                 { \sum_{ j = 1 }^{n} 
                   | \widehat{ \alpha }_{j} p( \widehat{ \lambda }_{j} ) |^{2}
                 }.
    \notag
  \end{align}
  Due to the minimization over \( p \), without loss of generality, all
  denominators in Equation \eqref{eq_A_PolynomaialApproximationFormulation}
  differ from zero.
  Since \( \widehat{ \lambda }_{1} \ge \dots \ge \widehat{ \lambda }_{n} \), the
  left sum in the numerator is non-positive.
  Additionally, 
  \( 
        \widehat{ \alpha }_{i} 
    =
        \prod_{ l \in L }
        ( \widehat{ \lambda }_{i} - \widehat{ \lambda }_{l} )
        \langle v_{0}, \widehat{u}_{i} \rangle
    \ne
        0
  \),
  since \( \langle \widehat{u}_{i}, v_{0} \rangle \ne 0 \) under Assumption
  \hyperref[ass_LanczosWelldefined]{\normalfont \textbf{{\color{blue} (LWdf)}}}.
  This yields
  \begin{align}
    \label{eq_A_PolynomaialApproximationFormulationFinal}
            \widehat{ \lambda }_{i} - \tilde \lambda_{i} 
    & \le 
            ( \widehat{ \lambda }_{i} - \widehat{ \lambda }_{n} ) 
            \min_{ \substack{ 
                   \deg p \le \tilde m - \tilde p - 1, \\ 
                   \forall j < i: p( \tilde \lambda_{j} ) = 0
                 } }
            \sum_{ j = i + \tilde p + 1 }^{n} 
            \frac{ | \widehat{ \alpha }_{j} p( \widehat{ \lambda }_{j} ) |^{2} }
                 { | \widehat{ \alpha }_{i} p( \widehat{ \lambda }_{i} ) |^{2} }
    \\
    & \le 
            ( \widehat{ \lambda }_{i} - \widehat{ \lambda }_{n} ) 
            \kappa_{ i, \tilde p }^{2}
            \tan^{2}( \widehat{u}_{i}, v_{0} )
            \min_{ \substack{ 
                   \deg p \le \tilde m - \tilde p - 1, \\ 
                   \forall j < i: p( \tilde \lambda_{j} ) = 0
                 } }
            \max_{ j \ge i + \tilde p + 1 } 
            \frac{ p( \widehat{ \lambda }_{j} )^{2} }
                 { p( \widehat{ \lambda }_{i} )^{2} },
    \notag
  \end{align}
  where we have used the fact that due to 
  \( 
       \widehat{ \alpha }_{j} 
    = 
       \prod_{ l \in L }
       ( \widehat{ \lambda }_{j} - \widehat{ \lambda }_{l} ) 
       \langle v_{0}, \widehat{u}_{j} \rangle 
  \), 
  \begin{align}
            \sum_{ j = i + \tilde p + 1 }^{n} 
            \frac{ \widehat{ \alpha }_{j}^{2} }
                 { \widehat{ \alpha }_{i}^{2} }
    & \le 
            \prod_{ l \in L } 
            \frac{ ( \widehat{ \lambda }_{l} - \widehat{ \lambda }_{n} )^{2} }
                 { ( \widehat{ \lambda }_{i} - \widehat{ \lambda }_{l}  )^{2} } 
            \sum_{ j = i + \tilde p + 1 }^{n}
            \frac{ \langle v_{0}, \widehat{u}_{j} \rangle^{2} }
                 { \langle v_{0}, \widehat{u}_{i} \rangle^{2} } 
    \le 
            \kappa_{ i, \tilde p }^{2} \tan^{2}( v_{0}, \widehat{u}_{i} ). 
  \end{align}
   
  \textbf{Step 3: Choice of the polynomial.}
  In Equation \eqref{eq_A_PolynomaialApproximationFormulationFinal}, we can
  restrict the choice of \( p \) to 
  \begin{align}
        p( \lambda ) 
    & = 
        \frac{ ( \tilde \lambda_{1} - \lambda ) \dots
               ( \tilde \lambda_{ i - 1 } - \lambda ) 
             }
             { ( \tilde \lambda_{1} - \widehat{ \lambda }_{i} ) \dots
               ( \tilde \lambda_{ i - 1 } - \widehat{ \lambda }_{i} )
             } 
        q( \lambda ), 
    \qquad \lambda \in \mathbb{R}, 
  \end{align}
  such that q is a polynomial with \( \deg q \le \tilde m - \tilde p - i \).
  We can then estimate
  \begin{align}
            \max_{ j \ge i + \tilde p + 1 }
            \frac{ p( \widehat{ \lambda }_{j} )^{2} }
                   { p( \widehat{ \lambda }_{i} )^{2} }
    & \le 
            \tilde \kappa_{i}^{2}
            \min_{ \deg q \le \tilde m - \tilde p - i } 
            \max_{ j \ge i + \tilde p + 1 }
            \frac{ q( \widehat{ \lambda }_{j} )^{2} }
                 { q( \widehat{ \lambda }_{i} )^{2} }
    \notag 
    \\
    & \le 
            \tilde \kappa_{i}^{2}
            \min_{ \substack{ 
                   \deg q \le \tilde m - \tilde p - i, \\
                   q( \widehat{ \lambda }_{i} ) = 1
                 } } 
            \max_{ \lambda \in
                   [ \widehat{ \lambda }_{n},
                     \widehat{ \lambda }_{ i + \tilde p + 1 } ] 
                 } 
            q( \lambda )^{2}
    = 
            \frac{ \tilde \kappa_{i}^{2} }
                 { T_{ \tilde m - i - \tilde p }( \gamma_{i} )^{2} }, 
    \notag
  \end{align}
  where the last equality follows from Theorem 4.8 in
  \cite{Saad2011LargeEigenvalueProblems}.
  Note that the estimate
  \( 
    0 \le ( \tilde    \lambda_{  i - 1} - \widehat{ \lambda }_{j} )^{2}
      \le ( \tilde    \lambda_{  i - 1} - \widehat{ \lambda }_{n} )^{2}
  \)
  used to arrive at the term \( \tilde \kappa_{i}^{2} \) requires the
  assumption \( \tilde \lambda_{ i - 1 } > \widehat{ \lambda }_{i} \). 
  This finishes the proof.
\end{proof}

\begin{proof} 
  [\normalfont \textbf{Proof of Theorem \ref{thm_LanczosEigenvectorBound}}
   (Lanczos: Eigenvector bound)]
  \label{prf_LanczosEigenvectorBound}
  Let \( ( \tilde \lambda^{*}, \tilde u^{*} ) \) be the approximate eigenpair
  that satisfies 
  \( 
      | \widehat{ \lambda }_{i} - \tilde \lambda^{*} | 
    = \min_{ j \le n } | \widehat{ \lambda }_{i} - \tilde \lambda_{j} | 
  \).
  From Theorem 4.6 in \cite{Saad2011LargeEigenvalueProblems}, we then have
  \begin{align}
            \sin^{2}( \widehat{u}_{i}, \tilde u^{*} ) 
    & \le 
            \Big( 1 + \frac{ \| P_{ \mathcal{K}_{ \tilde m } } A
                                ( I - P_{ \mathcal{K}_{ \tilde m } } )
                             \|_{ \text{op} }
                           }
                           { \delta^{2}_{i} }
            \Big)
            \sin^{2}( \widehat{u}_{i}, \mathcal{K}_{ \tilde m } )
    \\
    & \le 
            \Big( 1 + \frac{ \| K \|_{ \text{op} } } { n \delta^{2}_{i} }
            \Big)
            \sin^{2}( \widehat{u}_{i}, \mathcal{K}_{ \tilde m } ),
    \notag
  \end{align}
  where \( P_{ \mathcal{K}_{ \tilde m } } \) denotes that orthogonal projection
  onto the Krylov space \( \mathcal{K}_{ \tilde m } \).
  By Lemma 6.1 in \cite{Saad2011LargeEigenvalueProblems},
  \begin{align}
            \sin^{2}( \widehat{u}_{i}, \mathcal{K}_{ \tilde m } )
    & \le 
            \min_{ \substack{ \deg p \le \tilde m - 1, \\
                   p( \widehat{ \lambda }_{i} ) = 1
                 } }
            \| p(A) y_{i} \|^{2} 
            \tan^{2}( \widehat{u}_{i}, v_{0} ),
  \end{align}
  where the minimum is taken over polynomials \( p \) and 
  \(  
    y_{i}: =    ( I - \widehat{u}_{i} \widehat{u}_{i}^{ \top } ) v_{0} /
             \| ( I - \widehat{u}_{i} \widehat{u}_{i}^{ \top } ) v_{0} \| 
  \).
  Note that the denominator is not zero according to Assumption
  \hyperref[ass_LanczosWelldefined]{\normalfont \textbf{{\color{blue} (LWdf)}}}.

  For \( L: = i + 1, \dots, i + \tilde p \), set 
  \( 
    x_{L}: = ( \prod_{ l \in L }
               ( A - \widehat{ \lambda }_{l} ) / 
               ( \widehat{ \lambda }_{i} - \widehat{ \lambda }_{l} ) 
             ) y_{i} 
  \) and \( \widehat{ \alpha }_{j}: = \langle x_{L}, \widehat{u}_{j} \rangle \),
  \( j \le n \).
  Since \( \widehat{ \alpha }_{l} = 0 \) for all \( l \in L \cup \{ i \} \), we
  can now estimate
  \begin{align}
            \min_{ \substack{ 
                   \deg p \le \tilde m - 1, \\
                   p( \widehat{ \lambda }_{i} ) = 1
                 } }
            \| p(A) y_{i} \|^{2}
    & \le 
            \min_{ \substack{ 
                   \deg p \le \tilde m - \tilde p - 1, \\
                   p( \widehat{ \lambda }_{i} ) = 1
                 } }
            \| p(A) x_{L} \|^{2}
    = 
            \min_{ \substack{ 
                   \deg p \le \tilde m - \tilde p - 1, \\
                   p( \widehat{ \lambda }_{i} ) = 1
                 } }
            \sum_{ j = 1 }^{n}
            | p( \widehat{ \lambda }_{j} ) \widehat{ \alpha }_{j} |^{2} 
    \\
    & \le 
            \prod_{ j = 1 }^{ i - 1 } 
            \Big( \frac{ \widehat{ \lambda }_{j} - \widehat{ \lambda }_{n} }
                       { \widehat{ \lambda }_{j} - \widehat{ \lambda }_{i} }
            \Big)^{2}
            \min_{ \substack{ 
                   \deg q \le \tilde m - \tilde p - i, \\
                   q( \widehat{ \lambda }_{i} ) = 1
                 } }
            \sum_{ j = i + \tilde p + 1 }^{n}
            | q( \widehat{ \lambda }_{j} ) \widehat{ \alpha }_{j} |^{2} 
    \notag
    \\
    & \le 
            \prod_{ j = 1 }^{ i - 1 } 
            \Big( \frac{ \widehat{ \lambda }_{j} - \widehat{ \lambda }_{n} }
                       { \widehat{ \lambda }_{j} - \widehat{ \lambda }_{i} }
            \Big)^{2}
            \sum_{ j = i + \tilde p + 1 }^{n}
            | \widehat{ \alpha }_{j} |^{2} 
            \min_{ \substack{ 
                   \deg q \le \tilde m - \tilde p - i, \\
                   q( \widehat{ \lambda }_{i} ) = 1
                 } }
            \max_{ \lambda \in
                   [ \widehat{ \lambda }_{n},
                     \widehat{ \lambda }_{ i + \tilde p + 1 } ]
                 } 
            | q( \lambda ) |^{2},
    \notag
  \end{align}
  where for the second inequality, we have restricted the choice of \( p \) to
  polynomials
  \begin{align}
            p( \lambda )
    & = 
            \prod_{ j = 1 }^{ i - 1 } 
            \frac{ \widehat{ \lambda }_{j} - \lambda }
                 { \widehat{ \lambda }_{j} - \widehat{ \lambda }_{i} } 
            q( \lambda ), 
    \qquad \lambda \in \mathbb{R}
  \end{align}
  with polynomials \( q \) such that \( \deg q \le \tilde m - \tilde p - i \).
  Finally,
  \begin{align}
        \sum_{ j = i + \tilde p + 1 }^{n} 
        \widehat{ \alpha }_{j}^{2} 
    & = 
        \sum_{ j = i + \tilde p + 1 }^{n} 
        \prod_{ l \in L } 
        \Big( \frac{ \widehat{ \lambda }_{j} - \widehat{ \lambda }_{l} }
                   { \widehat{ \lambda }_{i} - \widehat{ \lambda }_{l} }
        \Big)^{2}
        \langle y_{i}, \widehat{u}_{j} \rangle^{2}
    \le 
        \kappa_{ i, \tilde p }^{2} \| y_{i} \|^{2} 
    = 
        \kappa_{ i, \tilde p }^{2}
  \end{align}
  and 
  \begin{align}
        \min_{ \substack{
               \deg q \le \tilde m - \tilde p - i, \\
               q( \widehat{ \lambda }_{i} ) = 1
             } }
        \max_{ \lambda \in
               [ \widehat{ \lambda }_{n},
                 \widehat{ \lambda }_{ i + \tilde p + 1 } ]
             } 
        | q( \lambda ) |^{2}
    = 
        \frac{1}{ T_{ \tilde m - \tilde p - i }( \gamma_{i} )^{2} }
  \end{align}
  by Theorem 4.8 in \cite{Saad2011LargeEigenvalueProblems}.
  This finished the proof.
\end{proof}

\begin{proof} 
  [\normalfont \textbf{Proof of Proposition \ref{prp_RelativePerturbatonBounds}}
  (Relative perturbation bounds)]
  \label{prf_RelativePerturbationBounds}
  We consider the kernel operator 
  \begin{align}
    T_{k}: L^{2}(G) \to  L^{2}(G), \qquad 
           f        \mapsto \int f(x) k( \cdot, x) \, G( d x )
                          = 
                            \sum_{ j \ge 1 } 
                            \lambda_{j} 
                            \langle f, \phi_{j} \rangle_{ L^{2} } \phi_{j}, 
  \end{align}
  from Equation \eqref{eq_APC_KernelOperator} with
  summable eigenvalues \( ( \lambda_{j} )_{ j \ge 1 } \) and orthonormal basis
  \( ( \phi_{j} )_{ j \ge 1 } \) of \( L^{2}(G) \).
  Under Assumption \hyperref[ass_KarhunenLoeveMoments]
                   {\normalfont \textbf{{\color{blue} (KLMom)}}}, checking the
  second moments yields that the kernel \( k \) is given by
  \begin{align}
    k(x, x) = \sum_{ j = 1 }^{ \infty }
              \lambda_{j} \phi_{j}(x)^{2}
    \quad \text{and} \quad 
    k(x, x') = \sum_{ j = 1 }^{ \infty }
               \lambda_{j} \phi_{j}(x) \phi_{j}( x'), 
    \qquad x, x' \in \mathfrak{X}
  \end{align}
  \( G \)- and \( G^{ \otimes 2 } \)-almost surely respectively.
  By redefining the process \( k \) and the \( ( \phi_{j} )_{ j \ge 1 } \) on
  a nullset, without loss of generality, the equality is true everywhere.
  Following the reasoning in Corollary 12.16 of \cite{Wainwright2019HDStats},
  the RKHS \( \mathbb{H} \) induced by \( k \) is equal to 
  \( \ran T_{k}^{ 1 / 2 } \) and we may consider the restriction
  \begin{align}
    \Sigma: \mathbb{H} \to     \mathbb{H},
            \qquad 
            h  \mapsto \mathbb{E} \langle h, k( \cdot, X_{1} ) 
                                  \rangle_{ \mathbb{H} } k(\cdot, X_{1})
                     = \sum_{ j \ge 1 }
                       \lambda_{j} 
                       \langle h, \varphi_{j} \rangle_{ \mathbb{H} }
                       \varphi_{j},
  \end{align}
  of \( T_{k} \) to \( \mathbb{H} \), which is the covariance operator of 
  \( k( \cdot, X_{1} ) \).
  Note that in the eigensystem of \( \Sigma \), the functions 
  \( \varphi_{j} = \sqrt{ \lambda_{j} } \phi_{j} \), \( j \ge 1 \) form an
  orthonormal basis of \( \mathbb{H} \).
  Its empirical version is given by
  \begin{align}
    \widehat{ \Sigma }: \mathbb{H} \to \mathbb{H}, 
    \qquad 
            h 
    \mapsto 
            \frac{1}{n} \sum_{ i = 1 }^{n} 
            \langle h, k( \cdot, X_{i} ) \rangle_{ \mathbb{H} } k( \cdot, X_{i} ) 
          = 
            \sum_{ j = 1 }^{n} 
            \widehat{ \lambda }_{j} 
            \langle h, \widehat{ \varphi }_{j} \rangle_{ \mathbb{H} } 
            \widehat{ \varphi }_{j},
  \end{align}
  where \( ( \widehat{ \lambda_{j} }, \widehat{ \varphi }_{j} )_{ j \le n } \)
  are empirical versions of the \( ( \lambda_{j}, \varphi_{j} )_{ j \ge 1 } \). 

  Note that the sampling operator and its adjoint
  \begin{align}
    S_{n}: & \mathbb{H} \to \mathbb{R}^{n},
    \qquad 
            h 
    \mapsto 
            ( h( X_{1} ), \dots, h( X_{n} ) )^{ \top },
    \\
    S_{n}^{*}: & \mathbb{R}^{n} \to \mathbb{H}, 
    \qquad 
            u
    \mapsto 
            \frac{1}{n} \sum_{ i = 1 }^{n} u_{i} k( \cdot, X_{i} ) 
    \notag
  \end{align}
  with respect to the empirical dot-product \( \langle \cdot, \cdot \rangle_{n} \)
  satisfy \( S_{n} S_{n}^{*} = n^{-1} K \) and 
  \( S_{n}^{*} S_{n} = \widehat{ \Sigma } \).
  Therefore, \( n^{-1} K \) has the same eigenvalues as 
  \( \widehat{ \Sigma } \).

  In the following, we adopt the notation from
  \cite{JirakWahl2023RelativePerturbationBounds} and set
  \begin{align}
    P_{j}:
    & =
    \varphi_{j} \otimes \varphi_{j}: \mathbb{H} \to \mathbb{H},
    \qquad 
    h \mapsto \langle h, \varphi_{j} \rangle_{ \mathbb{H} } \varphi_{j},
    \qquad j \ge 1,
    \\
    P_{ \ge s }:
    & 
    = \sum_{ j \ge s } P_{j}, \qquad s \ge 1,
    \qquad 
    E: = \widehat{ \Sigma } - \Sigma.
    \notag
  \end{align}

  \textbf{Step 1: Deterministic analysis.}
  Corollary 3 in \cite{JirakWahl2023RelativePerturbationBounds} states that for
  any \( r \ge 1 \) and \( r_{0} \ge r \) such that 
  \( \lambda_{ r_{0} } \le \lambda_{r} / 2 \)
  \begin{align}
    | \widehat{ \lambda }_{r} - \lambda_{r} | 
    & \le 
    \lambda_{r} x,
  \end{align}
  whenever
  \begin{align}
    \label{eq_ConcentrationCondition}
    \frac{ \| P_{s} E P_{t} \|_{ \text{HS} } }
         { \sqrt{ \lambda_{s} \lambda_{t} } }, 
    \frac{ \| P_{s} E P_{ \ge r_{0} } \|_{ \text{HS} } }
         { \sqrt{ \sum_{ j \ge r_{0} } \lambda_{s} \lambda_{j} } }, 
    \frac{ \| P_{ \ge r_{0} } E P_{ \ge r_{0} } \|_{ \text{HS} } }
         { \sqrt{ \sum_{ j, j' \ge r_{0} } \lambda_{j} \lambda_{ j' } } }
    & \le x,
    \qquad \text{ for all } s, t \le r_{0}
    \\
    \label{eq_RelativeRankConditionII}
    \qquad \text{ and } \qquad \qquad \qquad \qquad
    \mathbf{r}_{r}( \Sigma ) & \le \frac{1}{ 6 x }.
  \end{align}
  Note that in our setting, 
  \( \lambda_{ m_{0} } \le \lambda_{m} / 2 \le \lambda_{i} / 2 \)
  for all \( i \le m \).
  Additionally, in our assumptions, the bound on the relative rank is also
  uniform in \( i \le m \).
  It remains to control the event in Equation \eqref{eq_ConcentrationCondition}
  with high probability for \( x = C \sqrt{ n / \log n } \) and 
  \( r_{0} = m_{0} \).
  Our result follows from there.

  \textbf{Step 2: Adapted concentration result.} 
  We prove that for any \( I, J \subset \mathbb{N} \) 
  \begin{align}
            \mathbb{P}
            \Big\{ \frac{ \| P_{I} E P_{J} \|_{ \text{HS} } }
                        { \sqrt{ \sum_{ i \in I, j \in J } 
                                 \lambda_{i} \lambda_{j}
                               }
                        } 
              \ge \frac{ C x }{ \sqrt{n} }
            \Big\} 
    & \le 
            \frac{n}{ ( \sqrt{n} x )^{ p / 2 } } + e^{ - x^{2} },
    \qquad \qquad \text{ for all } x \ge 1,
  \end{align}
  where \( P_{I} \) denotes \( \sum_{ i \in I } P_{i} \).
  Indeed, with \( \delta_{ i, j } \) denoting the Kronecker delta,
  \begin{align}
        n P_{I} E P_{J} 
    & = 
        \sum_{ l = 1 }^{n}
        \sum_{ i \in I, j \in J } 
        ( \langle k( X_{l}, \cdot ), \varphi_{i} \rangle_{ \mathbb{H} } 
          \langle k( X_{l}, \cdot ), \varphi_{j} \rangle_{ \mathbb{H} }
        - 
          \sqrt{ \lambda_{i} \lambda_{j} } \delta_{ i, j }
        ) 
        \varphi_{i} \otimes \varphi_{j}
    = 
        \sum_{ l = 1 }^{n} Z_{l} 
  \end{align}
  with 
  \( 
    Z_{l}: = \sum_{ i \in I, j \in J }
             ( \langle k( X_{l}, \cdot ), \varphi_{i} \rangle_{ \mathbb{H} }
               \langle k( X_{l}, \cdot ), \varphi_{j} \rangle_{ \mathbb{H} } 
             - 
                \sqrt{ \lambda_{i} \lambda_{j} } \delta_{ i, j }
             )
             \varphi_{i} \otimes \varphi_{j} \), \( l \le n 
  \).
  Note that the \( ( Z_{l} )_{ l = 1 }^{n} \) are independent, identically
  distributed and centered, since 
  \( 
    \mathbb{E} ( \langle k( X_{l}, \cdot ), \varphi_{i} \rangle_{ \mathbb{H} }
    \langle k( X_{l}, \cdot ), \\ \varphi_{j} \rangle_{ \mathbb{H} } ) 
  = 
    \langle \Sigma^{ 1 / 2 } \varphi_{i}, \Sigma^{ 1 / 2 } \varphi_{j}
    \rangle_{ \mathbb{H} } = \sqrt{ \lambda_{i} \lambda_{j} } \delta_{i, j}
  \). 
  Using Jensen's inequality, we have
  \begin{align}
            \Big( \mathbb{E} \| \sum_{ l = 1 }^{n} Z_{l} \|_{ \text{HS} } 
            \Big)^{2} 
    & \le 
            \mathbb{E} \| \sum_{ l = 1 }^{n} Z_{l} \|_{ \text{HS} }^{2} 
    = 
            \sum_{ l, l' = 1 }^{n} 
            \mathbb{E} \langle Z_{l}, Z_{ l' } \rangle_{ \text{HS} }
    = 
            n \mathbb{E} \langle Z_{1}, Z_{1} \rangle_{ \text{HS} } 
    \\
    & = 
            n \sum_{ i, i' \in I } \sum_{ j, j' \in J } 
            \mathbb{E} ( \tilde \eta_{ i, j } \tilde \eta_{ i', j' } ) 
            \tr ( 
              ( \varphi_{i} \otimes \varphi_{j} )^{*}
              \varphi_{ i' } \otimes \varphi_{ j' }
            ),
    \notag
  \end{align}
  where 
  \( 
    \tilde \eta_{ i, j } 
  =
    \langle k( X_{1}, \cdot ), \varphi_{i} \rangle_{ \mathbb{H} } \langle k(
    X_{1}, \cdot ), \varphi_{j} \rangle_{ \mathbb{H} } - \sqrt{ \lambda_{i}
    \lambda_{j} } \delta_{ i, j }
  \),
  \( i \in I \), \(  j \in J \).
  With 
  \( 
      \tr ( ( \varphi_{i}  \otimes \varphi_{j} )^{*}
              \varphi_{i'} \otimes \varphi_{j'}
          ) 
    = \delta_{ i, i' } \delta_{ j, j' } 
  \)
  and Assumption \hyperref[ass_KarhunenLoeveMoments]
                 {\normalfont \textbf{{\color{blue} (KLMom)}}},
  we obtain that
  \begin{align}
    \label{eq_FukNagaevConditionI}
            \Big( \mathbb{E} \| \sum_{ l = 1 }^{n} Z_{l} \|_{ \text{HS} } 
            \Big)^{2} 
    & \le 
            n \sum_{ i \in I, j \in J } \mathbb{E} \tilde \eta_{ i j }^{2}  
    \le 
            C n \sum_{ i \in I, j \in J } \lambda_{i} \lambda_{j}.
  \end{align}

  By analogous arguments,
  \begin{align}
    \label{eq_FukNagaevConditionII}
          \mathbb{E} \| Z_{1} \|_{ \text{HS} }^{ p / 2 }
    & = 
          \mathbb{E} \langle Z_{1}, Z_{1} \rangle_{ \text{HS} }^{ p / 4 }
    = 
          \mathbb{E} \Big( 
            \sum_{ i \in I, j \in J } \tilde \eta_{ i, j }^{2} 
          \Big)^{ p / 4 }
    \\
    & \le 
          C \Big[ 
            \Big( \sum_{ i \in I, j \in J } \lambda_{i} \lambda_{j} \Big)^{p/4}
          + 
            \mathbb{E} 
            \Big(
              \sum_{ i \in I, j \in J } 
              \langle k( X_{1}, \cdot ), \varphi_{i} \rangle_{ \mathbb{H} }^{2}
              \langle k( X_{1}, \cdot ), \varphi_{j} \rangle_{ \mathbb{H} }^{2}
            \Big)^{ p / 4 }
          \Big] 
    \notag
    \\
    & = 
          C \Big[ 
            \Big( \sum_{ i \in I, j \in J } \lambda_{i} \lambda_{j} \Big)^{p/4}
          + 
            \mathbb{E} \Big( 
              \Big( \sum_{ i \in I }
                    \langle k( X_{1}, \cdot ), \varphi_{i} 
                    \rangle_{ \mathbb{H} }^{2} 
              \Big)^{p/4}
              \Big( \sum_{ j \in J }
                    \langle k( X_{1}, \cdot ), \varphi_{j} 
                    \rangle_{ \mathbb{H} }^{2} 
              \Big)^{p/4}
            \Big)
          \Big] 
    \notag
    \\
    & \le
          C \Big[
            \Big( \sum_{ i \in I, j \in J } \lambda_{i} \lambda_{j} \Big)^{p/4}
          + 
            \| \sum_{ i \in I } 
               \langle k( X_{1}, \cdot ), \varphi_{i} \rangle_{ \mathbb{H} }^{2} 
            \|_{p/2}^{p/4} 
            \| \sum_{ i \in I }
               \langle k( X_{1}, \cdot ), \varphi_{j} \rangle_{ \mathbb{H} }^{2} 
            \|_{ p / 2 }^{ p / 4 } 
          \Big] 
    \notag
    \\
    & \le 
          C \Big( \sum_{ i \in I, j \in J } \lambda_{i} \lambda_{j} \Big)^{p/4}.
    \notag
  \end{align}
  
  Finally, for any Hilbert-Schmidt operator \( f \) with 
  \( \| f \|_{ \text{HS} } \le 1 \), 
  \begin{align}
    \label{eq_FukNagaevConditionIII}
            \mathbb{E} 
            \sum_{ l = 1 }^{n}
            \langle f, Z_{l} \rangle_{ \text{HS} }^{2} 
    & \le 
            n \mathbb{E} \| Z_{1} \|_{ \text{HS} }^{2} 
    \le 
            C n \sum_{ i \in I, j \in J } \lambda_{i} \lambda_{j}.
  \end{align}

  Due to the conditions in Equation \eqref{eq_FukNagaevConditionI},
  \eqref{eq_FukNagaevConditionII}, \eqref{eq_FukNagaevConditionIII}, we can
  apply Theorem 3.1 in \cite{EinmahlLi2007LILBehaviourInBanachSpaces}, which is
  a version of Fuk-Nagaev inequality for Hilbert space valued random variables
  and obtain that for any \( x' > 0 \), 
  \begin{align}
            \| \sum_{ l = 1 }^{n} Z_{l} \|_{ \text{HS} } 
    & \le 
            C \sqrt{ n \sum_{ i \in I, j \in J } \lambda_{i} \lambda_{j} } + x'
  \end{align}
  with probability at least
  \begin{align}
    1 - \exp \Big( 
          \frac{ - c x'^{2} }
               { n \sum_{ i \in I, j \in J } \lambda_{i} \lambda_{j} }
        \Big) 
      - 
        C n \Big( 
          \sum_{ i \in I, j \in J } \lambda_{i} \lambda_{j} 
        \Big)^{ p / 4 } 
        x'^{ - p / 2 }.
  \end{align}
  Setting 
  \( x = c x' / \sqrt{ n \sum_{ i \in I, j \in J } \lambda_{i} \lambda_{j} } \),
  this yields
  \begin{align}
            \| \sum_{ l = 1 }^{n} Z_{l} \|_{ \text{HS} } 
    & \le 
            C \Big( 
              \sqrt{ n \sum_{ i \in I, j \in J } \lambda_{i} \lambda_{j} }
            + 
              x \sqrt{ n \sum_{ i \in I, j \in J } \lambda_{i} \lambda_{j} }
            \Big)
  \end{align}
  with probability at least 
  \( 1 - e^{ - x^{2} } - C n ( \sqrt{n} x )^{ - p / 2 } \).
  The result now follows by deviding the above by \( n \) and only considering
  \( x \ge 1 \).

  The eigenvector result now follows from Step 2 using a union bound over
  \( s, t \le m_{0} \) and setting \( x = C \sqrt{ n / \log n } \).
\end{proof}

\end{appendices}

\textbf{Funding.}
Co-funded by the European Union (ERC, BigBayesUQ, project number: 101041064).
Views and opinions expressed are however those of the author(s) only and do not
necessarily reflect those of the European Union or the European Research
Council.
Neither the European Union nor the granting authority can be held responsible
for them.

\printbibliography

\end{document}